\newcommand{\bbm}{\begin{bmatrix}}
\newcommand{\ebm}{\end{bmatrix}}
\newcommand{\mbf}{\mathbf}
\newcommand{\mbs}[1]{{\boldsymbol{#1}}}
\def\om{\omega}
\newcommand{\beq}{\begin{equation}}
\newcommand{\eeq}{\end{equation}}
\newcommand{\bdis}{\begin{displaymath}}
\newcommand{\edis}{\end{displaymath}}
\newcommand{\beqn}[1]{\begin{subequations}\label{eq:#1}\begin{eqnarray}}
\newcommand{\eeqn}{\end{eqnarray}\end{subequations}}
\newcommand{\wdg}{\wedge}
\newcommand{\Wdg}{\curlywedge}
\newcommand{\Tbig}{\mbs{\mathcal{T}}}
\newcommand{\Tsmall}{\mbf{T}}
\newcommand{\Jbig}{\mbs{\mathcal{J}}}
\newcommand{\bg}{\gamma}
\newcommand{\Bg}{\mbs{\gamma}}
\newcommand{\BG}{\mbs{\Gamma}}
\newtheorem{lemma}{Lemma}
\newtheorem{theorem}{Theorem}
\newtheorem{corollary}{Corollary}
\newtheorem{remark}{Remark}
\title{Integral Forms in Matrix Lie Groups}
\author{
 {\normalfont Timothy D. Barfoot} \\
 Institute for Aerospace Studies / Robotics Institute \\
 University of Toronto \\
 \texttt{tim.barfoot@utoronto.ca} 
}
\date{}
\begin{document}

\maketitle
\title{Integral Forms in Matrix Lie Groups}

\begin{abstract}
Matrix Lie groups provide a language for describing motion in such fields as robotics, computer vision, and graphics. 
When using these tools, we are often faced with turning infinite-series expressions into more compact finite series (e.g., the Euler-Rodrigues formula), which can sometimes be onerous.  In this paper, we identify some useful integral forms in matrix Lie group expressions that offer a more streamlined pathway for computing compact analytic results.   Moreover, we present some recursive structures in these integral forms that show many of these expressions are interrelated.  Key to our approach is that we are able to apply the minimal polynomial for a Lie algebra quite early in the process to keep expressions compact throughout the derivations.  With the series approach, the minimal polynomial is usually applied at the end, making it hard to recognize common analytic expressions in the result.  We show that our integral method can reproduce several series-derived results from the literature.  
\end{abstract}

\section{Introduction}

Matrix Lie groups are widely used in several fields including robotics, computer vision, and graphics (see e.g., \citep{chirikjian01,stillwell08,absil09,chirikjian09,chirikjian16,sola18,gallierDifferentialGeometryLie2020,boumal23}).  Within robotics, they are employed within modelling \citep{deleuterio85,park95,lynch17}, state estimation \citep{wang06,wang08,bonnabel08,blanco10,wolfe11,long12,forster15,dellaert17,bonnabel17,mahony21,barfoot_ser24}, and control theory \citep{murray94,bullo95,sastry99}.  They provide the mathematical tools to describe important concepts such as rotations and rigid-body motions, to name a few.  For example, rotations in the plane are captured by the group, $SO(2)$, which is the set of $2 \times 2$ matrices of the form
\begin{equation}\label{eq:so2mat}
\mbf{C}(\phi) = \bbm \cos\phi & -\sin\phi \\ \sin\phi & \cos\phi \ebm,
\end{equation}
where $\phi$ is the rotation angle.  Other groups are more complex in their structure.  But where did the structure of~\eqref{eq:so2mat} come from?  One common way to define a group is begin with its Lie algebra and use the exponential map to produce all the elements of the group.  For $SO(2)$, the Lie algebra comprises all the $2 \times 2$ matrices of the form
\begin{equation}
\phi^\wdg = \phi \, \mbf{S},
\end{equation}
where $\mbf{S} = \bbm 0 & -1 \\ 1 & 0 \ebm$ is the canonical skew-symmetric matrix.  Then by using the exponential map we have that
\begin{equation}
\mbf{C}(\phi) = \sum_{m=0}^\infty \frac{1}{m!} \phi^{\wdg^m} = \mbf{1} + \phi \, \mbf{S} + \frac{1}{2}\phi^2 \mbf{S}^2 + \cdots.
\end{equation}
To arrive at the compact expression in~\eqref{eq:so2mat}, we must recognize that $\phi^\wdg$ must satisfy its own characteristic equation (i.e., the Cayley-Hamilton theorem) which means that $\phi^{\wdg^2} + \phi^2 \mbf{1} = \mbf{0}$.  Using this to reduce the powers of $\mbf{S}$ we find that
\begin{equation}
\mbf{C}(\phi) = \sum_{m=0}^\infty \frac{1}{m!} \phi^{\wdg^m} = \underbrace{\left( 1 - \frac{1}{2!} \phi^2 + \frac{1}{4!} \phi^4 + \cdots \right)}_{\cos\phi} \mbf{1} + \underbrace{\left( \phi - \frac{1}{3!} \phi^3 + \frac{1}{5!} \phi^5 + \cdots \right)}_{\sin\phi} \mbf{S} = \bbm \cos\phi & -\sin\phi \\ \sin\phi & \cos\phi \ebm,
\end{equation}
where $\mbf{1}$ is the identity matrix.
In general, each Lie algebra will have a {\em minimal polynomial}, a factor of its characteristic equation, that can be used to reduce the infinite series resulting from the exponential map to a finite number of terms with nonlinear coefficients.  The more compact expression is useful in application due to the appearance of common trigonometric functions and less computational effort required for its calculation.

As we will see later in the paper, each Lie group has a sequence of important quantities, with the usual group elements representing the first step in the sequence.  For $SO(2)$, the sequence has the form $\BG_\ell(\phi^\wdg)$ where $\BG_0(\phi^\wdg) = \mbf{C}(\phi)$.  The later elements in the sequence are typically expressed using infinite series similar to the one resulting from the exponential map.  The below commutative diagram attempts to capture the main idea of this paper:
\begin{equation}\label{eq:cd}
    \begin{tikzcd}[column sep=9 em]
        \BG_0(\phi^\wdg) = \sum_{m=0}^\infty \frac{1}{m!} \phi^{\wdg^m} \arrow[r, "\mbox{minimal polynomial}"] \arrow[d, "\mbox{integrate}"', dashed]  & \BG_0(\phi^\wdg) = \cos\phi \mbf{1} + \sin\phi \mbf{S} \arrow[d, "\mbox{integrate}"]\\
        \BG_1(\phi^\wdg) = \sum_{m=0}^\infty \frac{1}{(m+1)!} \phi^{\wdg^m} \arrow[r, "\mbox{minimal polynomial}"', dashed] \arrow[d, "\mbox{integrate}"', dashed] & \BG_1(\phi^\wdg) = \frac{\sin\phi}{\phi} \mbf{1} + \frac{1-\cos\phi}{\phi} \mbf{S}  \arrow[d, "\mbox{integrate}"] \\ 
         \BG_2(\phi^\wdg) = \sum_{m=0}^\infty \frac{1}{(m+2)!} \phi^{\wdg^m} \arrow[r, "\mbox{minimal polynomial}"', dashed] \arrow[d, "\mbox{integrate}"', dashed] & \BG_2(\phi^\wdg) = \frac{1-\cos\phi}{\phi^2}  \mbf{1} + \frac{\phi-\sin\phi}{\phi^2} \mbf{S}  \arrow[d, "\mbox{integrate}"] \\
        \mbox{} & \mbox{}
    \end{tikzcd}
\end{equation}
Our first contribution is to show that the elements in the sequence, $\BG_\ell(\phi^\wdg)$, can be expressed through a recursive integral form, meaning once we know $\BG_0(\phi^\wdg)$, for example, we can integrate for $\BG_1(\phi^\wdg)$.  Our second contribution is to realize that rather than following the dashed arrows in the above diagram, it can be more efficient to follow the solid ones.  In other words, we can apply the minimal polynomial at the $\ell = 0$ level and then simply integrate from there to produce higher-order elements in the sequence.  Naturally, we keep our development general so that it can be applied to any matrix Lie group we like, not just $SO(2)$.

The paper is organized as follows.  Section~\ref{sec:math} reviews some basics of two of the most common matrix Lie groups, $SO(3)$ and $SE(3)$; we define these groups and introduce their Jacobians and minimal polynomials.  Section~\ref{sec:bb} introduces our sequences of {\em building blocks} including $\BG_\ell(\cdot)$ but also several others of use.  We also establish the recursive integral relationship for these sequences as well as a few relationships between different types of blocks.  Section~\ref{sec:substructure} studies several specific Lie groups and delves into the blockwise substructure of each, showing that we can use smaller building blocks to construct the overall group matrices.  Finally, Section~\ref{sec:derivatives} provides results for computing temporal and partial derivatives of some of our building blocks.  Whenever possible throughout the paper, we provide example demonstrations of how our integral approach can be used to reproduce important results from the literature.  We close with some brief conclusions.

\section{Mathematical Background}\label{sec:math}

We briefly review some key concepts and notation \citep{barfoot_ser24} to prepare for what follows.  We introduce two common matrix Lie groups, $SO(3)$ (representing rotations) and $SE(3)$ (representing poses), to get things started in this section.  Readers already familiar with these groups can safely skip to the next section.  Later in the paper, we will also provide some connection to other groups used frequently in robotics.  

\subsection{Common Matrix Lie Groups}

The {\em special orthogonal group}, representing rotations, is the set of valid rotation matrices:
\begin{equation}
\label{eq:SO3}
SO(3) = \left\{  \mbf{C} \in \mathbb{R}^{3\times3} \; | \; \mbf{C} \mbf{C}^T = \mbf{1}, \mbox{det} \,\mbf{C} = 1 \right\},
\end{equation}
where $\mbf{1}$ is the identity matrix.  
It is common to map a vector, $\mbs{\phi} \in \mathbb{R}^3$, to a rotation matrix, $\mbf{C}$, through the matrix exponential,
\begin{equation}
\mbf{C}(\mbs{\phi}) = \exp\left( \mbs{\phi}^\wdg \right) = \sum_{m=0}^\infty \frac{1}{m!} \mbs{\phi}^{\wdg^m},
\end{equation}
where $(\cdot)^\wdg$ is the skew-symmetric operator,
\begin{equation}
\mbs{\phi}^\wdg = \bbm \phi_1 \\ \phi_2 \\ \phi_3 \ebm^\wdg = \bbm 0 & -\phi_3 & \phi_2 \\ \phi_3 & 0 & -\phi_1 \\ -\phi_2 & \phi_1 & 0 \ebm.
\end{equation}
The mapping is surjective-only, meaning every $\mbf{C}(\mbs{\phi})$ can be produced by many different values for $\mbs{\phi}$.  This group is {\em self-adjoint}, $\mbox{Ad}(\mbf{C}(\mbs{\phi})) = \mbf{C}(\mbs{\phi})$, which implies that we can write
\begin{equation}
\Bigl( \underbrace{\mbf{C}(\mbs{\phi}_1)}_{\rm adjoint} \, \mbs{\phi}_2 \Bigr)^\wdg =  \mbf{C}(\mbs{\phi}_1) \, \mbs{\phi}_2^\wdg \,  \mbf{C}(\mbs{\phi}_1)^T.
\end{equation}

The {\em special Euclidean group}, representing \index{poses} poses (i.e., translation and rotation), is the set of valid  transformation matrices:
\begin{equation}
\label{eq:se3}
SE(3) = \left\{  \mbf{T} = \bbm \mbf{C} & \mbf{r} \\ \;\,\mbf{0}^T & 1 \ebm \in \mathbb{R}^{4\times4} \; \Biggl| \; \mbf{C} \in SO(3), \, \mbf{r} \in \mathbb{R}^3 \right\}.
\end{equation}
It is again common to map a vector, $\mbs{\xi} \in \mathbb{R}^6$, to a transformation matrix, $\mbf{T}$, through the matrix exponential,
\begin{equation}
\mbf{T}(\mbs{\xi}) = \bbm \mbf{C}(\mbs{\phi}) & \mbf{J}(\mbs{\phi}) \mbs{\rho} \\ \mbf{0}^T & 1 \ebm = \exp\left( \mbs{\xi}^\wdg \right) = \sum_{m=0}^\infty \frac{1}{m!} \mbs{\xi}^{\wdg^m},
\end{equation}
where
\begin{equation}
\mbs{\xi}^\wdg = \bbm \mbs{\rho} \\ \mbs{\phi} \ebm^\wdg = \bbm \mbs{\phi}^\wdg & \mbs{\rho} \\ \mbf{0}^T & 0 \ebm.
\end{equation}
As is common practice \citep{barfoot_tro14}, we have broken the pose vector, $\mbs{\xi}$, into a translational component, $\mbs{\rho}$, and a rotational component, $\mbs{\phi}$.  The matrix $\mbf{J}(\mbs{\phi})$ is the left Jacobian of $SO(3)$, to be described in detail in the next section.
The mapping is also surjective-only, meaning every $\mbf{T}(\mbs{\xi})$ can be produced by many different values for $\mbs{\xi}$.

Finally, the {\em adjoint} of pose is given by
\begin{equation}\label{eq:se3adjointmap1}
\Tbig(\mbs{\xi}) = \mbox{Ad}\left( \mbf{T} \right) = \bbm \mbf{C}(\mbs{\phi}) & \left( \mbf{J}(\mbs{\phi}) \mbs{\rho} \right)^\wdg \mbf{C}(\mbs{\phi}) \\ \mbf{0} & \mbf{C}(\mbs{\phi}) \ebm,
\end{equation}
which is now $6 \times 6$.  The adjoint provides the relation
\begin{equation}
\left( \Tbig(\mbs{\xi}_1)\,  \mbs{\xi}_2 \right)^\wdg = \Tsmall(\mbs{\xi}_1) \, \mbs{\xi}_2^\wdg \, \Tsmall(\mbs{\xi}_1)^{-1}.
\end{equation}
We will refer to the set of adjoints as $\mbox{Ad}(SE(3))$.  We can map a vector, $\mbs{\xi} \in \mathbb{R}^6$, to an adjoint transformation matrix again through the matrix exponential map:
\begin{equation}
\Tbig(\mbs{\xi}) = \exp\left( \mbs{\xi}^\Wdg \right) = \sum_{m=0}^\infty \frac{1}{m!} \mbs{\xi}^{\Wdg^m},
\end{equation}
where
\begin{equation}
\mbs{\xi}^\Wdg = \bbm \mbs{\rho} \\ \mbs{\phi} \ebm^\Wdg = \bbm \mbs{\phi}^\wdg & \mbs{\rho}^\wdg \\ \mbf{0} & \mbs{\phi}^\wdg \ebm.
\end{equation}
The mapping is again surjective-only, meaning every $\Tbig(\mbs{\xi})$ can be produced by many different values for $\mbs{\xi}$.

\subsection{Jacobians}

Jacobians are required frequently when working with matrix Lie groups.  For example, for $SO(3)$ the left Jacobian expression is 
\begin{equation}\label{eq:expSO3jac}
\mbf{J}(\mbs{\phi}) = \sum_{m=0}^\infty \frac{1}{(m+1)!} \mbs{\phi}^{\wdg^m}.
\end{equation}
This Jacobian is useful in rotational kinematics, which can thus be written in one of two ways:
\begin{equation}
\dot{\mbf{C}}(\mbs{\phi}) = \mbs{\om}^\wdg \mbf{C}(\mbs{\phi}) \quad \Leftrightarrow \quad \dot{\mbs{\phi}} = \mbf{J}(\mbs{\phi})^{-1} \mbs{\om},
\end{equation}
where $\mbs{\om}$ is the angular velocity.  Considering an infinitesimal increment of time, the same Jacobian thus allows us to approximate the compounding of two exponentiated vectors $\mbs{\phi}_1, \mbs{\phi}_2 \in \mathbb{R}^3$ as
\begin{equation}
\exp( \mbs{\phi}_1^\wdg ) \, \exp( \mbs{\phi}_2^\wdg ) \approx \exp\left( \left( \mbf{J}(\mbs{\phi}_2)^{-1} \mbs{\phi}_1 + \mbs{\phi}_2\right)^\wdg \right),
\end{equation} 
where $\mbs{\phi}_1$ is assumed to be `small' \citep{klarsfeld89}.

A similar situation exists for $SE(3)$ poses, where the left Jacobian is  
\begin{equation}
\mbs{\mathcal{J}}(\mbs{\xi}) = \sum_{m=0}^\infty \frac{1}{(m+1)!} \mbs{\xi}^{\Wdg^m}.
\end{equation}
The kinematics can be written in one of two ways:
\begin{equation}
\dot{\mbf{T}}(\mbs{\xi}) = \mbs{\varpi}^\wdg \mbf{T}(\mbs{\xi}) \quad \Leftrightarrow \quad \dot{\mbs{\xi}} = \mbs{\mathcal{J}}(\mbs{\xi})^{-1} \mbs{\varpi},
\end{equation}
where $\mbs{\varpi}$ is the generalized velocity or `twist' and $\mbs{\mathcal{J}}(\mbs{\xi})$ is the Jacobian.  Again, this allows us to approximate the compounding of two exponentiated vectors $\mbs{\xi}_1, \mbs{\xi}_2 \in \mathbb{R}^6$ as
\begin{equation}
\exp( \mbs{\xi}_1^\wdg ) \, \exp( \mbs{\xi}_2^\wdg ) \approx \exp\left( \left( \mbs{\mathcal{J}}(\mbs{\xi}_2)^{-1} \mbs{\xi}_1 + \mbs{\xi}_2\right)^\wdg \right),
\end{equation} 
where $\mbs{\xi}_1$ is assumed to be `small'.  It is also known that the $SE(3)$ Jacobian can be written in terms of the $SO(3)$ Jacobian as follows:
\begin{equation}
\mbs{\mathcal{J}}(\mbs{\xi}) = \bbm \mbf{J}(\mbs{\phi}) & \mbf{Q}(\mbs{\phi}, \mbs{\rho}) \\ \mbf{0} & \mbf{J}(\mbs{\phi}) \ebm,
\end{equation}
where the off-diagonal block is \citep{barfoot_tro14}
\begin{equation}\label{eq:Q}
\mbf{Q}(\mbs{\phi}, \mbs{\rho}) = \sum_{m=0}^\infty \sum_{n=0}^\infty \frac{1}{(m+n+2)!} \mbs{\phi}^{\wdg^m} \mbs{\rho}^\wdg \mbs{\phi}^{\wdg^n}. 
\end{equation}
We will have much more to say about $\mbf{Q}(\mbs{\phi}, \mbs{\rho})$ a bit later.

\subsection{Cayley-Hamilton and Minimal Polynomials}

We will have occasion to work with series expressions of rotation and transformation matrices.  Each group has associated with it an identity that can be used to limit the number of terms in such series, related to the Cayley-Hamilton theorem.

For rotations, we have that the characteristic equation of $\mbs{\phi}^\wdg$ is 
\begin{equation}
\left|  \lambda \mbf{1} - \mbs{\phi}^\wdg \right| = \lambda ( \lambda^2 + \underbrace{\phi_1^2 + \phi_2^2 + \phi_3^2}_{\phi^2} ) = \lambda^3 + \phi^2 \lambda = 0,
\end{equation}
where $\lambda$ are the eigenvalues of $\mbs{\phi}^\wdg$ and $\phi = || \mbs{\phi} ||$.
From the Cayley-Hamilton theorem we can claim that
\begin{equation} \label{eq:ch1}
\mbs{\phi}^{\wdg^3} + \phi^2 \mbs{\phi}^\wdg = \mbf{0},
\end{equation}
since $\mbs{\phi}^\wdg$ must satisfy its own characteristic equation.  This means that we can simplify our infinite-series expressions into finite series with some trigonometric coefficients.  For example,
\beqn{CJcompact}
\mbf{C}(\mbs{\phi}) & = & \sum_{n=0}^\infty \frac{1}{n!} \mbs{\phi}^{\wdg^n} =  \underbrace{1}_{c_0(\phi)} \mbf{1} +  \underbrace{\frac{\sin\phi}{\phi}}_{c_1(\phi)}  \mbs{\phi}^\wdg + \underbrace{\frac{1-\cos\phi}{\phi^2}}_{c_2(\phi)} \mbs{\phi}^{\wdg^2}, \label{eq:Ccompact}\\
\mbf{J}(\mbs{\phi}) & = &  \sum_{n=0}^\infty \frac{1}{(n+1)!} \mbs{\phi}^{\wdg^n} =  \underbrace{1}_{j_0(\phi)} \mbf{1} + \underbrace{\frac{1-\cos\phi}{\phi^2}}_{j_1(\phi)} \mbs{\phi}^\wdg + \underbrace{\frac{\phi - \sin\phi}{\phi^3}}_{j_2(\phi)} \mbs{\phi}^{\wdg^2}, \label{eq:Jcompact}
\eeqn
are both well-known expressions, the first being the famous Euler-Rodrigues \citep{euler1770,rodrigues1840} formula for a rotation matrix.
For poses, we have that the characteristic equation of $\mbs{\xi}^\wdg$ is
\begin{equation}\label{eq:se3char}
\left|  \lambda \mbf{1} - \mbs{\xi}^\wdg \right| = \left| \begin{matrix} \lambda \mbf{1} - \mbs{\phi}^\wdg & -\mbs{\rho} \\ \mbf{0}^T & \lambda \end{matrix} \right| = \lambda^4 + \phi^2 \lambda^2 = 0.
\end{equation}
Again, from the Cayley-Hamilton theorem we can claim that
\begin{equation}\label{eq:ch2}
\mbs{\xi}^{\wdg^4} + \phi^2 \mbs{\xi}^{\wdg^2} = \mbf{0},
\end{equation}
since $\mbs{\xi}^\wdg$ must satisfy its own characteristic equation.

Finally, for adjoint poses we have that the characteristic equation of $\mbs{\xi}^\Wdg$ is
\begin{equation}\label{eq:adse3char}
\left|  \lambda \mbf{1} - \mbs{\xi}^\Wdg \right| = \left| \begin{matrix} \lambda \mbf{1} - \mbs{\phi}^\wdg & -\mbs{\rho}^\wdg \\ \mbf{0} & \lambda \mbf{1} - \mbs{\phi}^\wdg \end{matrix} \right| =  \left(  \lambda^3 + \phi^2 \lambda  \right)^2 =  \lambda^6 + 2 \phi^2 \lambda^4 + \phi^4 \lambda^2 = 0.
\end{equation}
We could again employ the Cayley-Hamilton theorem to create a similar identity to the other two cases, but it turns out that the {\em minimal polynomial} of $\mbs{\xi}^\Wdg$ is actually \citep{barfoot_ser24, deleuterio_rspa22},
\begin{equation}\label{eq:adid}
\mbs{\xi}^{\Wdg^5} + 2\phi^2 \mbs{\xi}^{\Wdg^3} + \phi^4 \mbs{\xi}^\Wdg = \mbf{0},
\end{equation}
which is one order lower than the characteristic equation in this case.  This is because both the algebraic and geometric multiplicities of eigenvalue $\lambda=0$ in~\eqref{eq:adse3char} are two (two Jordan blocks of size one), allowing us to drop one copy when constructing the minimal polynomial; in~\eqref{eq:se3char} the algebraic multiplicity of $\lambda = 0$ is two, while the geometric multiplicity is one (one Jordan block of size two).

\section{Building Blocks}\label{sec:bb}

This section introduces a collection of building-block functions and then discusses some connections between them.  The main contribution of the paper is to show that the substructures of many common matrix Lie groups can be constructed from these building blocks and moreover that we can employ an integral relationship to make some calculations quite efficient.  

Below are the building-block functions.  Each building block is actually a sequence indexed on $\ell=0,1,2,\ldots$:
\begin{subequations}\label{eq:bb}
\begin{gather}
\bg_\ell(x) = \sum_{m=0}^\infty \frac{1}{(\ell+m)!}x^m,  \;\; \bg_\ell(x, y) = \sum_{m=0}^\infty \frac{1}{(\ell + m + 1)!} x^m \, y,  \;\;
\bg_\ell(x, y, z) = \sum_{m=0}^\infty \sum_{n=0}^\infty \frac{1}{(\ell+m+n+1)!} x^m \, y \, z^n, \\  
\bg_\ell(x, \tau) = \sum_{m=0}^\infty \frac{m + 1}{(\ell + m + 1)!} x^m \, \tau, \quad \bg_\ell(x, y, \tau) = \sum_{m=0}^\infty \frac{1}{(\ell + m + 2)!} x^m \, y \, \tau, \\
\bg_\ell(x,y,z,\tau) = \sum_{m=0}^\infty \sum_{n=0}^\infty \frac{m+1}{(\ell+m+n+2)} x^m \,y \,z^n \, \tau .
\end{gather}
\end{subequations}
The first of these was presented previously by \citet{bloeschStateEstimationLegged2013,fornasierEquivariantSymmetriesAided2024}.  Notationally, we consider these functions to be templates where the variables $x, y, z$ are overloaded for all input types (i.e., scalar, column, matrix) that are compatible with the specific order of variables used (i.e., we do not a priori assume $x$, $y$, $z$ commute). The variable $\tau$ is always a scalar and will not be substituted.  Moreover, we will substitute $\bg$, $\Bg$, or $\BG$ for the function label depending on whether the output is a scalar, column, or matrix, respectively.   We will see that the first row of building blocks are the most common with the others required for one specific group, $SGal(3)$, to be discussed later in the paper.  Several example usages of our building-block templates are as follows:
\begin{gather}
\BG_\ell(\mbs{\phi}^\wdg) = \sum_{m=0}^\infty \frac{1}{(\ell+m)!}\mbs{\phi}^{\wdg^m}, \quad \Bg_\ell(\mbs{\phi}^\wdg, \mbs{\rho}) = \sum_{m=0}^\infty \frac{1}{(\ell + m + 1)!} \mbs{\phi}^{\wdg^m} \, \mbs{\rho},  \\
\BG_\ell(\mbs{\xi}^\Wdg) = \sum_{m=0}^\infty \frac{1}{(\ell+m)!}\mbs{\xi}^{\Wdg^m}, \quad \BG_\ell(\mbs{\phi}^\wdg, \mbs{\rho}^\wdg, \mbs{\phi}^\wdg) = \sum_{m=0}^\infty \sum_{n=0}^\infty \frac{1}{(\ell+m+n+1)!} \mbs{\phi}^{\wdg^m} \, \mbs{\rho}^\wdg \, \mbs{\phi}^{\wdg^n}, \\ \bg_\ell(-\lambda) = \sum_{m=0}^\infty \frac{1}{(\ell+m)!}(-\lambda)^m, \quad \BG_\ell(\mbs{\phi}^\wdg, \mbs{\rho}^\wdg, \mbs{\phi}^\wdg, \tau) = \sum_{m=0}^\infty \sum_{n=0}^\infty \frac{m+1}{(\ell+m+n+2)} \mbs{\phi}^{\wdg^m} \, \mbs{\rho}^\wdg \, \mbs{\phi}^{\wdg^n} \, \tau, \\
\Bg_\ell(\mbs{\phi}^\wdg, \mbs{\rho}, -\lambda) = \sum_{m=0}^\infty \sum_{n=0}^\infty \frac{1}{(\ell+m+n+1)!} \mbs{\phi}^{\wdg^m} \, \mbs{\rho} \, (-\lambda)^n.
\end{gather}
There is a remark worth making about connections between some of the building blocks:
\begin{remark}\label{thm:remark1}
The following relationships hold between building blocks in~\eqref{eq:bb}:
\begin{equation}
\bg_\ell(x,y) = \bg_{\ell+1}(x) \, y, \quad \bg_\ell(x,y,\tau) = \bg_{\ell+1}(x,y) \, \tau = \bg_{\ell + 2}(x) \, y \, \tau.
\end{equation}
\end{remark}

We can see right away some quantities that we introduced in the background section are equivalent to some of these building blocks.  For example, for $SO(3)$, we have
\begin{equation}
\mbf{C}(\mbs{\phi}) = \BG_0(\mbs{\phi}^\wdg), \quad \mbf{J}(\mbs{\phi}) = \BG_1(\mbs{\phi}^\wdg).
\end{equation}
Moreover, $\mbf{N}(\mbs{\phi}) = \BG_2(\mbs{\phi}^\wdg)$ shows up in inertial navigation equations \citep{bloeschStateEstimationLegged2013,fornasierEquivariantSymmetriesAided2024, barfoot_ser24}.  Later we will see that these building blocks are interrelated and can be used to define the substructure of several common matrix Lie groups.

Now, it is also fairly well known that there is an integral relationship between $\mbf{C}(\mbs{\phi})$ and $\mbf{J}(\mbs{\phi})$,
\begin{equation}
\mbf{J}(\mbs{\phi}) = \int_0^1 \mbf{C}(\alpha \mbs{\phi}) \, d\alpha,
\end{equation}
which dates back at least to \citet[p.21]{parkOptimalKinematicDesign1991} and can be found in later works as well \citep{barfoot_tro14, barfoot_ser24}. Our first contribution of the current work is to notice that this integral relationship can be generalized to our entire collection of building blocks.

\begin{lemma}\label{thm:bbrecursive}
Given the building-block functions in~\eqref{eq:bb}, we have the following recursive integral relationships:
\begin{subequations}
\begin{gather}
\bg_{\ell+1}(x) = \int_0^1 \alpha^\ell \, \bg_\ell(\alpha x) \, d\alpha, \quad \bg_{\ell+1}(x, y) = \int_0^1 \alpha^\ell \bg_\ell(\alpha x, \alpha y) \, d\alpha, \;\; \bg_{\ell+1}(x,y,z) = \int_0^1 \alpha^\ell \bg_\ell(\alpha x,\alpha y, \alpha z) \, d\alpha, \\
\bg_{\ell+1}(x,\tau) = \int_0^1 \alpha^\ell \bg_\ell(\alpha x, \alpha \tau) \, d\alpha, \quad \bg_{\ell+1}(x,y,\tau) = \int_0^1 \alpha^\ell \bg_\ell(\alpha x, \alpha y, \alpha \tau) \, d\alpha, \\
\bg_{\ell+1}(x, y, z, \tau) = \int_0^1 \alpha^\ell \bg_\ell(\alpha x,\alpha y, \alpha z, \alpha \tau) \, d\alpha.
\end{gather}
\end{subequations}
\end{lemma}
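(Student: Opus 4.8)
The plan is to verify each identity by substituting the defining power series of the building block on the right-hand side, interchanging the (finite-coefficient) sum with the integral, and evaluating the elementary integral $\int_0^1 \alpha^k \, d\alpha = 1/(k+1)$ term by term. All six identities have the same shape, so I would prove the first one in detail and then indicate that the remaining five follow by the identical manipulation, tracking only the bookkeeping of which integer exponent of $\alpha$ appears.

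Concretely, for the first identity I would start from $\bg_\ell(\alpha x) = \sum_{m=0}^\infty \frac{1}{(\ell+m)!}(\alpha x)^m = \sum_{m=0}^\infty \frac{\alpha^m}{(\ell+m)!}x^m$, multiply by $\alpha^\ell$ to get $\alpha^\ell \bg_\ell(\alpha x) = \sum_{m=0}^\infty \frac{\alpha^{\ell+m}}{(\ell+m)!}x^m$, integrate in $\alpha$ over $[0,1]$, and use $\int_0^1 \alpha^{\ell+m}\,d\alpha = \frac{1}{\ell+m+1}$ to obtain $\sum_{m=0}^\infty \frac{1}{(\ell+m+1)!}x^m = \bg_{\ell+1}(x)$. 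The double-sum cases ($\bg_\ell(x,y,z)$ and $\bg_\ell(x,y,z,\tau)$) work the same way: the prefactor $\alpha^\ell$ combined with the $\alpha^{m+n}$ coming from $(\alpha x)^m(\alpha z)^n$ (and, in the last two cases, an extra $\alpha$ from the scalar $\tau \mapsto \alpha\tau$, or from the "$,y,\tau$" block two extra factors) yields exactly the power of $\alpha$ whose integral shifts the factorial index from $\ell$ to $\ell+1$. For the blocks carrying the $(m+1)$ weight (namely $\bg_\ell(x,\tau)$ and $\bg_\ell(x,y,z,\tau)$), the combinatorial coefficient $m+1$ is inert under the $\alpha$-integration and simply rides along, so the same computation applies; one should just be slightly careful that in $\bg_\ell(x,y,z,\tau)$ the denominator is $(\ell+m+n+2)$ rather than a factorial, but the index shift $\ell \to \ell+1$ is again produced by a single application of $\int_0^1 \alpha^{(\text{exponent})}\,d\alpha$.

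The one point that deserves a line of justification rather than a wave of the hand is the interchange of $\sum$ and $\int$. Since the coefficients here are formal (the entries of $x,y,z$ are treated as indeterminates or as fixed matrices, per the "templates" convention in the paper), the cleanest route is to regard both sides as formal power series in the scalar parameter obtained by scaling — i.e. work coefficient-by-coefficient in the monomials $x^m$, $x^m y z^n$, etc. — in which case "integrate" is defined termwise and no convergence issue arises at all; the identity is then an equality of coefficients, which is exactly the elementary factorial computation above. If instead one wants an analytic statement for, say, $x=\mbs{\phi}^\wedge$, uniform absolute convergence of $\sum_m \frac{\alpha^{\ell+m}}{(\ell+m)!}\|x\|^m$ on $\alpha\in[0,1]$ (dominated by $e^{\|x\|}$) licenses the swap by dominated convergence. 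I would state this once, up front, and then the proof is just the six near-identical index computations; the "hard part," such as it is, is purely organizational — keeping the extra $\alpha$'s from the $y$/$\tau$ slots straight so that every case lands on the $+1$ shift in the subscript.
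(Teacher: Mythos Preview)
Your approach is exactly the paper's: substitute the defining series, multiply by $\alpha^\ell$, swap sum and integral, and use $\int_0^1 \alpha^k\,d\alpha = 1/(k+1)$ to shift the factorial index by one; the paper writes out the $\bg_\ell(x)$ and $\bg_\ell(x,y,z)$ cases and leaves the rest to the reader, just as you do. Your added justification for the sum--integral interchange is a nice touch the paper omits.

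One small correction to your aside about $\bg_\ell(x,y,z,\tau)$: the denominator $(\ell+m+n+2)$ in the paper's definition is a typo for $(\ell+m+n+2)!$ (as confirmed by its later use in Lemma~\ref{thm:bbinit2}). Taken literally without the factorial, the recursive identity would \emph{not} hold --- integrating $\alpha^{\ell+m+n+2}/(\ell+m+n+2)$ yields $1/\bigl((\ell+m+n+2)(\ell+m+n+3)\bigr)$, not $1/(\ell+m+n+3)$ --- so your claim that ``the index shift $\ell\to\ell+1$ is again produced by a single application'' is only correct under the intended factorial reading.
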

\begin{proof}  We will show a few examples and leave the rest to the reader.
For $\bg_\ell(x)$ we have
\begin{multline}
\int_0^1 \alpha^\ell \,\bg_\ell(\alpha x) \, d\alpha = \int_0^1 \alpha^\ell \, \left( \sum_{m=0}^\infty \frac{1}{(\ell+m)!}(\alpha x)^m\right) \, d\alpha = \sum_{m=0}^\infty \frac{\int_0^1 \alpha^{\ell+m} \, d\alpha}{(\ell+m)!} x^m \\ = \sum_{m=0}^\infty \frac{1}{(\ell+m+1)!} x^m = \bg_{\ell+1}(x).
\end{multline}
Similarly, for $\bg_\ell(x,y,z)$ we have
\begin{multline}
\int_0^1 \alpha^\ell \, \bg_\ell(\alpha x, \alpha y, \alpha z) \, d\alpha = \int_0^1 \alpha^\ell \, \left(\sum_{m=0}^\infty \sum_{n=0}^\infty \frac{1}{(\ell+m+n+1)!} (\alpha x)^m (\alpha y) (\alpha z)^n\right) \, d\alpha \\ = \sum_{m=0}^\infty \sum_{n=0}^\infty \frac{\int_0^1 \alpha^{\ell+m+n+1} \, d\alpha}{(\ell+m+n+1)!} x^m \, y \, z^n  = \sum_{m=0}^\infty\sum_{n=0}^\infty \frac{1}{(\ell+m+n+2)!} x^m \, y \, z^n = \bg_{\ell+1}(x, y, z).
\end{multline}
The rest of the collection follows similar reasoning.
\end{proof}

The immediate application of Lemma~\ref{thm:bbrecursive} is that once we have a compact $\ell=0$ expression, we can simply integrate for higher values of $\ell$.   For example, for  $\BG_0(\mbs{\phi}^\wdg) = \mbf{C}(\mbs{\phi})$ provided in~\eqref{eq:Ccompact}, we can integrate to produce the higher-order members of the sequence.  To compute $\BG_1(\mbs{\phi}^\wdg) = \mbf{J}(\mbs{\phi})$ we have
\begin{multline}
\BG_1(\mbs{\phi}^\wdg) = \int_0^1 \alpha^0 \, \BG_0(\alpha \mbs{\phi}^\wdg) \, d\alpha =  \int_0^1 \left( c_0(\alpha\phi) \mbf{1} +  \alpha c_1(\alpha\phi) \mbs{\phi}^\wdg + \alpha^2 c_2(\alpha\phi) \mbs{\phi}^{\wdg^2} \right) d\alpha  \\ = \left(\int_0^1 c_0(\alpha\phi) \, d\alpha \right) \mbf{1} + \left(\int_0^1  \alpha c_1(\alpha\phi) \, d\alpha\right) \mbs{\phi} + \left(\int_0^1 \alpha^2 c_2(\alpha\phi) \, d\alpha\right)  \mbs{\phi}^{\wdg^2} = j_0(\phi) \mbf{1} + j_1(\phi) \mbs{\phi}^\wdg + j_2(\phi) \mbs{\phi}^{\wdg^2},
\end{multline}
which matches~\eqref{eq:Jcompact}.  The advantage here is that we do not need to reintroduce the minimal polynomial to make our expression compact. Instead, only three scalar integrals need be computed, possibly exploiting symbolic math software.  We can continue the process to compute any desired $\BG_\ell(\mbs{\phi}^\wdg)$.  This process is depicted in~\eqref{eq:cd} (solid arrow path).

As another example, consider $\mbox{Ad}(SE(3))$.  Using the minimal polynomial in~\eqref{eq:adid}, it can be shown that \citep[(8.68)]{barfoot_ser24}
\begin{multline}\label{eq:SE3_monoT}
\Tbig(\mbs{\xi}) = \BG_0(\mbs{\xi}^\Wdg) = \underbrace{1}_{t_0(\phi)} \mbf{1} + \underbrace{\left( \frac{3\sin\phi - \phi\cos\phi}{2\phi} \right)}_{t_1(\phi)} \mbs{\xi}^\Wdg + \underbrace{\left( \frac{4-\phi\sin\phi-4\cos\phi}{2\phi^2} \right)}_{t_2(\phi)} \mbs{\xi}^{\Wdg^2} \\
+ \underbrace{\left( \frac{\sin\phi-\phi\cos\phi}{2\phi^3} \right)}_{t_3(\phi)} \mbs{\xi}^{\Wdg^3} + \underbrace{\left( \frac{2-\phi\sin\phi-2\cos\phi}{2\phi^4} \right)}_{t_4(\phi)} \mbs{\xi}^{\Wdg^4}.
\end{multline}
Then, using our recursive relationship, $\BG_1(\mbs{\xi}^\Wdg) = \int_0^1 \BG_0(\alpha \mbs{\xi}^\Wdg) \, d\alpha$, we have
\begin{multline}\label{eq:SE3_monoJ}
\mbs{\mathcal{J}}(\mbs{\xi}) = \BG_1(\mbs{\xi}^\Wdg) = \sum_{m=0}^4 \left( \int_0^1 \alpha^m t_m(\alpha\phi) \, d\alpha \right) \mbs{\xi}^{\Wdg^m} 
\\ = \mbf{1} + \left( \frac{4-\phi\sin\phi - 4\cos\phi}{2\phi^2} \right) \mbs{\xi}^\Wdg + \left( \frac{4\phi - 5\sin\phi+\phi\cos\phi}{2\phi^3} \right) \mbs{\xi}^{\Wdg^2} \\ + \left( \frac{2 - \phi\sin\phi-2\cos\phi}{2\phi^4} \right) \mbs{\xi}^{\Wdg^3} + \left( \frac{2\phi-3\sin\phi+\phi\cos\phi}{2\phi^5} \right) \mbs{\xi}^{\Wdg^4},
\end{multline}
which matches the result of \citet[(8.99)]{barfoot_ser24} derived by a different means (i.e., expanding the infinite series and then applying the minimal polynomial).  In this case, we were required to compute five scalar integrals because the minimal polynomial is of higher order.  Nevertheless, our approach is simpler and hence less error prone than applying the minimal polynomial to the infinite series and then attempting to recognize the trigonometric coefficients in the resulting finite-series forms.

\section{Substructures}\label{sec:substructure}

This section studies several matrix Lie groups, making use of our building blocks to reveal important substructure and additional relationships.

%%% SE(3)

\subsection{Poses, $SE(3)$}

We will investigate the substructures of $SE(3)$ and $\mbox{Ad}(SE(3))$ more closely.  The following theorem relates the building blocks of $SE(3)$ to those of $SO(3)$:

\begin{theorem}\label{thm:bbse3so3}
$SE(3)$:  Given the building-block functions in~\eqref{eq:bb}, we have that
\begin{equation}
\BG_\ell(\mbs{\xi}^\wdg) = \bbm \BG_\ell(\mbs{\phi}^\wdg) & \Bg_\ell(\mbs{\phi}^\wdg, \mbs{\rho}) \\ \mbf{0}^T & \frac{1}{\ell !} \ebm.
\end{equation}
In other words, for all $\ell$ the $SE(3)$ building blocks can be constructed from the $SO(3)$ building blocks.
\end{theorem}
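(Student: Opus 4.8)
The plan is to reduce the statement to an explicit formula for the matrix powers $\mbs{\xi}^{\wdg^m}$ and then substitute into the series definition of $\BG_\ell$. Since $\mbs{\xi}^\wdg = \bbm \mbs{\phi}^\wdg & \mbs{\rho} \\ \mbf{0}^T & 0 \ebm$, I would first show by induction on $m$ that
\begin{equation}
\mbs{\xi}^{\wdg^m} = \bbm \mbs{\phi}^{\wdg^m} & \mbs{\phi}^{\wdg^{m-1}} \mbs{\rho} \\ \mbf{0}^T & 0 \ebm \quad (m \ge 1),
\end{equation}
with the $m=0$ case being the identity $\mbf{1}$. The base case $m=1$ is immediate, and the inductive step is a single block multiplication $\mbs{\xi}^{\wdg^{m+1}} = \mbs{\xi}^{\wdg^m}\,\mbs{\xi}^\wdg$: its $(1,1)$ block is $\mbs{\phi}^{\wdg^{m+1}}$, its $(1,2)$ block is $\mbs{\phi}^{\wdg^m}\mbs{\rho}$, and its bottom row vanishes because the bottom row of $\mbs{\xi}^\wdg$ is zero.

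With the power formula in hand, I would substitute into $\BG_\ell(\mbs{\xi}^\wdg) = \sum_{m=0}^\infty \frac{1}{(\ell+m)!}\mbs{\xi}^{\wdg^m}$ and read off the blocks. The $(1,1)$ block is $\sum_{m=0}^\infty \frac{1}{(\ell+m)!}\mbs{\phi}^{\wdg^m} = \BG_\ell(\mbs{\phi}^\wdg)$ by definition; the bottom-right scalar picks up a contribution only from $m=0$, giving $\frac{1}{\ell!}$; the bottom-left row is identically zero. For the $(1,2)$ block, the $m=0$ term contributes nothing (the identity has zero there), so the block equals $\sum_{m=1}^\infty \frac{1}{(\ell+m)!}\mbs{\phi}^{\wdg^{m-1}}\mbs{\rho}$, and the substitution $k=m-1$ gives $\sum_{k=0}^\infty \frac{1}{(\ell+k+1)!}\mbs{\phi}^{\wdg^k}\mbs{\rho} = \Bg_\ell(\mbs{\phi}^\wdg, \mbs{\rho})$, which is exactly the claimed off-diagonal entry.

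There is no genuine obstacle here; the content is bookkeeping, and the one subtlety worth flagging is the boundary term $m=0$ in the off-diagonal block, where the factor $\mbs{\phi}^{\wdg^{m-1}}$ only makes sense for $m\ge 1$, so that term must be read off from the identity matrix directly rather than from the power formula. As an alternative, one could prove the claim by induction on $\ell$ using Lemma~\ref{thm:bbrecursive}: the base case $\ell=0$ is $\BG_0(\mbs{\xi}^\wdg) = \exp(\mbs{\xi}^\wdg) = \mbf{T}(\mbs{\xi})$, whose top-right block is $\mbf{J}(\mbs{\phi})\mbs{\rho} = \Bg_0(\mbs{\phi}^\wdg, \mbs{\rho})$ and whose corner is $1 = \tfrac{1}{0!}$; the inductive step integrates $\alpha^\ell \BG_\ell(\alpha\mbs{\xi}^\wdg)$ blockwise, invoking $\BG_{\ell+1}(\mbs{\phi}^\wdg) = \int_0^1 \alpha^\ell \BG_\ell(\alpha\mbs{\phi}^\wdg)\,d\alpha$ and $\Bg_{\ell+1}(\mbs{\phi}^\wdg,\mbs{\rho}) = \int_0^1 \alpha^\ell \Bg_\ell(\alpha\mbs{\phi}^\wdg,\alpha\mbs{\rho})\,d\alpha$ on the upper blocks and $\int_0^1 \alpha^\ell \tfrac{1}{\ell!}\,d\alpha = \tfrac{1}{(\ell+1)!}$ on the corner. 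I would present the direct power-series argument as the main proof, since it is the most transparent, and perhaps note the inductive alternative afterward.
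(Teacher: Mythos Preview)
Your proposal is correct. Your primary argument---computing $\mbs{\xi}^{\wdg^m}$ explicitly by induction on $m$ and then reading off the blocks of the series---is more elementary than what the paper does, and it works cleanly; the only delicate point, which you already flag, is handling the $m=0$ term separately in the off-diagonal block.

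The paper instead takes exactly the route you describe as your alternative: induction on $\ell$ via Lemma~\ref{thm:bbrecursive}, verifying the $\ell=0$ case against the known form of $\Tsmall(\mbs{\xi})$ and then integrating blockwise. Your direct approach is self-contained and avoids any appeal to the integral recursion, which is a virtue if one wants a standalone proof. The paper's approach, on the other hand, is chosen to showcase Lemma~\ref{thm:bbrecursive} in action and to establish a template that is reused verbatim for the harder theorems (Theorems~\ref{thm:bbadse3so3}, \ref{thm:bbgal3so3}, \ref{thm:bbadgal3so3}, \ref{thm:bbsim3so3}, \ref{thm:bbadsim3so3}), where a direct power computation would be messier. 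So your main proof is the cleaner one for this particular statement, while the paper's proof is the one that generalizes.
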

\begin{proof}
We will use a proof by induction.  For the $\ell=0$ case we have to show that
\begin{equation}
\BG_0(\mbs{\xi}^\wdg) = \bbm \BG_0(\mbs{\phi}^\wdg) & \Bg_0(\mbs{\phi}^\wdg, \mbs{\rho}) \\ \mbf{0}^T & 1 \ebm = \bbm \mbf{C}(\mbs{\phi}) & \mbf{J}(\mbs{\phi}) \mbs{\rho}  \\ \mbf{0}^T & 1 \ebm = \Tsmall(\mbs{\xi}).
\end{equation}
By definition $\BG_0(\mbs{\phi}^\wdg) = \mbf{C}(\mbs{\phi})$ and by Remark~\ref{thm:remark1} we have $\Bg_0(\mbs{\phi}^\wdg, \mbs{\rho}) = \BG_1(\mbs{\phi}^\wdg) \mbs{\rho} = \mbf{J}(\mbs{\phi}) \mbs{\rho}$ so this holds.  We then assume the relationship holds for $\ell$ and show this implies it holds for $\ell+1$:
\begin{multline}
\BG_{\ell+1}(\mbs{\xi}^\wdg) = \int_0^1 \alpha^\ell \, \BG_\ell(\alpha \mbs{\xi}^\wdg) \, d\alpha = \int_0^1 \alpha^\ell \bbm \BG_\ell(\alpha\mbs{\phi}^\wdg) & \Bg_\ell(\alpha\mbs{\phi}^\wdg, \alpha\mbs{\rho}) \\ \mbf{0}^T & \frac{1}{\ell !}  \ebm \, d\alpha \\ = \bbm \int_0^1 \alpha^\ell \BG_\ell(\alpha\mbs{\phi}^\wdg) \, d\alpha & \int_0^1 \alpha^\ell \Bg_\ell(\alpha\mbs{\phi}^\wdg, \alpha\mbs{\rho}) \, d\alpha \\ \mbf{0}^T & \frac{\int_0^1 \alpha^\ell \, d\alpha}{\ell !}  \ebm = \bbm \BG_{\ell+1}(\mbs{\phi}^\wdg) & \Bg_{\ell+1}(\mbs{\phi}^\wdg, \mbs{\rho}) \\ \mbf{0}^T & \frac{1}{(\ell + 1)!} \ebm,
\end{multline}
where we employ Lemma~\ref{thm:bbrecursive} in the first and last steps.
\end{proof}

A similar result can be shown relating the building blocks of $\mbox{Ad}(SE(3))$ to those of $SO(3)$:

\begin{theorem}\label{thm:bbadse3so3}
$\mbox{\normalfont Ad}(SE(3))$: Given the building-block functions in~\eqref{eq:bb}, we have that
\begin{equation}
\BG_\ell(\mbs{\xi}^\Wdg) = \bbm \BG_\ell(\mbs{\phi}^\wdg) & \BG_\ell(\mbs{\phi}^\wdg, \mbs{\rho}^\wdg, \mbs{\phi}^\wdg) \\ \mbf{0} & \BG_\ell(\mbs{\phi}^\wdg) \ebm.
\end{equation}
In other words, for all $\ell$ the $\mbox{\normalfont Ad}(SE(3))$ building blocks can be constructed from the $SO(3)$ building blocks.
\end{theorem}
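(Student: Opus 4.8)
The plan is to follow the template of the proof of Theorem~\ref{thm:bbse3so3}: an induction on $\ell$ whose inductive step is driven by the recursive integral of Lemma~\ref{thm:bbrecursive} applied blockwise. The structural fact that makes this work is that $\mbs{\xi}^\Wdg$ is block upper triangular with equal diagonal blocks $\mbs{\phi}^\wdg$, so every power $\mbs{\xi}^{\Wdg^m}$ inherits this shape: the $(1,1)$ and $(2,2)$ blocks are $\mbs{\phi}^{\wdg^m}$ and the $(1,2)$ block is a Cauchy-type convolution in $\mbs{\rho}^\wdg$ between powers of $\mbs{\phi}^\wdg$. Once that is in hand, the diagonal blocks of $\BG_\ell(\mbs{\xi}^\Wdg) = \sum_m \frac{1}{(\ell+m)!}\mbs{\xi}^{\Wdg^m}$ collapse immediately to $\BG_\ell(\mbs{\phi}^\wdg)$, and only the off-diagonal block requires genuine care.

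For the base case $\ell=0$ I would first establish, by a short subsidiary induction on $m$, that
$$\mbs{\xi}^{\Wdg^m} = \bbm \mbs{\phi}^{\wdg^m} & \sum_{j+k=m-1}\mbs{\phi}^{\wdg^j}\,\mbs{\rho}^\wdg\,\mbs{\phi}^{\wdg^k} \\ \mbf{0} & \mbs{\phi}^{\wdg^m} \ebm$$
(the sum being empty for $m=0$), the step following from $\mbs{\xi}^{\Wdg^{m+1}} = \mbs{\xi}^{\Wdg^m}\mbs{\xi}^\Wdg$ and the block product. Substituting into $\BG_0(\mbs{\xi}^\Wdg) = \sum_{m\ge 0}\frac{1}{m!}\mbs{\xi}^{\Wdg^m}$ and reindexing the off-diagonal double sum by $m = j+k+1$ gives exactly $\sum_{j,k\ge 0}\frac{1}{(j+k+1)!}\mbs{\phi}^{\wdg^j}\mbs{\rho}^\wdg\mbs{\phi}^{\wdg^k} = \BG_0(\mbs{\phi}^\wdg,\mbs{\rho}^\wdg,\mbs{\phi}^\wdg)$; as a sanity check, this off-diagonal block must also coincide with $(\mbf{J}(\mbs{\phi})\mbs{\rho})^\wdg\mbf{C}(\mbs{\phi})$ from~\eqref{eq:se3adjointmap1}. (In fact the same reindexing works verbatim with $1/(\ell+m)!$ in place of $1/m!$, so this argument already settles the theorem for all $\ell$; but I would still present the inductive step below for consistency with Theorem~\ref{thm:bbse3so3}.)

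For the inductive step, assume the claim for $\ell$. By Lemma~\ref{thm:bbrecursive},
$$\BG_{\ell+1}(\mbs{\xi}^\Wdg) = \int_0^1 \alpha^\ell\,\BG_\ell(\alpha\mbs{\xi}^\Wdg)\,d\alpha = \int_0^1 \alpha^\ell \bbm \BG_\ell(\alpha\mbs{\phi}^\wdg) & \BG_\ell(\alpha\mbs{\phi}^\wdg,\alpha\mbs{\rho}^\wdg,\alpha\mbs{\phi}^\wdg) \\ \mbf{0} & \BG_\ell(\alpha\mbs{\phi}^\wdg) \ebm d\alpha,$$
and pulling the integral into each block and applying Lemma~\ref{thm:bbrecursive} once more --- the single-argument relation for the two diagonal blocks and the three-argument relation $\bg_{\ell+1}(x,y,z) = \int_0^1\alpha^\ell\bg_\ell(\alpha x,\alpha y,\alpha z)\,d\alpha$ with $x=z=\mbs{\phi}^\wdg$, $y=\mbs{\rho}^\wdg$ for the $(1,2)$ block --- yields $\BG_{\ell+1}(\mbs{\xi}^\Wdg) = \bbm \BG_{\ell+1}(\mbs{\phi}^\wdg) & \BG_{\ell+1}(\mbs{\phi}^\wdg,\mbs{\rho}^\wdg,\mbs{\phi}^\wdg) \\ \mbf{0} & \BG_{\ell+1}(\mbs{\phi}^\wdg) \ebm$, closing the induction.

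The only real obstacle is the off-diagonal bookkeeping: getting the convolution $\sum_{j+k=m-1}\mbs{\phi}^{\wdg^j}\mbs{\rho}^\wdg\mbs{\phi}^{\wdg^k}$ right in the subsidiary induction (keeping track of which power of $\mbs{\phi}^\wdg$ sits on which side of $\mbs{\rho}^\wdg$, since $\mbs{\phi}^\wdg$ does not commute with it) and matching the reindexed double sum against the three-argument template $\BG_\ell(x,y,z)$ with $x$ and $z$ both instantiated to the same non-commuting symbol $\mbs{\phi}^\wdg$. Everything else --- the diagonal blocks, the interchange of sum and integral, and the two invocations of Lemma~\ref{thm:bbrecursive} --- is routine.
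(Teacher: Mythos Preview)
Your proof is correct, and the inductive step is identical to the paper's. The difference lies in how you establish the base case: the paper invokes the closed-form adjoint expression~\eqref{eq:se3adjointmap1} for $\Tbig(\mbs{\xi})$ and then appeals to Lemma~\ref{thm:bbinit1} (which in turn rests on the Beta-integral identity of Lemma~\ref{thm:intfact1}) to verify that $\BG_0(\mbs{\phi}^\wdg,\mbs{\rho}^\wdg,\mbs{\phi}^\wdg) = (\mbf{J}(\mbs{\phi})\mbs{\rho})^\wdg\mbf{C}(\mbs{\phi})$ matches the $(1,2)$ block. You instead compute the block structure of $\mbs{\xi}^{\Wdg^m}$ directly via a subsidiary induction and read off the series for the $(1,2)$ block by reindexing, never needing Lemma~\ref{thm:bbinit1}. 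Your route is more elementary and, as you note, actually dispatches every $\ell$ at once, making the outer induction redundant; the paper's route has the compensating advantage of tying the $\ell=0$ case explicitly to the known analytic form of $\Tbig(\mbs{\xi})$, which is the natural starting point in its ``apply the minimal polynomial early, then integrate'' philosophy.
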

\begin{proof}
We will again use a proof by induction.  For the $\ell=0$ case we have to show that
\begin{equation}
\BG_0(\mbs{\xi}^\Wdg) = \bbm \BG_0(\mbs{\phi}^\wdg) & \BG_0(\mbs{\phi}^\wdg, \mbs{\rho}^\wdg, \mbs{\phi}^\wdg) \\ \mbf{0} & \BG_0(\mbs{\phi}^\wdg) \ebm = \bbm \mbf{C}(\mbs{\phi}) & \left( \mbf{J}(\mbs{\phi}) \mbs{\rho} \right)^\wdg \mbf{C}(\mbs{\phi}) \\ \mbf{0} & \mbf{C}(\mbs{\phi}) \ebm = \Tbig(\mbs{\xi}).
\end{equation}
By definition $\BG_0(\mbs{\phi}^\wdg) = \mbf{C}(\mbs{\phi})$ and from Lemma~\ref{thm:bbinit1} we have
\begin{equation}
\BG_0(\mbs{\phi}^\wdg, \mbs{\rho}^\wdg, \mbs{\phi}^\wdg) =  \Bg_0(\mbs{\phi}^\Wdg,\mbs{\rho})^\wdg \BG_0(\mbs{\phi}^\wdg) =  \left(\BG_1(\mbs{\phi}^\Wdg)\mbs{\rho})\right)^\wdg \BG_0(\mbs{\phi}^\wdg) =  (\left( \mbf{J}(\mbs{\phi}) \mbs{\rho} \right)^\wdg \mbf{C}(\mbs{\phi}),
\end{equation}
where we note that $\wdg = \Wdg$ for $SO(3)$ as it is self-adjoint.
Next, we assume that the result holds for $\ell$ and show that this implies it holds for $\ell+1$.  Using Lemma~\ref{thm:bbrecursive} we can see that
\begin{multline}
\BG_{\ell+1}(\mbs{\xi}^\Wdg) = \int_0^1 \alpha^\ell \, \BG_\ell(\alpha \mbs{\xi}^\Wdg) \, d\alpha = \int_0^1 \alpha^\ell \bbm \BG_\ell(\alpha\mbs{\phi}^\wdg) & \BG_\ell(\alpha\mbs{\phi}^\wdg, \alpha\mbs{\rho}^\wdg, \alpha\mbs{\phi}^\wdg) \\ \mbf{0} & \BG_\ell(\alpha\mbs{\phi}^\wdg) \ebm \, d\alpha \\ = \bbm \int_0^1 \alpha^\ell \BG_\ell(\alpha\mbs{\phi}^\wdg) \, d\alpha & \int_0^1 \alpha^\ell \BG_\ell(\alpha\mbs{\phi}^\wdg, \alpha\mbs{\rho}^\wdg, \alpha\mbs{\phi}^\wdg) \, d\alpha \\ \mbf{0} & \int_0^1 \alpha^\ell \BG_\ell(\alpha\mbs{\phi}^\wdg) \, d\alpha \ebm = \bbm \BG_{\ell+1}(\mbs{\phi}^\wdg) & \BG_{\ell+1}(\mbs{\phi}^\wdg, \mbs{\rho}^\wdg, \mbs{\phi}^\wdg) \\ \mbf{0} & \BG_{\ell+1}(\mbs{\phi}^\wdg) \ebm.
\end{multline}
Therefore the result holds for all $\ell$.
\end{proof}

The implication of Theorem~\ref{thm:bbadse3so3} is that we can now compute the blocks of $\BG_\ell(\mbs{\xi}^\Wdg)$ more easily.  For example, we know that
\begin{equation}
\BG_1(\mbs{\xi}^\Wdg) = \bbm \BG_1(\mbs{\phi}^\wdg) & \BG_1(\mbs{\phi}^\wdg, \mbs{\rho}^\wdg, \mbs{\phi}^\wdg) \\ \mbf{0} & \BG_1(\mbs{\phi}^\wdg) \ebm = \bbm \mbf{J}(\mbs{\phi}) & \mbf{Q}(\mbs{\phi}, \mbs{\rho}) \\ \mbf{0} & \mbf{J}(\mbs{\phi}) \ebm = \mbs{\mathcal{J}}(\mbs{\xi}).
\end{equation}
Comparing blocks we see that
\begin{equation}
\mbf{Q}(\mbs{\phi}, \mbs{\rho}) = \BG_1(\mbs{\phi}^\wdg, \mbs{\rho}^\wdg, \mbs{\phi}^\wdg) = \int_0^1 \BG_0(\alpha\mbs{\phi}^\wdg, \alpha\mbs{\rho}^\wdg, \alpha\mbs{\phi}^\wdg) \,d\alpha 
\end{equation}
is an integral form for the upper-right block of the $SE(3)$ Jacobian.  It will be easier if we first compute $\BG_0(\mbs{\phi}^\wdg, \mbs{\rho}^\wdg, \mbs{\phi}^\wdg)$ and then integrate:
\begin{equation}
\BG_0(\mbs{\phi}^\wdg, \mbs{\rho}^\wdg, \mbs{\phi}^\wdg) = \left( \mbf{J}(\mbs{\phi}) \mbs{\rho} \right)^\wdg \mbf{C}(\mbs{\phi}) = \left( \left( j_0(\phi) \mbf{1} + j_1(\phi) \mbs{\phi}^\wdg + j_2(\phi) \mbs{\phi}^{\wdg^2} \right) \mbs{\rho} \right)^\wdg \left( c_0(\phi) \mbf{1} + c_1(\phi) \mbs{\phi}^\wdg + c_2(\phi) \mbs{\phi}^{\wdg^2}  \right) .
\end{equation}
We can apply Lemma~\ref{thm:product} in Appendix~\ref{sec:identities} to expand this product:
\begin{multline}
\BG_0(\mbs{\phi}^\wdg, \mbs{\rho}^\wdg, \mbs{\phi}^\wdg) = \mbs{\rho}^\wdg + j_1(\phi) \mbs{\phi}^\wdg \mbs{\rho}^\wdg + \underbrace{\left(c_1(\phi) - j_1(\phi) + \phi^2 (j_1(\phi)c_2(\phi) - j_2(\phi) c_1(\phi) \right)}_{j_1(\phi)} \mbs{\rho}^\wdg \mbs{\phi}^\wdg   \\ + j_2(\phi) \mbs{\phi}^{\wdg^2} \mbs{\rho}
+ \left(  j_1(\phi)c_1(\phi) - 2 j_2(\phi) + \phi^2  j_2(\phi)c_2(\phi) \right) \mbs{\phi}^\wdg \mbs{\rho}^\wdg \mbs{\phi}^\wdg + j_2(\phi)c_1(\phi)  \mbs{\phi}^{\wdg^2} \mbs{\rho}^\wdg \mbs{\phi}^\wdg \qquad\qquad\quad \\ + \underbrace{\left( c_2(\phi) + j_2(\phi) - j_1(\phi)c_1(\phi) - \phi^2 j_2(\phi)c_2(\phi) \right)}_{j_2(\phi)} \mbs{\rho}^\wdg \mbs{\phi}^{\wdg^2} 
 + \left( j_1(\phi)c_2(\phi) - 2 j_2(\phi)c_1(\phi)\right)   \mbs{\phi}^\wdg \mbs{\rho}^\wdg \mbs{\phi}^{\wdg^2} \\
= \mbs{\rho}^\wdg + j_1(\phi) \left( \mbs{\phi}^\wdg \mbs{\rho}^\wdg +  \mbs{\rho}^\wdg \mbs{\phi}^\wdg + \mbs{\phi}^\wdg \mbs{\rho}^\wdg \mbs{\phi}^\wdg \right) + j_2(\phi) \left(  \mbs{\phi}^{\wdg^2} \mbs{\rho}^\wdg + \mbs{\rho}^\wdg \mbs{\phi}^{\wdg^2} - 2  \mbs{\phi}^\wdg \mbs{\rho}^\wdg \mbs{\phi}^\wdg \right) \qquad \\ \frac{1}{2} \left( j_1(\phi)c_2(\phi) - j_2(\phi)c_1(\phi)\right) \left( \mbs{\phi}^{\wdg^2} \mbs{\rho}^\wdg \mbs{\phi}^\wdg +  \mbs{\phi}^\wdg \mbs{\rho}^\wdg \mbs{\phi}^{\wdg^2} \right).
\end{multline}
We are left now with only computing five simple integrals to arrive at $\BG_1(\mbs{\phi}^\wdg, \mbs{\rho}^\wdg, \mbs{\phi}^\wdg)$:
\begin{multline}\label{eq:SE3_Q}
\mbf{Q}(\mbs{\phi}, \mbs{\rho}) = \BG_1(\mbs{\phi}^\wdg, \mbs{\rho}^\wdg, \mbs{\phi}^\wdg) = \int_0^1 \BG_0(\alpha\mbs{\phi}^\wdg, \alpha\mbs{\rho}^\wdg, \alpha\mbs{\phi}^\wdg) \,d\alpha \\
= \left( \int_0^1 \alpha \, d\alpha \right) \mbs{\rho}^\wdg + \left( \int_0^1 \alpha^2 j_1(\alpha\phi) \, d\alpha \right) \left( \mbs{\phi}^\wdg \mbs{\rho}^\wdg +  \mbs{\rho}^\wdg \mbs{\phi}^\wdg \right) + \left( \int_0^1 \alpha^3 j_1(\alpha\phi) \, d\alpha \right) \mbs{\phi}^\wdg \mbs{\rho}^\wdg \mbs{\phi}^\wdg  \\
+ \left( \int_0^1 \alpha^3 j_2(\alpha\phi) \, d\alpha \right) \left(  \mbs{\phi}^{\wdg^2} \mbs{\rho}^\wdg + \mbs{\rho}^\wdg \mbs{\phi}^{\wdg^2} - 2  \mbs{\phi}^\wdg \mbs{\rho}^\wdg \mbs{\phi}^\wdg \right) \hspace*{1.4in} \\ 
+ \frac{1}{2} \left( \int_0^1 \alpha^4 \left( j_1(\alpha\phi)c_2(\alpha\phi) - j_2(\alpha\phi)c_1(\alpha\phi)\right) \, d\alpha \right) \left( \mbs{\phi}^{\wdg^2} \mbs{\rho}^\wdg \mbs{\phi}^\wdg +  \mbs{\phi}^\wdg \mbs{\rho}^\wdg \mbs{\phi}^{\wdg^2} \right)  \\
 =  \; \frac{1}{2} \mbs{\rho}^\wdg + \left(\frac{\phi-\sin\phi}{\phi^3}\right) \left(  \mbs{\phi}^\wdg \mbs{\rho}^\wdg + \mbs{\rho}^\wdg \mbs{\phi}^\wdg + \mbs{\phi}^\wdg \mbs{\rho}^\wdg \mbs{\phi}^\wdg \right) \hspace*{2.2in} \\
 + \left(\frac{\phi^2+2\cos\phi-2}{2\phi^4}\right) \left( \mbs{\phi}^{\wdg^2} \mbs{\rho}^\wdg + \mbs{\rho}^\wdg \mbs{\phi}^{\wdg^2} - 3 \mbs{\phi}^\wdg \mbs{\rho}^\wdg \mbs{\phi}^\wdg \right)  \\
 + \left( \frac{2\phi-3\sin\phi+\phi\cos\phi}{2\phi^5}\right) \left(  \mbs{\phi}^{\wdg^2} \mbs{\rho}^\wdg \mbs{\phi}^\wdg + \mbs{\phi}^\wdg \mbs{\rho}^\wdg \mbs{\phi}^{\wdg^2} \right), 
\end{multline}
which matches the result of \citet{barfoot_tro14} and has later appeared in \citet{sola18, kellyAllGalileanGroup2024a,fornasierEquivariantSymmetriesAided2024, barfoot_ser24}.  While this may still seem onerous, it is less taxing than working with the infinite-series expressions due to the manipulations being limited to a fixed number of terms and the option to use a symbolic math package to evaluate the integrals.  We could continue to integrate to compute any required $\BG_\ell(\mbs{\phi}^\wdg, \mbs{\rho}^\wdg, \mbs{\phi}^\wdg)$.

%%% SE_2(3)

\subsection{Extended Poses, $SE_2(3)$}

The group $SE_2(3)$ provides an elegant means to pose inertial navigation problems \citep{barrau15, barrau18, brossard21, barfoot_ser24}.  The main idea is to extend $SE(3)$ to also contain the translational velocity, resulting in a $5 \times 5$ {\em extended-pose} matrix:
\begin{equation}
\Tsmall(\mbs{\xi}) = \exp\left( \mbs{\xi}^\wdg \right) = \exp \left( \bbm \mbs{\rho}  \\ \mbs{\nu} \\ \mbs{\phi} \ebm^\wdg \right) = \exp \left( \bbm \mbs{\phi}^\wdg & \mbs{\nu} & \mbs{\rho} \\ \mbf{0}^T & 0 & 0 \\ \mbf{0}^T & 0 & 0 \ebm \right) = \bbm \mbf{C}(\mbs{\phi}) & \mbf{J}(\mbs{\phi}) \mbs{\nu} & \mbf{J}(\mbs{\phi}) \mbs{\rho} \\ \mbf{0}^T & 1 & 0 \\ \mbf{0}^T & 0 & 1 \ebm,
\end{equation}
where $\mbf{J}(\mbs{\phi})$ is the left Jacobian of $SO(3)$ and we have extended the $\wdg$ operator appropriately.  Naturally, the mapping is again surjective only.  The $9 \times 9$ adjoint for $SE_2(3)$ is given by
\begin{equation}
\Tbig(\mbs{\xi}) = \exp\left( \mbs{\xi}^\Wdg \right) = \exp\left( \bbm \mbs{\rho} \\ \mbs{\nu} \\ \mbs{\phi}  \ebm^\Wdg \right) = \exp\left( \bbm \mbs{\phi}^\wdg  & \mbf{0} & \mbs{\rho}^\wdg \\ \mbf{0} & \mbs{\phi}^\wdg & \mbs{\nu}^\wdg \\  \mbf{0} & \mbf{0} & \mbs{\phi}^\wdg \ebm \right)  = \bbm \mbf{C}(\mbs{\phi})  & \mbf{0} & (\mbf{J}(\mbs{\phi})\mbs{\rho})^\wdg \mbf{C}(\mbs{\phi}) \\ \mbf{0} & \mbf{C}(\mbs{\phi}) & (\mbf{J}(\mbs{\phi})\mbs{\nu})^\wdg  \mbf{C}(\mbs{\phi}) \\ \mbf{0} & \mbf{0} & \mbf{C}(\mbs{\phi}) \ebm,
\end{equation}
where we have extended the adjoint $\Wdg$ operator appropriately.  
The left Jacobian of $SE_2(3)$ is given by
\begin{equation}\label{eq:SE23Jac}
\Jbig(\mbs{\xi}) = \bbm \mbf{J}(\mbs{\phi})  & \mbf{0} & \mbf{Q}(\mbs{\phi},\mbs{\rho}) \\ \mbf{0} & \mbf{J}(\mbs{\phi}) & \mbf{Q}(\mbs{\phi},\mbs{\nu})  \\ \mbf{0} & \mbf{0} & \mbf{J}(\mbs{\phi}) \ebm,
\end{equation}
where $\mbf{Q}(\mbs{\phi},\cdot)$ is provided in~\eqref{eq:Q}.  

Given the similarity to $SE(3)$, the following corollaries to Theorems~\ref{thm:bbse3so3} and~\ref{thm:bbadse3so3} should not be a surprise:

\begin{corollary}\label{thm:bbse23so3}
$SE_2(3)$: Following on to Theorem~\ref{thm:bbse3so3}, given the building-block functions in~\eqref{eq:bb} we have that
\begin{equation}
\BG_\ell(\mbs{\xi}^\wdg) = \bbm \BG_\ell(\mbs{\phi}^\wdg) & \Bg_\ell(\mbs{\phi}^\wdg, \mbs{\nu}) & \Bg_\ell(\mbs{\phi}^\wdg, \mbs{\rho}) \\ \mbf{0}^T & \frac{1}{\ell!} & 0 \\ \mbf{0}^T & 0 & \frac{1}{\ell!} \ebm.
\end{equation}
In other words, for all $\ell$ the $SE_2(3)$ building blocks can be constructed from the $SO(3)$ building blocks.
\end{corollary}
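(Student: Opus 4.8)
The plan is to prove Corollary~\ref{thm:bbse23so3} by induction on $\ell$, following the proof of Theorem~\ref{thm:bbse3so3} essentially verbatim. The only structural difference between $SE(3)$ and $SE_2(3)$ at the level of the $\wdg$ operator is that there is now a second column slot (the $\mbs{\nu}$-channel alongside the $\mbs{\rho}$-channel), both sitting above a $2 \times 2$ zero block in the lower-right corner of $\mbs{\xi}^\wdg$, so the induction should carry through with only the bookkeeping of two parallel column blocks instead of one.

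\textbf{Base case.} For $\ell = 0$ I would note that $\mbs{\xi}^\wdg$ for $SE_2(3)$ is block upper triangular with $\mbs{\phi}^\wdg$ in the top-left $3 \times 3$ block and a zero $2 \times 2$ block in the lower-right, so $\mbs{\xi}^{\wdg^m}$ has $\mbs{\phi}^{\wdg^m}$ in its top-left block, $\mbs{\phi}^{\wdg^{m-1}}\mbs{\nu}$ and $\mbs{\phi}^{\wdg^{m-1}}\mbs{\rho}$ in its two top column blocks for $m \ge 1$, and a zero (resp.\ identity) lower-right block for $m \ge 1$ (resp.\ $m = 0$). Summing the series that defines $\BG_0(\mbs{\xi}^\wdg)$ then produces $\BG_0(\mbs{\phi}^\wdg) = \mbf{C}(\mbs{\phi})$ in the diagonal block, $\Bg_0(\mbs{\phi}^\wdg, \mbs{\nu})$ and $\Bg_0(\mbs{\phi}^\wdg, \mbs{\rho})$ in the column blocks, and $1 = 1/0!$ in the two lower-right diagonal entries. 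By Remark~\ref{thm:remark1}, $\Bg_0(\mbs{\phi}^\wdg, \mbs{\rho}) = \BG_1(\mbs{\phi}^\wdg)\mbs{\rho} = \mbf{J}(\mbs{\phi})\mbs{\rho}$ and likewise $\Bg_0(\mbs{\phi}^\wdg, \mbs{\nu}) = \mbf{J}(\mbs{\phi})\mbs{\nu}$, which matches the displayed extended-pose matrix $\Tsmall(\mbs{\xi})$.

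\textbf{Inductive step.} Assuming the claim at level $\ell$, I would apply Lemma~\ref{thm:bbrecursive} in the form $\BG_{\ell+1}(\mbs{\xi}^\wdg) = \int_0^1 \alpha^\ell \, \BG_\ell(\alpha\mbs{\xi}^\wdg)\, d\alpha$, substitute the inductive hypothesis evaluated at $\alpha\mbs{\xi}^\wdg$ (so $\mbs{\phi}^\wdg \to \alpha\mbs{\phi}^\wdg$, $\mbs{\nu} \to \alpha\mbs{\nu}$, $\mbs{\rho} \to \alpha\mbs{\rho}$), and integrate block by block. Each of the five nonzero blocks is resolved by the corresponding case of Lemma~\ref{thm:bbrecursive}: $\int_0^1 \alpha^\ell \BG_\ell(\alpha\mbs{\phi}^\wdg)\,d\alpha = \BG_{\ell+1}(\mbs{\phi}^\wdg)$ for the diagonal block, $\int_0^1 \alpha^\ell \Bg_\ell(\alpha\mbs{\phi}^\wdg, \alpha\mbs{\nu})\,d\alpha = \Bg_{\ell+1}(\mbs{\phi}^\wdg, \mbs{\nu})$ and its $\mbs{\rho}$ analogue for the two column blocks, and $\int_0^1 \alpha^\ell (1/\ell!)\,d\alpha = 1/(\ell+1)!$ for the two remaining diagonal entries. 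This reproduces the claimed form at level $\ell+1$, closing the induction.

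\textbf{Main obstacle.} There is essentially no genuine obstacle: the argument is a direct extension of the proof of Theorem~\ref{thm:bbse3so3}, with two column channels tracked in parallel rather than one, and it is strictly simpler than the $\mbox{Ad}(SE(3))$ case of Theorem~\ref{thm:bbadse3so3}. The one point deserving a moment's care is verifying that the extended $\wdg$ operator for $SE_2(3)$ is indeed block upper triangular with the stated $2 \times 2$ zero corner, so that the $\mbs{\nu}$- and $\mbs{\rho}$-channels decouple in both the base case and the integral step; granting that, the rest follows mechanically from Lemma~\ref{thm:bbrecursive} and Remark~\ref{thm:remark1}.
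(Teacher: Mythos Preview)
Your proposal is correct and follows exactly the approach the paper intends: the paper omits the proof of this corollary, stating only that it is similar to the proof of Theorem~\ref{thm:bbse3so3}, and your induction on $\ell$ with the base case verified against $\Tsmall(\mbs{\xi})$ and the inductive step handled blockwise via Lemma~\ref{thm:bbrecursive} is precisely that argument with the extra $\mbs{\nu}$-column carried along in parallel.
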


\begin{corollary}\label{thm:bbadse23so3}
$\mbox{\normalfont Ad}(SE_2(3))$: Following on to Theorem~\ref{thm:bbadse3so3}, given the building-block functions in~\eqref{eq:bb} we have that
\begin{equation}
\BG_\ell(\mbs{\xi}^\Wdg) = \bbm \BG_\ell(\mbs{\phi}^\wdg) & \mbf{0} & \BG_\ell(\mbs{\phi}^\wdg, \mbs{\rho}^\wdg,\mbs{\phi}^\wdg) \\ \mbf{0} & \BG_\ell(\mbs{\phi}^\wdg) & \BG_\ell(\mbs{\phi}^\wdg, \mbs{\nu}^\wdg,\mbs{\phi}^\wdg) \\ \mbf{0} & \mbf{0} & \BG_\ell(\mbs{\phi}^\wdg) \ebm.
\end{equation}
In other words, for all $\ell$ the $\mbox{\normalfont Ad}(SE_2(3))$ building blocks can be constructed from the $SO(3)$ building blocks.
\end{corollary}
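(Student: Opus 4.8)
The plan is to mirror the inductive proof of Theorem~\ref{thm:bbadse3so3} almost verbatim, exploiting the fact that the $SE_2(3)$ adjoint algebra element $\mbs{\xi}^\Wdg$ displayed above is block upper triangular with all three diagonal blocks equal to $\mbs{\phi}^\wdg$, and with the only nonzero off-diagonal blocks being $\mbs{\rho}^\wdg$ in the $(1,3)$ position and $\mbs{\nu}^\wdg$ in the $(2,3)$ position; in particular the $(1,2)$ block is zero, and this sparsity pattern is preserved under all powers of $\mbs{\xi}^\Wdg$, hence under $\BG_0(\mbs{\xi}^\Wdg) = \sum_m \frac{1}{m!}\mbs{\xi}^{\Wdg^m}$ and every $\BG_\ell(\mbs{\xi}^\Wdg)$. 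First I would establish the base case $\ell=0$: the explicit formula for $\Tbig(\mbs{\xi}) = \exp(\mbs{\xi}^\Wdg)$ given above already has the claimed $3\times 3$ block shape, so it suffices to identify each block with an $\ell=0$ building block. The diagonal blocks are $\mbf{C}(\mbs{\phi}) = \BG_0(\mbs{\phi}^\wdg)$ by definition; the upper-right blocks $(\mbf{J}(\mbs{\phi})\mbs{\rho})^\wdg \mbf{C}(\mbs{\phi})$ and $(\mbf{J}(\mbs{\phi})\mbs{\nu})^\wdg \mbf{C}(\mbs{\phi})$ equal $\BG_0(\mbs{\phi}^\wdg,\mbs{\rho}^\wdg,\mbs{\phi}^\wdg)$ and $\BG_0(\mbs{\phi}^\wdg,\mbs{\nu}^\wdg,\mbs{\phi}^\wdg)$ by Lemma~\ref{thm:bbinit1} (using that $SO(3)$ is self-adjoint, so $\wdg = \Wdg$); and the $(1,2)$, $(2,1)$, $(3,1)$, $(3,2)$ blocks are $\mbf{0}$.

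For the inductive step, I would assume the block identity holds at level $\ell$ and apply the recursive integral relation $\BG_{\ell+1}(\mbs{\xi}^\Wdg) = \int_0^1 \alpha^\ell\, \BG_\ell(\alpha\mbs{\xi}^\Wdg)\,d\alpha$ from Lemma~\ref{thm:bbrecursive}. Substituting the level-$\ell$ block form, distributing $\alpha^\ell$ and the integral over the blocks, and applying Lemma~\ref{thm:bbrecursive} once more — the relation $\bg_{\ell+1}(x) = \int_0^1 \alpha^\ell \bg_\ell(\alpha x)\,d\alpha$ on the diagonal blocks $\BG_\ell(\mbs{\phi}^\wdg)$, and the three-argument relation $\bg_{\ell+1}(x,y,z) = \int_0^1 \alpha^\ell \bg_\ell(\alpha x,\alpha y,\alpha z)\,d\alpha$ on the off-diagonal blocks $\BG_\ell(\mbs{\phi}^\wdg,\mbs{\rho}^\wdg,\mbs{\phi}^\wdg)$ and $\BG_\ell(\mbs{\phi}^\wdg,\mbs{\nu}^\wdg,\mbs{\phi}^\wdg)$ — converts each block into the corresponding level-$(\ell+1)$ building block, while the zero blocks remain zero. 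This closes the induction and gives the claim for all $\ell$. Equivalently, one can simply observe that the result follows from Theorem~\ref{thm:bbadse3so3} by "stacking" the independent $\mbs{\rho}$ and $\mbs{\nu}$ channels, which never interact because the $(1,2)$ block of $\mbs{\xi}^\Wdg$ vanishes.

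I do not anticipate a genuine obstacle; the proof is essentially bookkeeping identical to the $SE(3)$ and $\mbox{Ad}(SE(3))$ cases. The only point that deserves a moment of care is the base case, namely confirming both that the two off-diagonal blocks of $\exp(\mbs{\xi}^\Wdg)$ are exactly $\BG_0(\mbs{\phi}^\wdg,\cdot^\wdg,\mbs{\phi}^\wdg)$ — which is precisely the content of Lemma~\ref{thm:bbinit1} — and that the $(1,2)$ block is $\mbf{0}$ rather than some cross term mixing $\mbs{\rho}$ and $\mbs{\nu}$, which follows from the sparsity structure of $\mbs{\xi}^\Wdg$ being stable under matrix multiplication.
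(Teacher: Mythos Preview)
Your proposal is correct and follows exactly the approach the paper intends: the paper explicitly omits the proof, noting its similarity to Theorems~\ref{thm:bbse3so3} and~\ref{thm:bbadse3so3}, and your induction-on-$\ell$ argument using Lemma~\ref{thm:bbrecursive} for the step and Lemma~\ref{thm:bbinit1} for the base case is precisely that omitted proof.
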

We omit the proofs due to their similarity to the ones for Theorems~\ref{thm:bbse3so3} and~\ref{thm:bbadse3so3}.

%%% SGal(3)

\subsection{Galilean Group, $SGal(3)$}

In a sense, the special Galilean group, $SGal(3)$, further extends $SE_2(3)$ by incorporating a scalar time, $\tau$, in the pose \citep{giefer21, kellyAllGalileanGroup2024a, kellyMakingSpaceTime2024, fornasierEquivariantSymmetriesAided2024, delamaEquivariantIMUPreintegration2024}.  This group can also be used in inertial navigation problems and can be used to represent both spatial and temporal uncertainty.  

We rely on \citet{kellyAllGalileanGroup2024a} for the background.  As with $SE_2(3)$, we have a $5 \times 5$ pose matrix:
\begin{equation}
\mbf{T}(\mbs{\xi}) = \exp\left( \mbs{\xi}^\wdg \right) = \exp \left( \bbm \mbs{\rho}  \\ \mbs{\nu} \\ \mbs{\phi} \\ \tau \ebm^\wdg \right) = \exp \left( \bbm \mbs{\phi}^\wdg & \mbs{\nu} & \mbs{\rho} \\ \mbf{0}^T & 0 & \tau \\ \mbf{0}^T & 0 & 0 \ebm \right) = \bbm \mbf{C}(\mbs{\phi}) & \mbf{J}(\mbs{\phi}) \mbs{\nu} & \mbf{J}(\mbs{\phi}) \mbs{\rho} +   \mbf{N}(\mbs{\phi}) \mbs{\nu} \, \tau \\ \mbf{0}^T & 1 & \tau \\ \mbf{0}^T & 0 & 1 \ebm,
\end{equation}
where $\mbf{N}(\mbs{\phi}) = \mbs{\Gamma}_2(\mbs{\phi})$ and we have again implicitly defined a new version of $\wdg$ for $SGal(3)$.  The $10 \times 10$ adjoint for $SE_2(3)$ is given by
\begin{multline}
\Tbig(\mbs{\xi}) = \exp\left( \mbs{\xi}^\Wdg \right) = \exp\left( \bbm \mbs{\rho} \\ \mbs{\nu} \\ \mbs{\phi} \\ \tau \ebm^\Wdg \right) = \exp\left( \bbm \mbs{\phi}^\wdg  & -\mbf{1} \tau & \mbs{\rho}^\wdg & \mbs{\nu}^\wdg \\ \mbf{0} & \mbs{\phi}^\wdg & \mbs{\nu}^\wdg & \mbf{0} \\  \mbf{0} & \mbf{0} & \mbs{\phi}^\wdg & \mbf{0} \\ \mbf{0}^T & \mbf{0}^T & \mbf{0}^T & 0 \ebm \right) \\ 
= \bbm \mbf{C}(\mbs{\phi})  & -\mbf{C}(\mbs{\phi})\tau & \left(\mbf{J}(\mbs{\phi}) \mbs{\rho} -  (\mbf{J}(\mbs{\phi}) - \mbf{N}(\mbs{\phi}) )\mbs{\nu} \tau \right)^\wdg \mbf{C}(\mbs{\phi}) & \mbf{J}(\mbs{\phi}) \mbs{\nu} \\ \mbf{0} & \mbf{C}(\mbs{\phi}) & (\mbf{J}(\mbs{\phi})\mbs{\nu})^\wdg  \mbf{C}(\mbs{\phi}) & \mbf{0}  \\ \mbf{0} & \mbf{0} & \mbf{C}(\mbs{\phi}) & \mbf{0} \\ \mbf{0}^T & \mbf{0}^T & \mbf{0}^T & 1 \ebm,
\end{multline}
where we have extended the adjoint $\Wdg$ operator appropriately.  

With this setup, we can now state two more of our contributions:
\begin{theorem}\label{thm:bbgal3so3}
$SGal(3)$: Given the building-block functions in~\eqref{eq:bb}, we have that
\begin{equation}
\BG_\ell(\mbs{\xi}^\wdg) = \bbm \BG_\ell(\mbs{\phi}^\wdg) & \Bg_\ell(\mbs{\phi}^\wdg, \mbs{\nu}) &\Bg_\ell(\mbs{\phi}^\wdg, \mbs{\rho}) + \Bg_\ell(\mbs{\phi}^\wdg, \mbs{\nu}, \tau) \\ \mbf{0}^T & \frac{1}{\ell!} & \frac{\tau}{(\ell+1)!} \\ \mbf{0}^T & 0 & \frac{1}{\ell !} \ebm.
\end{equation}
In other words, for all $\ell$ the $SGal(3)$ building blocks can be constructed from the $SO(3)$ building blocks.
\end{theorem}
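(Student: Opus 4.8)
The plan is to prove this by induction on $\ell$, mirroring the arguments for Theorems~\ref{thm:bbse3so3} and~\ref{thm:bbadse3so3} but now over the block structure of the $SGal(3)$ algebra element $\mbs{\xi}^\wdg$. For the base case $\ell=0$, I would start from $\BG_0(\mbs{\xi}^\wdg)=\exp(\mbs{\xi}^\wdg)=\Tsmall(\mbs{\xi})$ and match the known closed form of the $SGal(3)$ pose matrix block by block. The $(1,1)$ block is $\BG_0(\mbs{\phi}^\wdg)=\mbf{C}(\mbs{\phi})$ by definition; the $(1,2)$ block is $\Bg_0(\mbs{\phi}^\wdg,\mbs{\nu})=\BG_1(\mbs{\phi}^\wdg)\,\mbs{\nu}=\mbf{J}(\mbs{\phi})\,\mbs{\nu}$ by Remark~\ref{thm:remark1}; and the $(1,3)$ block is $\Bg_0(\mbs{\phi}^\wdg,\mbs{\rho})+\Bg_0(\mbs{\phi}^\wdg,\mbs{\nu},\tau)=\mbf{J}(\mbs{\phi})\,\mbs{\rho}+\BG_2(\mbs{\phi}^\wdg)\,\mbs{\nu}\,\tau=\mbf{J}(\mbs{\phi})\,\mbs{\rho}+\mbf{N}(\mbs{\phi})\,\mbs{\nu}\,\tau$, again by Remark~\ref{thm:remark1} together with the identification $\mbf{N}(\mbs{\phi})=\BG_2(\mbs{\phi}^\wdg)$. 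The remaining scalar entries are $\tfrac{1}{0!}=1$ in positions $(2,2)$ and $(3,3)$ and $\tfrac{\tau}{1!}=\tau$ in position $(2,3)$, matching the $\ell=0$ instantiation of the claimed matrix.

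For the inductive step I would assume the formula holds at level $\ell$ and invoke the recursive integral $\BG_{\ell+1}(\mbs{\xi}^\wdg)=\int_0^1\alpha^\ell\,\BG_\ell(\alpha\mbs{\xi}^\wdg)\,d\alpha$ from Lemma~\ref{thm:bbrecursive}. Since $\wdg$ is linear, substituting $\alpha\mbs{\xi}$ for $\mbs{\xi}$ scales each of $\mbs{\phi},\mbs{\nu},\mbs{\rho},\tau$ by $\alpha$, so the inductive hypothesis supplies a closed form for $\BG_\ell(\alpha\mbs{\xi}^\wdg)$ that I integrate against $\alpha^\ell$ one block at a time. The three nontrivial entries in the first block row collapse via the $\bg_\ell(x)$, $\bg_\ell(x,y)$, and $\bg_\ell(x,y,\tau)$ recursions of Lemma~\ref{thm:bbrecursive} (using linearity of the integral to split the sum in the $(1,3)$ slot) to $\BG_{\ell+1}(\mbs{\phi}^\wdg)$, $\Bg_{\ell+1}(\mbs{\phi}^\wdg,\mbs{\nu})$, and $\Bg_{\ell+1}(\mbs{\phi}^\wdg,\mbs{\rho})+\Bg_{\ell+1}(\mbs{\phi}^\wdg,\mbs{\nu},\tau)$, respectively. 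The scalar blocks reduce to elementary integrals: $\int_0^1\alpha^\ell\,\tfrac{1}{\ell!}\,d\alpha=\tfrac{1}{(\ell+1)!}$ for positions $(2,2)$ and $(3,3)$, while in position $(2,3)$ the entry $\tfrac{\tau}{(\ell+1)!}$ picks up an extra factor $\alpha$ from the $\tau\mapsto\alpha\tau$ dilation, giving $\int_0^1\alpha^{\ell+1}\,\tfrac{\tau}{(\ell+1)!}\,d\alpha=\tfrac{\tau}{(\ell+2)!}$, which is exactly the $(2,3)$ entry at level $\ell+1$. Assembling these blocks yields the claimed formula for $\ell+1$, closing the induction.

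I do not expect a genuine obstacle here: the structure is identical to the two preceding theorems, and the only step calling for care is the $(2,3)$ scalar entry, where one must remember that the explicit $\tau$ sitting inside the level-$\ell$ matrix is itself dilated, so the power of $\alpha$ in that integrand is $\ell+1$ rather than $\ell$, shifting the resulting factorial accordingly. For completeness I would also note that the $\ell=0$ closed form of the $SGal(3)$ pose matrix used in the base case — in particular the appearance of $\mbf{N}(\mbs{\phi})$ in the upper-right block — is the one recorded by \citet{kellyAllGalileanGroup2024a}, so no fresh matrix-exponential computation is needed.
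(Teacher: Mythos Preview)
Your proposal is correct and follows exactly the approach the paper intends: the paper explicitly omits this proof as ``similar to the one for the next theorem,'' i.e., induction on $\ell$ with the $\ell=0$ case matched to the known $SGal(3)$ pose matrix via Remark~\ref{thm:remark1} and the inductive step handled blockwise through Lemma~\ref{thm:bbrecursive}. Your treatment of the $(2,3)$ scalar entry, noting that the $\tau\mapsto\alpha\tau$ dilation bumps the integrand's power of $\alpha$ to $\ell+1$, is exactly the one subtlety worth flagging, and you have it right.
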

We omit the proof as it is similar to the one for the next theorem.
\smallskip

\begin{theorem}\label{thm:bbadgal3so3}
$\mbox{\normalfont Ad}(SGal(3))$: Given the building-block functions in~\eqref{eq:bb}, we have that
\begin{equation}
\BG_\ell(\mbs{\xi}^\Wdg) = \bbm \BG_\ell(\mbs{\phi}^\wdg) & -\BG_\ell(\mbs{\phi}^\wdg, \tau) & \BG_\ell(\mbs{\phi}^\wdg, \mbs{\rho}^\wdg, \mbs{\phi}^\wdg) - \BG_\ell(\mbs{\phi}^\wdg, \mbs{\nu}^\wdg, \mbs{\phi}^\wdg, \tau) & \Bg_\ell(\mbs{\phi}^\wdg, \mbs{\nu}) \\ \mbf{0} & \BG_\ell(\mbs{\phi}^\wdg) & \BG_\ell(\mbs{\phi}^\wdg, \mbs{\nu}^\wdg, \mbs{\phi}^\wdg) & \mbf{0} \\ \mbf{0} & \mbf{0} &  \BG_\ell(\mbs{\phi}^\wdg) & \mbf{0} \\ \mbf{0}^T & \mbf{0}^T & \mbf{0}^T & \frac{1}{\ell !} \ebm.
\end{equation}
In other words, for all $\ell$ the $\mbox{\normalfont Ad}(SGal(3))$ building blocks can be constructed from the $SO(3)$ building blocks.
\end{theorem}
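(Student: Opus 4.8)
The plan is to prove this exactly as Theorem~\ref{thm:bbadse3so3} was proved: by induction on $\ell$, using the recursive integral of Lemma~\ref{thm:bbrecursive} blockwise for the inductive step and closed-form $\ell=0$ identities for the base case. The matrix is $10\times 10$ and upper block triangular, so the induction need only track the entries in positions $(1,2)$, $(1,3)$, $(1,4)$, $(2,3)$, together with the three diagonal $\mbf{C}$-blocks and the scalar corner; all other blocks are identically zero at every level.

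For the base case, $\BG_0(\mbs{\xi}^\Wdg)=\sum_{m\ge 0}\tfrac1{m!}\mbs{\xi}^{\Wdg^m}=\exp(\mbs{\xi}^\Wdg)=\Tbig(\mbs{\xi})$ by definition, so the task is to match the building-block expressions in the statement to the blocks of the Galilean adjoint $\Tbig(\mbs{\xi})$ written out above. The diagonal blocks are $\BG_0(\mbs{\phi}^\wdg)=\mbf{C}(\mbs{\phi})$ and the corner is $\tfrac1{0!}=1$. The $(1,4)$ block is $\Bg_0(\mbs{\phi}^\wdg,\mbs{\nu})=\BG_1(\mbs{\phi}^\wdg)\mbs{\nu}=\mbf{J}(\mbs{\phi})\mbs{\nu}$ by Remark~\ref{thm:remark1}. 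The $(1,2)$ block follows from the elementary identity $\bg_0(x,\tau)=\sum_{m\ge 0}\tfrac{m+1}{(m+1)!}x^m\tau=\sum_{m\ge 0}\tfrac1{m!}x^m\tau$, i.e.\ $\BG_0(\mbs{\phi}^\wdg,\tau)=\mbf{C}(\mbs{\phi})\,\tau$. The $(2,3)$ block and the first summand of the $(1,3)$ block are the $\ell=0$ identity $\BG_0(\mbs{\phi}^\wdg,\mbs{\nu}^\wdg,\mbs{\phi}^\wdg)=(\mbf{J}(\mbs{\phi})\mbs{\nu})^\wdg\mbf{C}(\mbs{\phi})$ (and the same with $\mbs{\nu}$ replaced by $\mbs{\rho}$), as established in the proof of Theorem~\ref{thm:bbadse3so3} via Lemma~\ref{thm:bbinit1}.

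The one genuinely new ingredient, and the main obstacle, is a closed form for the four-argument building block at $\ell=0$, which I would isolate as a short lemma in the spirit of Lemma~\ref{thm:bbinit1}: $\BG_0(\mbs{\phi}^\wdg,\mbs{\nu}^\wdg,\mbs{\phi}^\wdg,\tau)=\bigl((\mbf{J}(\mbs{\phi})-\mbf{N}(\mbs{\phi}))\mbs{\nu}\bigr)^\wdg\mbf{C}(\mbs{\phi})\,\tau$, where $\mbf{N}(\mbs{\phi})=\BG_2(\mbs{\phi}^\wdg)$. Granting this, the $(1,3)$ entry of the claimed matrix collapses, by linearity of $(\cdot)^\wdg$, to $\bigl(\mbf{J}(\mbs{\phi})\mbs{\rho}-(\mbf{J}(\mbs{\phi})-\mbf{N}(\mbs{\phi}))\mbs{\nu}\tau\bigr)^\wdg\mbf{C}(\mbs{\phi})$, which is exactly the $(1,3)$ block of $\Tbig(\mbs{\xi})$, finishing the base case. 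To prove the four-argument identity I would either expand $\sum_{m,n}\tfrac{m+1}{(m+n+2)!}\mbs{\phi}^{\wdg^m}\mbs{\nu}^\wdg\mbs{\phi}^{\wdg^n}\tau$ and reduce it with the product/commutator manipulations of Lemma~\ref{thm:product}, tracking how the extra weight $m+1$ relative to the plain three-argument block converts $\mbf{J}(\mbs{\phi})$ into $\mbf{J}(\mbs{\phi})-\mbf{N}(\mbs{\phi})$, or read the identity directly off the $(1,3)$ block of $\exp(\mbs{\xi}^\Wdg)$ computed from the Galilean adjoint algebra element, which already displays the combination $\mbf{J}-\mbf{N}$.

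For the inductive step, assume the formula at level $\ell$ and apply $\BG_{\ell+1}(\mbs{\xi}^\Wdg)=\int_0^1\alpha^\ell\,\BG_\ell(\alpha\mbs{\xi}^\Wdg)\,d\alpha$ from Lemma~\ref{thm:bbrecursive}. Since $\mbs{\xi}^\Wdg\mapsto\alpha\mbs{\xi}^\Wdg$ scales each of $\mbs{\phi}^\wdg$, $\mbs{\nu}^\wdg$, $\mbs{\rho}^\wdg$, and $\tau$ by $\alpha$, every nonzero block of $\alpha^\ell\,\BG_\ell(\alpha\mbs{\xi}^\Wdg)$ is $\alpha^\ell$ times a level-$\ell$ building block of uniformly scaled arguments --- of the templates $\bg_\ell(x)$, $\bg_\ell(x,\tau)$, $\bg_\ell(x,y)$, $\bg_\ell(x,y,z)$, and $\bg_\ell(x,y,z,\tau)$ --- except for the constant $\tfrac1{\ell!}$ in the corner. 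Integrating entrywise, linearity handles the difference in the $(1,3)$ slot, $\int_0^1\alpha^\ell\,d\alpha=\tfrac1{\ell+1}$ handles the corner, and the matching recursion of Lemma~\ref{thm:bbrecursive} for each template promotes every remaining block from level $\ell$ to level $\ell+1$; this reproduces the claimed matrix with $\ell$ replaced by $\ell+1$, closing the induction. Apart from the four-argument closed-form identity, the entire argument is the same block bookkeeping used for Theorems~\ref{thm:bbse3so3} and~\ref{thm:bbadse3so3}.
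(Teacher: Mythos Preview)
Your proposal is correct and follows essentially the same induction-on-$\ell$ structure as the paper's proof, with the same blockwise verification of the base case and the same use of Lemma~\ref{thm:bbrecursive} in the inductive step. The four-argument $\ell=0$ identity you single out as the main obstacle is exactly Lemma~\ref{thm:bbinit2} in the appendix, stated there as $\BG_0(\mbf{x}^\wdg,\mbf{y}^\wdg,\mbf{x}^\wdg,\tau)=\bigl(\Bg_0(\mbf{x}^\Wdg,\mbf{y})-\Bg_1(\mbf{x}^\Wdg,\mbf{y})\bigr)^\wdg\BG_0(\mbf{x}^\wdg)\,\tau$; the paper proves it not by series manipulation or by reading off $\exp(\mbs{\xi}^\Wdg)$, but by a double-integral argument that reduces to the Beta-type identity of Lemma~\ref{thm:intfact2}, which is cleaner than either of your proposed routes and avoids any circularity with the closed form of $\Tbig(\mbs{\xi})$.
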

\begin{proof}
Similarly to the previous theorems, we will use a proof by induction.  For the $\ell=0$ case we must have that
\begin{multline}
\BG_0(\mbs{\xi}^\Wdg) = \bbm \BG_0(\mbs{\phi}^\wdg) & -\BG_0(\mbs{\phi}^\wdg, \tau) & \BG_0(\mbs{\phi}^\wdg, \mbs{\rho}^\wdg, \mbs{\phi}^\wdg) - \BG_0(\mbs{\phi}^\wdg, \mbs{\rho}^\wdg, \mbs{\phi}^\wdg, \tau) & \Bg_0(\mbs{\phi}^\wdg, \mbs{\nu}) \\ \mbf{0} & \BG_0(\mbs{\phi}^\wdg) & \BG_0(\mbs{\phi}^\wdg, \mbs{\nu}^\wdg, \mbs{\phi}^\wdg) & \mbf{0} \\ \mbf{0} & \mbf{0} &  \BG_0(\mbs{\phi}^\wdg) & \mbf{0} \\ \mbf{0}^T & \mbf{0}^T & \mbf{0}^T & 1 \ebm \\ = \bbm \mbf{C}(\mbs{\phi})  & -\mbf{C}(\mbs{\phi})\tau & \left(\mbf{J}(\mbs{\phi}) \mbs{\rho} -  (\mbf{J}(\mbs{\phi}) - \mbf{N}(\mbs{\phi}) )\mbs{\nu} \tau \right)^\wdg \mbf{C}(\mbs{\phi}) & \mbf{J}(\mbs{\phi}) \mbs{\nu} \\ \mbf{0} & \mbf{C}(\mbs{\phi}) & (\mbf{J}(\mbs{\phi})\mbs{\nu})^\wdg  \mbf{C}(\mbs{\phi}) & \mbf{0}  \\ \mbf{0} & \mbf{0} & \mbf{C}(\mbs{\phi}) & \mbf{0} \\ \mbf{0}^T & \mbf{0}^T & \mbf{0}^T & 1 \ebm = \Tbig(\mbs{\xi}).
\end{multline}
Many of the blocks follow from arguments made in previous theorems, so will restrict our explanations to the new cases only.  For the $(1,4)$ block we have
$\Bg_0(\mbs{\phi}^\wdg, \mbs{\nu}) = \BG_{1}(\mbs{\phi}^\wdg) \mbs{\nu} = \mbf{J}(\mbs{\phi}) \mbs{\nu}$ and for the $(1,2)$ block we have
\begin{equation}
\BG_0(\mbs{\phi}^\wdg, \tau)  =  \sum_{m=0}^\infty \frac{m+1}{(m + 1)!} \mbs{\phi}^{\wdg^m} \tau = \sum_{m=0}^\infty \frac{1}{m!} \mbs{\phi}^{\wdg^m} \tau = \mbf{C}(\mbs{\phi}) \tau.
\end{equation}
The first term of the $(1,3)$ block we have seen before and so what remains to show is the second term,
\begin{multline}
\BG_0(\mbs{\phi}^\wdg, \mbs{\nu}^\wdg, \mbs{\phi}^\wdg, \tau)  = \left(\Bg_0 \left(\mbs{\phi}^\Wdg, \mbs{\nu} \right) - \Bg_1 \left(\mbs{\phi}^\Wdg, \mbs{\nu} \right) \right)^\wdg \BG_0(\mbs{\phi}^\wdg) \tau = \left(\left(\BG_1 \left(\mbs{\phi}^\wdg\right) - \BG_2 \left(\mbs{\phi}^\wdg \right) \right) \mbs{\nu} \right)^\wdg \BG_0(\mbs{\phi}^\wdg) \tau \\ = \left((\mbf{J}(\mbs{\phi}) - \mbf{N}(\mbs{\phi}) )\mbs{\nu} \right)^\wdg \mbf{C}(\mbs{\phi}) \tau,
\end{multline}
which follows directly from Lemma~\ref{thm:bbinit2} in Appendix~\ref{sec:ellzero} and Remark~\ref{thm:remark1}.

Next, we assume the relationship holds for $\ell$ and show that it must therefore hold for $\ell + 1$.  Using Lemma~\ref{thm:bbrecursive}  we have
\begin{multline}
\BG_{\ell+1}(\mbs{\xi}^\Wdg) = \int_0^1 \alpha^\ell \, \BG_\ell(\alpha \mbs{\xi}^\Wdg) \, d\alpha \\ 
\hspace*{-0.83in}= \scalebox{0.8}{$\int_0^1 \alpha^\ell \bbm \BG_\ell(\alpha\mbs{\phi}^\wdg) & -\BG_\ell(\alpha\mbs{\phi}^\wdg, \alpha\tau) & \BG_\ell(\alpha\mbs{\phi}^\wdg, \alpha\mbs{\rho}^\wdg, \alpha\mbs{\phi}^\wdg) - \BG_\ell(\alpha\mbs{\phi}^\wdg, \alpha\mbs{\nu}^\wdg, \alpha\mbs{\phi}^\wdg, \alpha\tau) & \Bg_\ell(\alpha\mbs{\phi}^\wdg, \alpha\mbs{\nu}) \\ \mbf{0} & \BG_\ell(\alpha\mbs{\phi}^\wdg) & \BG_\ell(\alpha\mbs{\phi}^\wdg, \alpha\mbs{\nu}^\wdg, \alpha\mbs{\phi}^\wdg) & \mbf{0} \\ \mbf{0} & \mbf{0} &  \BG_\ell(\alpha\mbs{\phi}^\wdg) & \mbf{0} \\ \mbf{0}^T & \mbf{0}^T & \mbf{0}^T & \frac{1}{\ell!} \ebm $}  \\ 
\hspace*{0.1in} = \scalebox{0.7}{$\bbm \int_0^1 \alpha^\ell \BG_\ell(\alpha\mbs{\phi}^\wdg)\, d\alpha & -\int_0^1 \alpha^\ell \BG_\ell(\alpha\mbs{\phi}^\wdg, \alpha\tau)\, d\alpha & \int_0^1 \alpha^\ell \BG_\ell(\alpha\mbs{\phi}^\wdg, \alpha\mbs{\rho}^\wdg,\alpha\mbs{\phi}^\wdg)\, d\alpha - \int_0^1 \alpha^\ell \BG_\ell(\alpha\mbs{\phi}^\wdg, \alpha\mbs{\nu}^\wdg, \alpha\mbs{\phi}^\wdg, \alpha\tau)\, d\alpha & \int_0^1 \alpha^\ell \Bg_\ell(\alpha\mbs{\phi}^\wdg, \alpha\mbs{\nu})\, d\alpha \\ \mbf{0} & \int_0^1 \alpha^\ell \BG_\ell(\alpha\mbs{\phi}^\wdg)\, d\alpha & \int_0^1 \alpha^\ell \BG_\ell(\alpha\mbs{\phi}^\wdg, \alpha\mbs{\nu}^\wdg, \alpha\mbs{\phi}^\wdg)\, d\alpha & \mbf{0} \\ \mbf{0} & \mbf{0} &  \int_0^1 \alpha^\ell \BG_\ell(\alpha\mbs{\phi}^\wdg)\, d\alpha & \mbf{0} \\ \mbf{0}^T & \mbf{0}^T & \mbf{0}^T & \frac{\int_0^1 \alpha^\ell \,d\alpha}{\ell !} \ebm$} \\ 
=  \bbm \BG_{\ell + 1}(\mbs{\phi}^\wdg) & -\BG_{\ell + 1}(\mbs{\phi}^\wdg, \tau) & \BG_{\ell + 1}(\mbs{\phi}^\wdg, \mbs{\rho}^\wdg, \mbs{\phi}^\wdg) - \BG_{\ell + 1}(\mbs{\phi}^\wdg, \mbs{\nu}^\wdg, \mbs{\phi}^\wdg, \tau) & \Bg_{\ell + 1}(\mbs{\phi}^\wdg, \mbs{\nu}) \\ \mbf{0} & \BG_{\ell + 1}(\mbs{\phi}^\wdg) & \BG_{\ell + 1}(\mbs{\phi}^\wdg, \mbs{\nu}^\wdg, \mbs{\phi}^\wdg) & \mbf{0} \\ \mbf{0} & \mbf{0} &  \BG_{\ell + 1}(\mbs{\phi}^\wdg) & \mbf{0} \\ \mbf{0}^T & \mbf{0}^T & \mbf{0}^T & \frac{1}{(\ell+1)!} \ebm.
\end{multline}
Therefore the result holds for all $\ell$.
\end{proof}

Again, an immediate application of Theorem~\ref{thm:bbadgal3so3} is that we can compute the Jacobian of $\mbox{Ad}(SGal(3))$, which is given by
\begin{equation}
\Jbig(\mbs{\xi}) = \BG_1(\mbs{\xi}^\Wdg) = \bbm \BG_1(\mbs{\phi}^\wdg) & -\BG_1(\mbs{\phi}^\wdg, \tau) & \BG_1(\mbs{\phi}^\wdg, \mbs{\rho}^\wdg, \mbs{\phi}^\wdg) - \BG_1(\mbs{\phi}^\wdg, \mbs{\nu}^\wdg, \mbs{\phi}^\wdg, \tau) & \Bg_1(\mbs{\phi}^\wdg, \mbs{\nu}) \\ \mbf{0} & \BG_1(\mbs{\phi}^\wdg) & \BG_1(\mbs{\phi}^\wdg, \mbs{\nu}^\wdg, \mbs{\phi}^\wdg) & \mbf{0} \\ \mbf{0} & \mbf{0} &  \BG_1(\mbs{\phi}^\wdg) & \mbf{0} \\ \mbf{0}^T & \mbf{0}^T & \mbf{0}^T & 1 \ebm.
\end{equation}
We have seen how to compute $\BG_1(\mbs{\phi}^\wdg)$, $\BG_1(\mbs{\phi}^\wdg, \mbs{\rho}^\wdg, \mbs{\phi}^\wdg)$, and $\BG_1(\mbs{\phi}^\wdg, \mbs{\nu}^\wdg, \mbs{\phi}^\wdg)$ previously in the $SE(3)$ and $SE_2(3)$ cases, while the other blocks still require some work.  For the $(1,4)$ block we see that
\begin{multline}\label{eq:SO3_Nnu}
\Bg_1(\mbs{\phi}^\wdg, \mbs{\nu}) = \BG_2(\mbs{\phi}^\wdg) \mbs{\nu} = \mbf{N}(\mbs{\phi}) \mbs{\nu} = \int_0^1 \alpha \mbf{J}(\alpha \mbs{\phi}) \, d\alpha \, \mbs{\nu} = \int_0^1 \alpha \left( j_0(\alpha\phi)  \mbf{1} + \alpha j_1(\alpha\phi) \mbs{\phi}^\wdg + \alpha^2 j_2(\alpha\phi) \mbs{\phi}^{\wdg^2} \right) \, d\alpha \, \mbs{\nu} \\ 
= \left( \left( \int_0^1 \alpha \, d\alpha \right) \mbf{1} +  \left( \int_0^1 \alpha^2 j_1(\alpha\phi) \, d\alpha \right) \mbs{\phi}^\wdg + \left( \int_0^1 \alpha^3 j_2(\alpha\phi) \, d\alpha \right) \mbs{\phi}^{\wdg^2} \right) \mbs{\nu} \\
= \biggl( \underbrace{\frac{1}{2}}_{n_0(\phi)} \mbf{1} + \underbrace{\frac{\phi - \sin\phi}{\phi^3}}_{n_1(\phi)} \mbs{\phi}^\wdg + \underbrace{\frac{\phi^2 + 2 \cos\phi - 2}{2\phi^4}}_{n_2(\phi)}  \mbs{\phi}^{\wdg^2} \biggr) \mbs{\nu},
\end{multline}
where we make note of $n_0(\phi)$, $n_1(\phi)$, and $n_2(\phi)$ for use below.
For the $(1,2)$ block we have
\begin{multline}\label{eq:SGal3_12}
\BG_1(\mbs{\phi}^\wdg, \tau) = \int_0^1 \BG_0(\alpha\mbs{\phi}^\wdg, \alpha\tau) \, d\alpha = \int_0^1 \alpha \BG_0(\alpha\mbs{\phi}^\wdg) \, d\alpha \, \tau \\ = \int_0^1 \alpha \left( c_0(\alpha\phi)\mbf{1} + \alpha c_1(\alpha\phi) \mbs{\phi}^\wdg + \alpha^2 c_2(\alpha\phi) \mbs{\phi}^{\wdg^2} \right) \, d\alpha \, \tau \hspace*{1.6in} \\
= \left( \left( \int_0^1 \alpha \, d\alpha \right) \mbf{1} +  \left( \int_0^1 \alpha^2 c_1(\alpha\phi) \, d\alpha \right) \mbs{\phi}^\wdg + \left( \int_0^1 \alpha^3 c_2(\alpha\phi) \, d\alpha \right) \mbs{\phi}^{\wdg^2} \right) \tau \\
= \left( \frac{1}{2} \mbf{1} + \frac{\sin\phi - \phi \cos\phi}{\phi^3} \mbs{\phi}^\wdg + \frac{\phi^2 - 2\phi\sin\phi - 2\cos\phi + 2}{2\phi^4} \mbs{\phi}^{\wdg^2} \right) \tau.
\end{multline}
Finally, for the $(1,3)$ block we already know how to compute the first term so it remains to compute $\BG_1(\mbs{\phi}^\wdg, \mbs{\nu}^\wdg, \mbs{\phi}^\wdg, \tau)$.  We will once again employ our recursive relationship and write
\begin{equation}
\BG_1(\mbs{\phi}^\wdg, \mbs{\nu}^\wdg, \mbs{\phi}^\wdg, \tau) = \int_0^1 \BG_0(\alpha\mbs{\phi}^\wdg, \alpha\mbs{\nu}^\wdg, \alpha\mbs{\phi}^\wdg, \alpha\tau) \, d\alpha. 
\end{equation}
We will do well to calculate $\BG_0(\mbs{\phi}^\wdg, \mbs{\nu}^\wdg, \mbs{\phi}^\wdg, \tau)$ explicitly first:
\begin{multline}
\BG_0(\mbs{\phi}^\wdg, \mbs{\nu}^\wdg, \mbs{\phi}^\wdg, \tau) = \left((\mbf{J}(\mbs{\phi}) - \mbf{N}(\mbs{\phi}) )\mbs{\nu} \right)^\wdg \mbf{C}(\mbs{\phi}) \tau \\
 = \left( \left( \left( j_0(\phi) - n_0(\phi) \right) \mbf{1} + \left( j_1(\phi) - n_1(\phi) \right) \mbs{\phi}^\wdg + \left( j_2(\phi) - n_2(\phi) \right) \mbs{\phi}^{\wdg^2} \right) \mbs{\nu} \right)^\wdg \\ \times \;\; \left(  c_0(\phi) \mbf{1} + c_1(\phi) \mbs{\phi}^\wdg + c_2(\phi) \mbs{\phi}^{\wdg^2}  \right) \tau.
\end{multline}
Applying Lemma~\ref{thm:product} in Appendix~\ref{sec:identities} to expand and then simplifying some of the coefficients, we find
\begin{multline}
\BG_0(\mbs{\phi}^\wdg, \mbs{\nu}^\wdg, \mbs{\phi}^\wdg, \tau) = \Biggl( \frac{1}{2} \mbs{\nu}^\wdg  + \left(j_1(\phi) - n_1(\phi)\right) \mbs{\phi}^\wdg\mbs{\nu}^\wdg + \left( j_1(\phi) + n_1(\phi) - c_2(\phi) \right) \mbs{\nu}^\wdg \mbs{\phi}^\wdg + \left( j_2(\phi) - n_2(\phi) \right) \mbs{\phi}^{\wdg^2} \mbs{\nu}^\wdg \\ + \left( j_1(\phi) - j_2(\phi) - \frac{1}{2} c_2(\phi) \right) \mbs{\phi}^\wdg \mbs{\nu}^\wdg \mbs{\phi}^\wdg + n_2(\phi) \, \mbs{\nu}^\wdg \mbs{\phi}^{\wdg^2} \\ + \frac{1}{2}\left( \frac{1}{2} n_1(\phi) - n_2(\phi) \right) \left( \mbs{\phi}^{\wdg^2} \mbs{\nu}^\wdg \mbs{\phi}^{\wdg} + \mbs{\phi}^\wdg \mbs{\nu}^\wdg \mbs{\phi}^{\wdg^2} \right) \biggr) \tau.
\end{multline}
And so now finally 
\begin{multline}\label{eq:SGal3_13b}
\BG_1(\mbs{\phi}^\wdg, \mbs{\nu}^\wdg, \mbs{\phi}^\wdg, \tau) = \int_0^1 \BG_0(\alpha\mbs{\phi}^\wdg, \alpha\mbs{\nu}^\wdg, \alpha\mbs{\phi}^\wdg, \alpha\tau) \, d\alpha \\
= \Biggl( \frac{1}{2} \left( \int_0^1 \alpha^2 \, d\alpha \right) \mbs{\nu}^\wdg + \left( \int_0^1 \alpha^3 \left(j_1(\alpha\phi) - n_1(\alpha\phi)\right) d\alpha \right) \mbs{\phi}^\wdg\mbs{\nu}^\wdg \hspace*{1.25in} \\  
+ \left( \int_0^1 \alpha^3 \left( j_1(\alpha\phi) + n_1(\alpha\phi) - c_2(\alpha\phi) \right)  d\alpha \right) \mbs{\nu}^\wdg \mbs{\phi}^\wdg 
+ \left( \int_0^1 \alpha^4 \left( j_2(\alpha\phi) - n_2(\alpha\phi) \right) d\alpha \right) \mbs{\phi}^{\wdg^2} \mbs{\nu}^\wdg \\ + \left( \int_0^1 \alpha^4  \left( j_1(\alpha\phi) - j_2(\alpha\phi) - \frac{1}{2} c_2(\alpha\phi) \right)   d\alpha \right) \mbs{\phi}^\wdg \mbs{\nu}^\wdg \mbs{\phi}^\wdg 
+ \left( \int_0^1 \alpha^4  n_2(\alpha\phi) d\alpha \right) \mbs{\nu}^\wdg \mbs{\phi}^{\wdg^2} \\
+ \frac{1}{2} \left( \int_0^1 \alpha^5 \left( \frac{1}{2} n_1(\alpha\phi) - n_2(\alpha\phi) \right)  d\alpha \right) \left( \mbs{\phi}^{\wdg^2} \mbs{\nu}^\wdg \mbs{\phi}^{\wdg} + \mbs{\phi}^\wdg \mbs{\nu}^\wdg \mbs{\phi}^{\wdg^2} \right) \Biggr) \tau \\
= \Biggl( \frac{1}{6} \mbs{\nu}^\wdg + \frac{2-\phi\sin\phi - 2\cos\phi}{\phi^4}\mbs{\phi}^\wdg\mbs{\nu}^\wdg + \frac{\phi^2 + 2\cos\phi - 2}{2\phi^4}  \mbs{\nu}^\wdg \mbs{\phi}^\wdg + \frac{\phi^3 + 6\phi + 6\phi\cos\phi - 12 \sin\phi}{6\phi^5} \mbs{\phi}^{\wdg^2} \mbs{\nu}^\wdg \\
+ \frac{12\sin\phi - 12\phi\cos\phi - 3\phi^2\sin\phi - \phi^3}{6\phi^5} \mbs{\phi}^\wdg \mbs{\nu}^\wdg \mbs{\phi}^\wdg + \frac{\phi^3 - 6\phi + 6\sin\phi}{6\phi^5} \mbs{\nu}^\wdg \mbs{\phi}^{\wdg^2} \\ 
+ \frac{4+ \phi^2 + \phi^2\cos\phi  - 4\phi\sin\phi - 4\cos\phi}{4\phi^6} \left( \mbs{\phi}^{\wdg^2} \mbs{\nu}^\wdg \mbs{\phi}^{\wdg} + \mbs{\phi}^\wdg \mbs{\nu}^\wdg \mbs{\phi}^{\wdg^2} \right) \biggr) \tau,
\end{multline}
which exactly matches the results of \citet{kellyAllGalileanGroup2024a} and \citet{fornasierEquivariantSymmetriesAided2024} upon noting that $\mbs{\phi}^{\wdg^2} \mbs{\nu}^\wdg \mbs{\phi}^{\wdg} = \mbs{\phi}^\wdg \mbs{\nu}^\wdg \mbs{\phi}^{\wdg^2}$ and $\mbs{\phi}^{\wdg^2} \mbs{\nu}^\wdg \mbs{\phi}^{\wdg^2} = -\phi^2 \mbs{\phi}^\wdg \mbs{\nu}^\wdg \mbs{\phi}^{\wdg}$.  However, in our case we came to the result without expanding infinite series and recognizing patterns therein; rather, we computed seven integrals.

%%% Sim(3)

\subsection{Similarity Transformations, $Sim(3)$}

The group of similarity transformations, $Sim(3)$, extends $SE(3)$ in a different way by incorporating an unknown scale parameter, $\lambda$.  This group is used in computer vision problems.  We follow \citet{eadeLieGroupsComputer2014, eadeLieGroups2D2018} for the background.

Our pose matrix is still $4 \times 4$ but modified to include the scale parameter, $\lambda$:
\begin{equation}
\Tsmall(\mbs{\xi}) = \exp\left(\mbs{\xi}^\wdg \right) = \exp\left( \bbm \mbs{\rho} \\ \mbs{\phi} \\ \lambda \ebm^\wdg \right) = \exp\left( \bbm \mbs{\phi}^\wdg & \mbs{\rho} \\ \mbf{0}^T & -\lambda \ebm \right) = \bbm \mbf{C}(\mbs{\phi}) & \mbf{r}(\mbs{\phi},\mbs{\rho},\lambda) \\ \mbf{0}^T & s(\lambda)^{-1} \ebm,
\end{equation}
where $s(\lambda) = \exp(\lambda)$ and
\begin{equation}\label{eq:sim3r}
\mbf{r}(\mbs{\phi},\mbs{\rho},\lambda) = \sum_{m=0}^\infty \sum_{n=0}^\infty \frac{1}{(m+n+1)!} \mbs{\phi}^{\wdg^m} \, \mbs{\rho} \, (-\lambda)^n.
\end{equation}
We have implicitly defined a new version of $\wdg$ for $Sim(3)$.  The $7 \times 7$ adjoint for $Sim(3)$ is given by
\begin{multline}
\Tbig(\mbs{\xi}) = \exp\left(\mbs{\xi}^\Wdg \right) = \exp\left( \bbm \mbs{\rho} \\ \mbs{\phi} \\ \lambda \ebm^\Wdg \right) = \exp\left(\bbm \mbs{\phi}^\wdg + \lambda \mbf{1} & \mbs{\rho}^\wdg & -\mbs{\rho} \\ \mbf{0} & \mbs{\phi}^\wdg & \mbf{0} \\ \mbf{0}^T & \mbf{0}^T & 1 \ebm \right) \\
= \bbm \mbf{C}(\mbs{\phi})\, s(\lambda) & \mbf{r}(\mbs{\phi},\mbs{\rho},\lambda)^\wdg \mbf{C}(\mbs{\phi}) \, s(\lambda) & -\mbf{r}(\mbs{\phi},\mbs{\rho}, \lambda) \, s(\lambda) \\ \mbf{0} & \mbf{C}(\mbs{\phi}) & \mbf{0} \\ \mbf{0}^T & \mbf{0}^T & 1 \ebm
\end{multline}
where we have defined a new version of $\Wdg$ for $Sim(3)$.

In general, $Sim(3)$ appears to be less explored in the literature than some of the other groups that we have investigated.  Nevertheless, we can still offer our usual theorems.

\begin{theorem}\label{thm:bbsim3so3}
$Sim(3)$: Given the building-block functions in~\eqref{eq:bb}, we have that
\begin{equation}
\BG_\ell(\mbs{\xi}^\wdg) = \bbm \BG_\ell(\mbs{\phi}^\wdg) & \Bg_\ell(\mbs{\phi}^\wdg, \mbs{\rho}, -\lambda) \\ \mbf{0}^T & \bg_\ell(-\lambda) \ebm.
\end{equation}
In other words, for all $\ell$ the $Sim(3)$ building blocks can be constructed from the $SO(3)$ building blocks.
\end{theorem}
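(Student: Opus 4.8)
The plan is to argue by induction on $\ell$, in exact parallel with the proofs of Theorems~\ref{thm:bbse3so3} and~\ref{thm:bbadse3so3}. The single structural novelty of the $Sim(3)$ case is that the bottom-right entry of $\mbs{\xi}^\wdg$ is the \emph{nonzero} scalar $-\lambda$ rather than $0$; consequently the off-diagonal slot must be populated by the three-argument building block $\Bg_\ell(\mbs{\phi}^\wdg,\mbs{\rho},-\lambda)$ instead of the two-argument $\Bg_\ell(\mbs{\phi}^\wdg,\mbs{\rho})$, and the diagonal corner carries the scalar block $\bg_\ell(-\lambda)$.

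\textbf{Base case ($\ell=0$).} Since $\BG_0(\mbs{\xi}^\wdg)=\exp(\mbs{\xi}^\wdg)=\Tsmall(\mbs{\xi})$, I would expand the exponential of the block upper-triangular matrix $\mbs{\xi}^\wdg=\bigl[\begin{smallmatrix}\mbs{\phi}^\wdg & \mbs{\rho}\\ \mbf{0}^T & -\lambda\end{smallmatrix}\bigr]$ termwise. A quick induction shows the $m$-th power has $(1,1)$ block $\mbs{\phi}^{\wdg^m}$, $(2,2)$ block $(-\lambda)^m$, and $(1,2)$ block $\sum_{i+j=m-1}\mbs{\phi}^{\wdg^i}\,\mbs{\rho}\,(-\lambda)^j$. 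Weighting by $1/m!$ and summing gives $(1,1)=\mbf{C}(\mbs{\phi})=\BG_0(\mbs{\phi}^\wdg)$, $(2,2)=e^{-\lambda}=s(\lambda)^{-1}=\bg_0(-\lambda)$, and, after re-indexing $m\mapsto i$, $n\mapsto j$, $(1,2)=\sum_{m,n}\tfrac{1}{(m+n+1)!}\mbs{\phi}^{\wdg^m}\mbs{\rho}\,(-\lambda)^n$, which is $\mbf{r}(\mbs{\phi},\mbs{\rho},\lambda)$ by~\eqref{eq:sim3r} and is exactly $\Bg_0(\mbs{\phi}^\wdg,\mbs{\rho},-\lambda)$ by the definition in~\eqref{eq:bb}. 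This reproduces the stated $\Tsmall(\mbs{\xi})$.

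\textbf{Inductive step.} Assuming the block identity at level $\ell$, apply the scaling recursion from Lemma~\ref{thm:bbrecursive}, $\BG_{\ell+1}(\mbs{\xi}^\wdg)=\int_0^1\alpha^\ell\,\BG_\ell(\alpha\mbs{\xi}^\wdg)\,d\alpha$. Since $\alpha\mbs{\xi}^\wdg=\bigl[\begin{smallmatrix}\alpha\mbs{\phi}^\wdg & \alpha\mbs{\rho}\\ \mbf{0}^T & -\alpha\lambda\end{smallmatrix}\bigr]$, the inductive hypothesis (with all arguments scaled by $\alpha$) gives the three blocks of $\BG_\ell(\alpha\mbs{\xi}^\wdg)$ as $\BG_\ell(\alpha\mbs{\phi}^\wdg)$, $\Bg_\ell(\alpha\mbs{\phi}^\wdg,\alpha\mbs{\rho},-\alpha\lambda)$, and $\bg_\ell(-\alpha\lambda)$. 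Pulling the integral through the blocks and invoking $\bg_{\ell+1}(x)=\int_0^1\alpha^\ell\bg_\ell(\alpha x)\,d\alpha$ for the two diagonal blocks and $\bg_{\ell+1}(x,y,z)=\int_0^1\alpha^\ell\bg_\ell(\alpha x,\alpha y,\alpha z)\,d\alpha$ for the off-diagonal block yields precisely the level-$(\ell+1)$ formula, completing the induction.

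The only part needing genuine attention — the ``main obstacle'', modest as it is — is the index bookkeeping in the base case: verifying that the $(1,2)$ block of $\exp(\mbs{\xi}^\wdg)$ repackages into the three-argument building-block template with third argument exactly $-\lambda$ (not $+\lambda$), so that the $Sim(3)$ convention is simultaneously consistent with~\eqref{eq:sim3r} and with the recursion of Lemma~\ref{thm:bbrecursive}. Everything else is a direct transcription of the $SE(3)$ argument, and, as with Corollaries~\ref{thm:bbse23so3} and~\ref{thm:bbadse23so3}, the proof could reasonably be abbreviated or omitted.
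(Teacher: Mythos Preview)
Your proposal is correct and follows essentially the same route as the paper: induction on $\ell$ with the inductive step driven by Lemma~\ref{thm:bbrecursive}, and a base case that matches each block of $\Tsmall(\mbs{\xi})$ to the corresponding building block. The only cosmetic difference is that you explicitly compute the $(1,2)$ block of $(\mbs{\xi}^\wdg)^m$ to recover~\eqref{eq:sim3r}, whereas the paper simply quotes~\eqref{eq:sim3r} as the already-stated definition of $\mbf{r}(\mbs{\phi},\mbs{\rho},\lambda)$ and observes it coincides with $\Bg_0(\mbs{\phi}^\wdg,\mbs{\rho},-\lambda)$; your version is slightly more self-contained but not a different argument.
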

\begin{proof}
Following our proof-by-induction model, we first consider the $\ell = 0$ case.  We must show
\begin{equation}
\BG_0(\mbs{\xi}^\wdg) = \bbm \BG_0(\mbs{\phi}^\wdg) & \Bg_0(\mbs{\phi}^\wdg, \mbs{\rho}, -\lambda) \\ \mbf{0}^T & \bg_0(-\lambda) \ebm = \bbm \mbf{C}(\mbs{\phi}) & \mbf{r}(\mbs{\phi},\mbs{\rho},\lambda) \\ \mbf{0}^T & s(\lambda)^{-1} \ebm = \Tsmall(\mbs{\xi}).
\end{equation}
The top-left block is just the series definition of the rotation matrix that we have seen several times.  For the top-right block, our definition of $\mbf{r}(\mbs{\phi},\mbs{\rho},\lambda)$ in~\eqref{eq:sim3r} matches the definition of $\Bg_0(\mbs{\phi}^\wdg, \mbs{\rho}, -\lambda)$ in~\eqref{eq:bb}.  The bottom-right entry is also straightforward:  $\bg_0(-\lambda) = \sum_{m=0}^\infty \frac{1}{m!} (-\lambda)^m = \exp(-\lambda) = \exp(\lambda)^{-1} = s(\lambda)^{-1}$. 

We assume the relationship hold for $\ell$ and show it holds for $\ell + 1$.  Using Lemma~\ref{thm:bbrecursive}:
\begin{multline}
\BG_{\ell+1}(\mbs{\xi}^\wdg) = \int_0^1 \alpha^\ell \BG_\ell(\alpha\mbs{\xi}^\wdg) \, d\alpha = \int_0^1 \alpha^\ell \bbm \BG_\ell(\alpha\mbs{\phi}^\wdg) & \Bg_0(\alpha\mbs{\phi}^\wdg, \alpha\mbs{\rho}, -\alpha\lambda) \\ \mbf{0}^T & \bg_0(-\alpha\lambda) \ebm \, d\alpha \\ = \bbm \int_0^1 \alpha^\ell\BG_\ell(\alpha\mbs{\phi}^\wdg) \, d\alpha & \int_0^1 \alpha^\ell\Bg_0(\alpha\mbs{\phi}^\wdg, \alpha\mbs{\rho}, -\alpha\lambda) \, d\alpha \\ \mbf{0}^T & \int_0^1 \alpha^\ell\bg_0(-\alpha\lambda) \, d\alpha \ebm = \bbm \BG_{\ell+1}(\mbs{\phi}^\wdg) & \Bg_{\ell+1}(\mbs{\phi}^\wdg, \mbs{\rho}, -\lambda) \\ \mbf{0}^T & \bg_{\ell+1}(-\lambda) \ebm.
\end{multline}
\end{proof}
\smallskip

\begin{theorem}\label{thm:bbadsim3so3}
$\mbox{\normalfont Ad}(Sim(3))$: Given the building-block functions in~\eqref{eq:bb}, we have that
\begin{equation}
\BG_\ell(\mbs{\xi}^\Wdg) = \bbm \BG_\ell(\mbs{\phi}^\wdg + \lambda\mbf{1}) & \BG_\ell(\mbs{\phi}^\wdg + \lambda\mbf{1}, \mbs{\rho}^\wdg, \mbs{\phi}^\wdg) & - \Bg_\ell(\mbs{\phi}^\wdg + \lambda\mbf{1}, \mbs{\rho}) \\ \mbf{0} & \BG_\ell(\mbs{\phi}^\wdg) & \mbf{0} \\ \mbf{0}^T & \mbf{0}^T & \frac{1}{\ell !} \ebm.
\end{equation}
In other words, for all $\ell$ the $\mbox{\normalfont Ad}(Sim(3))$ building blocks can be constructed from the $SO(3)$ building blocks.
\end{theorem}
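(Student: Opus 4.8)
The plan is to argue by induction on $\ell$, exactly as in the proofs of Theorems~\ref{thm:bbse3so3}--\ref{thm:bbadgal3so3}, exploiting that the $Sim(3)$ adjoint generator $\mbs{\xi}^\Wdg$ is block upper triangular with diagonal blocks $\mbs{\phi}^\wdg+\lambda\mbf{1}$, $\mbs{\phi}^\wdg$, $0$ and above-diagonal blocks $\mbs{\rho}^\wdg$ in position $(1,2)$, $\mbf{0}$ in position $(2,3)$, and $-\mbs{\rho}$ in position $(1,3)$. For the base case $\ell=0$, I would compute $\BG_0(\mbs{\xi}^\Wdg)=\sum_m \frac{1}{m!}\mbs{\xi}^{\Wdg^m}$ directly from this structure. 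Abbreviating $A=\mbs{\phi}^\wdg+\lambda\mbf{1}$ and $B=\mbs{\phi}^\wdg$, the diagonal blocks of $\mbs{\xi}^{\Wdg^m}$ are $A^m,B^m,0^m$; the $(1,2)$ block is $\sum_{i+j=m-1}A^i\,\mbs{\rho}^\wdg\,B^j$; the $(2,3)$ block vanishes because the generator is zero there; and the $(1,3)$ block collapses to $A^{m-1}(-\mbs{\rho})$, since the only surviving length-$m$ path from the first block to the last stays in the first block and then jumps across (any route through the middle block dies on the zero $(2,3)$ entry). Summing against $1/m!$ and reindexing turns these into $\BG_0(\mbs{\phi}^\wdg+\lambda\mbf{1})$, $\BG_0(\mbs{\phi}^\wdg)$, $1/0!$, $\BG_0(\mbs{\phi}^\wdg+\lambda\mbf{1},\mbs{\rho}^\wdg,\mbs{\phi}^\wdg)$, $\mbf{0}$, and $\BG_1(\mbs{\phi}^\wdg+\lambda\mbf{1})(-\mbs{\rho})=-\Bg_0(\mbs{\phi}^\wdg+\lambda\mbf{1},\mbs{\rho})$ (the last step via Remark~\ref{thm:remark1}), which is exactly the claimed matrix at $\ell=0$. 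To connect this to the explicit adjoint given in the background one also notes $\BG_0(\mbs{\phi}^\wdg+\lambda\mbf{1})=e^\lambda\mbf{C}(\mbs{\phi})=s(\lambda)\mbf{C}(\mbs{\phi})$ since $\lambda\mbf{1}$ is central, and identifies the $(1,2)$ and $(1,3)$ blocks with $\mbf{r}(\mbs{\phi},\mbs{\rho},\lambda)^\wdg\mbf{C}(\mbs{\phi})s(\lambda)$ and $-\mbf{r}(\mbs{\phi},\mbs{\rho},\lambda)s(\lambda)$, using $\BG_1(\mbs{\phi}^\wdg+\lambda\mbf{1})=\int_0^1 e^{\alpha\lambda}\mbf{C}(\alpha\mbs{\phi})\,d\alpha$ (an instance of Lemma~\ref{thm:bbrecursive}), which I would isolate as a short auxiliary lemma in the style of the ones already used for $\mathrm{Ad}(SE(3))$ and $\mathrm{Ad}(SGal(3))$.

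For the inductive step, assume the stated block form holds at level $\ell$. By Lemma~\ref{thm:bbrecursive}, $\BG_{\ell+1}(\mbs{\xi}^\Wdg)=\int_0^1\alpha^\ell\,\BG_\ell(\alpha\mbs{\xi}^\Wdg)\,d\alpha$, and $\alpha\mbs{\xi}^\Wdg$ has the same block shape with $\mbs{\phi}\mapsto\alpha\mbs{\phi}$, $\mbs{\rho}\mapsto\alpha\mbs{\rho}$, $\lambda\mapsto\alpha\lambda$. Substituting the induction hypothesis, moving the integral inside the matrix entrywise, and applying the matching case of Lemma~\ref{thm:bbrecursive} to each block --- the $\bg_{\ell+1}(x)$ rule for the two diagonal blocks, the $\bg_{\ell+1}(x,y,z)$ rule for the $(1,2)$ block, the $\bg_{\ell+1}(x,y)$ rule for the $(1,3)$ block, and $\int_0^1\alpha^\ell\,d\alpha/\ell!=1/(\ell+1)!$ for the $(3,3)$ entry --- raises every index from $\ell$ to $\ell+1$ and produces exactly the claimed form at level $\ell+1$, closing the induction.

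The one place that needs care is the combinatorics in the base case. Unlike $\mathrm{Ad}(SE_2(3))$ and $\mathrm{Ad}(SGal(3))$, here the $(1,3)$ block is a single-index building block $\Bg(\cdot,\cdot)$ rather than a double-sum $\BG(\cdot,\cdot,\cdot)$, precisely because the generator's $(2,3)$ block is zero; getting the path count right and threading the shifted argument $\mbs{\phi}^\wdg+\lambda\mbf{1}$ consistently through the $(1,1)$, $(1,2)$, and $(1,3)$ blocks is the only subtle step. The remaining identification with the parametrized adjoint from the background rests on the centrality of $\lambda\mbf{1}$ and the integral form for $\BG_1(\mbs{\phi}^\wdg+\lambda\mbf{1})$, but this is a routine observation and is not strictly needed for the statement itself.
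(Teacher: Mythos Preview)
Your proposal is correct, and the inductive step matches the paper's model. The base case, however, takes a genuinely different route from the paper. The paper establishes $\ell=0$ by matching each block of the claimed building-block matrix against the already-computed closed form of $\Tbig(\mbs{\xi})$: it invokes Lemma~\ref{thm:bbinit1} to identify the $(1,2)$ block with $\mbf{r}(\mbs{\phi},\mbs{\rho},\lambda)^\wdg\mbf{C}(\mbs{\phi})s(\lambda)$, and carries out a somewhat lengthy manipulation (splitting $e^{\alpha\lambda}=e^{(\alpha-1)\lambda}e^\lambda$, expanding both factors as series, then applying Lemma~\ref{thm:intfact1}) to show $\Bg_0(\mbs{\phi}^\wdg+\lambda\mbf{1},\mbs{\rho})=\mbf{r}(\mbs{\phi},\mbs{\rho},\lambda)s(\lambda)$. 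You instead compute the block entries of $\mbs{\xi}^{\Wdg^m}$ directly from the upper-triangular structure and sum against $1/m!$, so that the building-block definitions fall out immediately from reindexing; in particular, your observation that the $(1,3)$ block collapses to a single path $A^{m-1}(-\mbs{\rho})$ because the $(2,3)$ generator entry vanishes is exactly what makes this work without any auxiliary lemmas. Your approach is more elementary and self-contained for proving the statement as written; the paper's approach has the side benefit of simultaneously verifying consistency with the explicit parametrized adjoint, which you correctly note is optional here.
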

\begin{proof}
Following our proof-by-induction model, we first consider the $\ell = 0$ case.  We must show
\begin{multline}
\BG_0(\mbs{\xi}^\Wdg) = \bbm \BG_0(\mbs{\phi}^\wdg + \lambda\mbf{1}) & \BG_0(\mbs{\phi}^\wdg + \lambda\mbf{1}, \mbs{\rho}^\wdg, \mbs{\phi}^\wdg) & - \Bg_0(\mbs{\phi}^\wdg + \lambda\mbf{1}, \mbs{\rho}) \\ \mbf{0} & \BG_0(\mbs{\phi}^\wdg) & \mbf{0} \\ \mbf{0}^T & \mbf{0}^T & 1 \ebm \\ = \bbm \mbf{C}(\mbs{\phi})\, s(\lambda) & \mbf{r}(\mbs{\phi},\mbs{\rho},\lambda)^\wdg \mbf{C}(\mbs{\phi}) \, s(\lambda) & -\mbf{r}(\mbs{\phi},\mbs{\rho}, \lambda) \, s(\lambda) \\ \mbf{0} & \mbf{C}(\mbs{\phi}) & \mbf{0} \\ \mbf{0}^T & \mbf{0}^T & 1 \ebm = \Tbig(\mbs{\xi}).
\end{multline}
For the $(1,1)$ block we have
\begin{equation}
\BG_0(\mbs{\phi}^\wdg + \lambda\mbf{1}) = \sum_{m=0}^\infty \frac{1}{m!} (\mbs{\phi}^\wdg + \lambda\mbf{1})^m = \exp(\mbs{\phi}^\wdg + \lambda\mbf{1}) = \exp\left(\mbs{\phi}^\wdg \right) \exp(\lambda) = \mbf{C}(\mbs{\phi})\, s(\lambda).
\end{equation}
For the $(1,3)$ block we can use Remark~\ref{thm:remark1} and later Lemma~\ref{thm:intfact1} to write
\begin{multline}\label{eq:sim3r2}
\Bg_0(\mbs{\phi}^\wdg + \lambda\mbf{1}, \mbs{\rho}) = \BG_1(\mbs{\phi}^\wdg + \lambda\mbf{1}) \mbs{\rho} = \int_0^1 \BG_0(\alpha(\mbs{\phi}^\wdg + \lambda\mbf{1})) \, d\alpha \, \mbs{\rho} = \int_0^1 \exp(\alpha(\mbs{\phi}^\wdg + \lambda\mbf{1})) \, d\alpha \, \mbs{\rho} \\
= \int_0^1 \exp(\alpha\mbs{\phi}^\wdg )\exp(\alpha\lambda) \, d\alpha \, \mbs{\rho} = \int_0^1 \exp(\alpha\mbs{\phi}^\wdg )\exp((\alpha-1)\lambda) \, d\alpha \, \exp(\lambda) \mbs{\rho} \\
= \int_0^1 \left(\sum_{m=0}^\infty \frac{1}{m!} (\alpha\mbs{\phi})^{\wdg^m} \right)\left(\sum_{n=0}^\infty \frac{1}{n!} ((1-\alpha)(-\lambda))^n \right) \, d\alpha \, \mbs{\rho} \, s(\lambda) \\ = \sum_{m=0}^\infty \sum_{n=0}^\infty \underbrace{\frac{1}{m!\,n!}  \int_0^1 \alpha^m(1-\alpha)^n \, d\alpha}_{\mbox{\scriptsize use Lemma~\ref{thm:intfact1}}} \, \mbs{\phi}^{\wdg^m} (-\lambda)^n \, \mbs{\rho} \, s(\lambda) \\ 
= \underbrace{\sum_{m=0}^\infty \sum_{n=0}^\infty \frac{1}{(m+n+1)!} \mbs{\phi}^{\wdg^m} \, \mbs{\rho} \,  (-\lambda)^n}_{\mbf{r}(\mbs{\phi},\mbs{\rho},\lambda)}  \, s(\lambda) = \mbf{r}(\mbs{\phi},\mbs{\rho},\lambda) \, s(\lambda).
\end{multline}
For the $(1,2)$ block we can use Lemma~\ref{thm:bbinit1} in Appendix~\ref{sec:ellzero} then the result from the $(1,3)$ block to have
\begin{equation}
\BG_0(\mbs{\phi}^\wdg + \lambda\mbf{1}, \mbs{\rho}^\wdg, \mbs{\phi}^\wdg) = \underbrace{\Bg_0(\mbs{\phi}^\wdg + \lambda\mbf{1}, \mbs{\rho})}_{\mbf{r}(\mbs{\phi},\mbs{\rho}, \lambda) s(\lambda)} \!\mbox{}^\wdg \BG_0(\mbs{\phi}^\wdg) = \mbf{r}(\mbs{\phi},\mbs{\rho},\lambda)^\wdg \mbf{C}(\mbs{\phi}) \, s(\lambda).
\end{equation}
The $(2,2)$ block is simply the definition of the rotation matrix.
\end{proof}

Suppose now we would like a compact analytic expression for $\Tsmall(\mbs{\xi}) = \BG_0(\mbs{\xi}^\wdg)$.  We know how to calculate all the entries except for $\mbf{r}(\mbs{\phi},\mbs{\rho},\lambda)$.  We will need $\BG_0\left(\mbs{\phi}^\wdg + \lambda\mbf{1}\right)$, which we can see is 
\begin{equation}\label{eq:Cs}
\BG_0\left(\mbs{\phi}^\wdg + \lambda\mbf{1}\right) = \exp\left(\mbs{\phi}^\wdg\right) \exp(\lambda) = \exp(\lambda) \, \mbf{1} + \frac{\exp(\lambda)\sin\phi}{\phi} \mbs{\phi}^\wdg +\frac{ \exp(\lambda) (1 - \cos\phi)}{\phi^2} \mbs{\phi}^{\wdg^2}. 
\end{equation}
We can then use the following logic where the first step is a re-arrangement of~\eqref{eq:sim3r2}:
\begin{multline}\label{eq:Sim3_r}
\mbf{r}(\mbs{\phi},\mbs{\rho},\lambda) = \Bg_0\left(\mbs{\phi}^\wdg + \lambda\mbf{1}, \mbs{\rho}\right) s(\lambda)^{-1} = \BG_1\left(\mbs{\phi}^\wdg + \lambda\mbf{1}\right) \mbs{\rho} \, s(\lambda)^{-1} = \int_0^1 \BG_0\left(\alpha(\mbs{\phi}^\wdg + \lambda\mbf{1})\right) \, d\alpha \, \mbs{\rho} \, s(\lambda)^{-1} \\
= \left( \left(\int_0^1 \exp(\alpha\lambda) \, d\alpha\right) \mbf{1} + \frac{1}{\phi}\left(\int_0^1 \exp(\alpha\lambda)\sin(\alpha\phi) \, d\alpha\right) \mbs{\phi}^\wdg + \left(\int_0^1 \exp(\alpha\lambda)(1-\cos(\alpha\phi)) \, d\alpha\right)  \mbs{\phi}^{\wdg^2}\right) \mbs{\rho} \, s(\lambda)^{-1}\\
= \Biggl( \underbrace{\frac{1-s(\lambda)^{-1}}{\lambda}}_{m_0(\phi,\lambda)} \mbf{1} + \underbrace{\frac{\phi s(\lambda)^{-1} + \lambda\sin\phi - \phi\cos\phi}{\phi(\lambda^2 + \phi^2)}}_{m_1(\phi,\lambda)} \mbs{\phi}^\wdg + \underbrace{\left(\frac{\lambda -\phi\sin\phi-\lambda\cos\phi}{\phi^2(\lambda^2+\phi^2)} + \frac{1-s(\lambda)^{-1}}{\lambda(\lambda^2+\phi^2)} \right)}_{m_2(\phi,\lambda)}  \mbs{\phi}^{\wdg^2} \Biggr) \mbs{\rho} \\ = \mbf{M}(\mbs{\phi},\lambda) \mbs{\rho},
\end{multline}
where we have defined $\mbf{M}(\mbs{\phi},\lambda)$ for convenience.
This result matches exactly that of \citep{eadeLieGroupsComputer2014}, which was computed by a lengthy series formulation; our approach involved three simple integrals.

Moreover, with this result in hand we can compute the adjoint, $\Tbig(\mbs{\xi})$.  The $(1,1)$ block is simply
\begin{equation}
\mbf{C}(\mbs{\phi}) \, s(\lambda) = \BG_0\left(\mbs{\phi}^\wdg + \lambda\mbf{1}\right),
\end{equation}
whose formula appears in~\eqref{eq:Cs}.  For the $(1,3)$ block we see that
\begin{equation}
\mbf{r}(\mbs{\phi},\mbs{\rho},\lambda) \, s(\lambda) = s(\lambda) \mbf{M}(\mbs{\phi},\lambda) \mbs{\rho}= \left( s(\lambda)m_0(\phi,\lambda) \mbf{1} + s(\lambda) m_1(\phi,\lambda) \mbs{\phi} + s(\lambda) m_2(\phi,\lambda) \mbs{\phi}^{\wdg^2} \right) \mbs{\rho}.
\end{equation}
Finally, for the $(1,2)$ block we can form
\begin{multline}
\mbf{r}(\mbs{\phi},\mbs{\rho},\lambda)^\wdg \mbf{C}(\mbs{\phi}) \, s(\lambda) = \left( s(\lambda) \mbf{M}(\phi,\lambda) \mbs{\rho}\right)^\wdg \mbf{C}(\mbs{\phi}) \\
= \left( \left( s(\lambda)m_0(\phi,\lambda) \mbf{1} + s(\lambda) m_1(\phi,\lambda) \mbs{\phi} + s(\lambda) m_2(\phi,\lambda) \mbs{\phi}^{\wdg^2} \right) \mbs{\rho} \right)^\wdg \left( c_0(\phi) \mbf{1} + c_1(\phi) \mbs{\phi} + c_2(\phi) \mbs{\phi}^{\wdg^2} \right)
\end{multline}
and then use Lemma~\ref{thm:product} in Appendix~\ref{sec:identities} to expand and group the terms.  The $(2,2)$ block is just the usual rotation matrix.

We leave it as an exercise for the reader to compute the Jacobian of $Sim(3)$.  Naturally, we can use Theorem~\ref{thm:bbadsim3so3} to write
\begin{equation}
\Jbig(\mbs{\xi}) = \int_0^1 \Tbig(\alpha\mbs{\xi}) \, d\alpha,
\end{equation}
and then integrate one block at a time.  The $(1,2)$ block will be the most onerous.

%%% SO(2) and SE(2)

\subsection{Rotations and Poses in the Plane, $SO(2)$ and $SE(2)$}

Planar rigid motion, $SE(2)$, is a subgroup of $SE(3)$, but there are some subtleties worth spelling out.

\subsubsection{Rotations, $SO(2)$}

We first briefly review the details of $SO(2)$, which represents just scalar rotation, as it is a bit of a unique case.  As mentioned in the introduction, the Lie algebra is made up of the set of all $2 \times 2$ skew-symmetric matrices of the form
\begin{equation}
\phi^\wdg = \phi \, \mbf{S}, \quad \mbf{S} = \bbm 0 & -1 \\ 1 & 0 \ebm,
\end{equation}
where $\mbf{S}$ is the canonical skew-symmetric matrix.  The minimal polynomial is then
\begin{equation}
\phi^{\wdg^2} + \phi^2 \mbf{1} = \mbf{0}.
\end{equation}
Rotation matrices are then constructed in the usual way:
\begin{equation}\label{eq:SO2_0}
\mbf{C}(\phi) = \sum_{m=0}^\infty \frac{1}{m!} \phi^{\wdg^m} = \underbrace{\left( 1 - \frac{1}{2!} \phi^2 + \frac{1}{4!} \phi^4 + \cdots \right)}_{\cos\phi} \mbf{1} + \underbrace{\left( \phi - \frac{1}{3!} \phi^3 + \frac{1}{5!} \phi^5 + \cdots \right)}_{\sin\phi} \mbf{S} = \bbm \cos\phi & -\sin\phi \\ \sin\phi & \cos\phi \ebm
\end{equation}
The adjoint, $\mathcal{C}(\phi)$, is degenerate since
\begin{equation}
\mbf{C}(\phi_1) \phi_2^\wdg \mbf{C}(\phi_1)^T  = \phi_2^\wdg = \left( \mathcal{C}(\phi_1) \phi_2 \right)^\wdg \quad \Rightarrow \quad \mathcal{C}(\phi_1) = 1.
\end{equation}
The Jacobian, $\mathcal{J}(\phi)$, is then also degenerate since
\begin{equation}
\mathcal{J}(\phi) = \int_0^1 \mathcal{C}(\alpha\phi) \, d\alpha = \int_0^1 d\alpha = 1.
\end{equation}
This is perhaps not surprising because we have that
\begin{equation}
\exp\left( \phi_1^\wdg + \phi_2^\wdg \right) = \exp\left( \phi_1^\wdg \right) \exp\left( \phi_2^\wdg \right),
\end{equation}
which follows from the Baker-Campbell-Hausdorff formula when the Lie bracket is zero:
\begin{equation}
\left[ \phi_1^\wdg, \phi_2^\wdg \right] = \phi_1\phi_2 ( \mbf{S} \mbf{S} - \mbf{S}\mbf{S}) = \mbf{0}.
\end{equation}
In other words, we can simply add the angles to compound rotations since their Lie algebra vectors are parallel.
The kinematics also simplify due to the identity Jacobian:
\begin{equation}
\dot{\mbf{C}}(\phi) = \omega^\wdg \mbf{C} \quad \Rightarrow \quad \dot{\phi} = \mathcal{J}(\phi)^{-1} \omega = \omega.
\end{equation}
When we extend to $SE(2)$, we will have to be careful to observe that for $SO(2)$ we have $\mathcal{J}(\phi) \neq \bg_1(\phi)$.

\subsubsection{Poses, $SE(2)$}

The group of planar poses, $SE(2)$, has a $3 \times 3$ form:
\begin{equation}\label{eq:SE2_0}
\Tsmall(\mbs{\xi}) = \exp\left( \mbs{\xi}^\wdg \right) = \exp\left( \bbm \mbs{\rho} \\ \phi \ebm^\wdg \right) = \exp\left( \bbm \phi \mbf{S} & \mbs{\rho} \\ \mbf{0}^T & 0 \ebm
\right) = \bbm \mbf{C}(\phi) & \BG_1(\phi^\wdg) \mbs{\rho} \\ \mbf{0}^T & 1 \ebm = \bbm \cos\phi & -\sin\phi & x \\ \sin\phi & \cos\phi & y \\ 0 & 0 & 1 \ebm,
\end{equation}
where $\mbs{\rho}$ is $2 \times 1$.  We note there is a subtle but important difference in the top-right block when compared to $SE(3)$: in $SE(2)$ we must write $\Bg_1(\phi^\wdg)$, which is not equivalent to the Jacobian of $SO(2)$, $\mathcal{J}(\phi)$. The group of adjoint poses, $\mbox{Ad}(SE(2))$, also has a $3 \times 3$ form:
\begin{equation}\label{eq:AdSE2_0}
\Tbig(\mbs{\xi}) = \exp\left( \mbs{\xi}^\Wdg \right) = \exp\left( \bbm \mbs{\rho} \\ \phi \ebm^\Wdg \right) = \exp\left( \bbm \phi \mbf{S} & -\mbf{S}\mbs{\rho} \\ \mbf{0}^T & 0 \ebm
\right) = \bbm \mbf{C}(\phi) & -\mbf{S}\, \BG_1(\phi^\wdg) \mbs{\rho} \\ \mbf{0}^T & 1 \ebm.
\end{equation}
We can check the adjoint relationship as follows:
\begin{multline}
\Tsmall(\mbs{\xi}_1) \, \mbs{\xi}_2^\wdg \, \Tsmall(\mbs{\xi})^{-1} = \bbm \mbf{C}(\phi_1) & \BG_1(\phi_1^\wdg) \mbs{\rho}_1 \\ \mbf{0}^T & 1 \ebm \bbm \phi_2 \mbf{S} & \mbs{\rho}_2 \\ \mbf{0}^T & 0  \ebm \bbm \mbf{C}(\phi_1) & -\mbf{C}(\phi_1)\BG_1(\phi_1^\wdg) \mbs{\rho}_1 \\ \mbf{0}^T & 1 \ebm 
\\
= \bbm \phi_2 \mbf{S} & \mbf{C}(\phi_1) \mbs{\rho}_2 - \phi_2 \mbf{S}\, \BG_1(\phi_1^\wdg) \mbs{\rho}_1 \\ \mbf{0}^T & 0 \ebm = \bbm \mbf{C}(\phi_1) \mbs{\rho}_2 - \phi_2 \mbf{S}\, \BG_1(\phi_1^\wdg) \mbs{\rho}_1 \\ \phi_2 \ebm^\wdg  \\
 = \left(  \bbm \mbf{C}(\phi_1) & -\mbf{S}\, \BG_1(\phi_1^\wdg) \mbs{\rho}_1 \\ \mbf{0}^T & 1 \ebm \bbm \mbs{\rho}_2 \\ \phi_2 \ebm \right)^\wdg = \left( \Tbig(\mbs{\xi}_1) \, \mbs{\xi}_2 \right)^\wdg,
\end{multline}
where we employ the useful identity, $\mbf{S} \,\BG_\ell(\phi^\wdg) = \BG_\ell(\phi^\wdg) \, \mbf{S}$.  

From here we can add two more substructure results to our list:
\begin{corollary}\label{thm:bbse2so2}
$SE(2)$:  Similarly to Theorem~\ref{thm:bbse3so3}, given the building-block functions in~\eqref{eq:bb} and, we have that
\begin{equation}
\BG_\ell(\mbs{\xi}^\wdg) = \bbm \BG_\ell(\phi^\wdg) & \Bg_\ell(\phi^\wdg, \mbs{\rho}) \\ \mbf{0}^T & \frac{1}{\ell !} \ebm.
\end{equation}
In other words, for all $\ell$ the $SE(2)$ building blocks can be constructed from the $SO(2)$ building blocks.
\end{corollary}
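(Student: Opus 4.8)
The plan is to mirror the proof of Theorem~\ref{thm:bbse3so3} almost verbatim, exploiting the fact that the $SE(2)$ algebra element
\[
\mbs{\xi}^\wdg = \bbm \phi^\wdg & \mbs{\rho} \\ \mbf{0}^T & 0 \ebm
\]
has exactly the same block upper-triangular shape as in the $SE(3)$ case, only with the $SO(2)$ generator $\phi^\wdg = \phi\,\mbf{S}$ sitting in the top-left corner. I would argue by induction on $\ell$. For the base case $\ell = 0$ the claim reads
\[
\BG_0(\mbs{\xi}^\wdg) = \bbm \BG_0(\phi^\wdg) & \Bg_0(\phi^\wdg,\mbs{\rho}) \\ \mbf{0}^T & 1 \ebm,
\]
and I would check this against the closed form~\eqref{eq:SE2_0}: $\BG_0(\phi^\wdg) = \mbf{C}(\phi)$ holds by the series definition, $\Bg_0(\phi^\wdg,\mbs{\rho}) = \BG_1(\phi^\wdg)\mbs{\rho}$ holds by Remark~\ref{thm:remark1}, and $1/0! = 1$, so the block form is precisely $\Tsmall(\mbs{\xi})$.

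For the inductive step, assume the claim for $\ell$. Since $\alpha\mbs{\xi}^\wdg$ is again of the same block form with $\phi^\wdg \mapsto \alpha\phi^\wdg$ and $\mbs{\rho} \mapsto \alpha\mbs{\rho}$, applying the recursive integral of Lemma~\ref{thm:bbrecursive} gives
\[
\BG_{\ell+1}(\mbs{\xi}^\wdg) = \int_0^1 \alpha^\ell\, \BG_\ell(\alpha\mbs{\xi}^\wdg)\,d\alpha = \bbm \int_0^1 \alpha^\ell \BG_\ell(\alpha\phi^\wdg)\,d\alpha & \int_0^1 \alpha^\ell \Bg_\ell(\alpha\phi^\wdg,\alpha\mbs{\rho})\,d\alpha \\ \mbf{0}^T & \tfrac{1}{\ell!}\int_0^1 \alpha^\ell\,d\alpha \ebm,
\]
and then I would recognize the three blocks, again via Lemma~\ref{thm:bbrecursive}, as $\BG_{\ell+1}(\phi^\wdg)$, $\Bg_{\ell+1}(\phi^\wdg,\mbs{\rho})$, and $\tfrac{1}{(\ell+1)!}$, respectively, which closes the induction. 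Structurally there is no new content here beyond Theorem~\ref{thm:bbse3so3}; the dimension of the rotation block plays no role in the argument.

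The only point requiring genuine care — and the reason the corollary is stated at all — is notational rather than computational: the off-diagonal block is $\Bg_\ell(\phi^\wdg,\mbs{\rho}) = \BG_{\ell+1}(\phi^\wdg)\mbs{\rho}$, built from the $SO(2)$ \emph{building block} $\BG_{\ell+1}(\phi^\wdg)$, and this must not be identified with the degenerate $SO(2)$ Jacobian $\mathcal{J}(\phi) = 1$; in particular $\Bg_1(\phi^\wdg,\mbs{\rho}) = \BG_1(\phi^\wdg)\mbs{\rho} \neq \mathcal{J}(\phi)\mbs{\rho}$, as already flagged at the end of the $SO(2)$ discussion. Because $\BG_\ell(\phi^\wdg)$, viewed as a $2\times 2$ matrix, still satisfies the minimal polynomial $\phi^{\wdg^2} = -\phi^2\mbf{1}$, the actual evaluation of these blocks is routine; the ``hard part'' is merely keeping the $SO(2)$-specific collapse of the adjoint from contaminating the statement.
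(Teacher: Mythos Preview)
Your proposal is correct and follows exactly the route the paper intends: the paper omits the proof entirely, stating only that it is ``similar to previous ones,'' i.e., to Theorem~\ref{thm:bbse3so3}, and your induction mirrors that argument verbatim. One small index slip in your closing commentary (not the proof proper): by Remark~\ref{thm:remark1}, $\Bg_1(\phi^\wdg,\mbs{\rho}) = \BG_2(\phi^\wdg)\mbs{\rho}$, not $\BG_1(\phi^\wdg)\mbs{\rho}$; the caution you are voicing about $\BG_1(\phi^\wdg) \neq \mathcal{J}(\phi)$ pertains to the $\ell=0$ block $\Bg_0(\phi^\wdg,\mbs{\rho}) = \BG_1(\phi^\wdg)\mbs{\rho}$.
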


\begin{corollary}\label{thm:bbadse2so2}
$\mbox{\normalfont Ad}(SE(2))$:  Similarly to Theorem~\ref{thm:bbse3so3}, given the building-block functions in~\eqref{eq:bb}, we have that
\begin{equation}
\BG_\ell(\mbs{\xi}^\Wdg) = \bbm \BG_\ell(\phi^\wdg) & - \mbf{S} \Bg_\ell(\phi^\wdg, \mbs{\rho}) \\ \mbf{0}^T & \frac{1}{\ell !} \ebm.
\end{equation}
In other words, for all $\ell$ the $\mbox{Ad}(SE(2))$ building blocks can be constructed from the $SO(2)$ building blocks.
\end{corollary}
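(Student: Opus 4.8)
The plan is to follow exactly the proof-by-induction template used for Theorems~\ref{thm:bbse3so3} and~\ref{thm:bbadse3so3}, since $\mbox{Ad}(SE(2))$ has the same $3\times 3$ upper-triangular block shape, with the only novelty being the constant factor $-\mbf{S}$ standing in front of the off-diagonal block. Concretely, here $\mbs{\xi}^\Wdg = \bbm \phi^\wdg & -\mbf{S}\mbs{\rho} \\ \mbf{0}^T & 0 \ebm$ with $\phi^\wdg = \phi\mbf{S}$, so $\mbf{S}$ commutes with every power $(\phi^\wdg)^m$ and hence with $\BG_\ell(\phi^\wdg)$.

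For the base case $\ell=0$, I would verify that the claimed formula reproduces the explicit adjoint in~\eqref{eq:AdSE2_0}. By definition $\BG_0(\phi^\wdg) = \mbf{C}(\phi)$, and by Remark~\ref{thm:remark1} $\Bg_0(\phi^\wdg,\mbs{\rho}) = \BG_1(\phi^\wdg)\mbs{\rho}$, so the $(1,2)$ block of the claimed matrix is $-\mbf{S}\,\BG_1(\phi^\wdg)\mbs{\rho}$, which is precisely the off-diagonal block in~\eqref{eq:AdSE2_0}; the $(2,2)$ entry is $1/0! = 1$. (A shortcut proving all $\ell$ simultaneously is to observe that $(\mbs{\xi}^\Wdg)^m = \bbm (\phi^\wdg)^m & -\mbf{S}(\phi^\wdg)^{m-1}\mbs{\rho} \\ \mbf{0}^T & 0 \ebm$ for $m\ge 1$ and then sum the weighted series blockwise, the $(1,2)$ block reindexing to $-\mbf{S}\,\Bg_\ell(\phi^\wdg,\mbs{\rho})$; but I would keep the induction to stay consistent with the neighbouring corollaries.)

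For the inductive step, assume the formula at level $\ell$ and apply Lemma~\ref{thm:bbrecursive}: $\BG_{\ell+1}(\mbs{\xi}^\Wdg) = \int_0^1 \alpha^\ell\,\BG_\ell(\alpha\mbs{\xi}^\Wdg)\,d\alpha$. The one point to be careful about is that rescaling $\mbs{\xi}^\Wdg$ by $\alpha$ rescales $\phi^\wdg$ and $\mbs{\rho}$ by $\alpha$ but leaves the constant $\mbf{S}$ alone; pulling $\mbf{S}$ outside the integral, the blocks become $\int_0^1\alpha^\ell\BG_\ell(\alpha\phi^\wdg)\,d\alpha$, $-\mbf{S}\int_0^1\alpha^\ell\Bg_\ell(\alpha\phi^\wdg,\alpha\mbs{\rho})\,d\alpha$, and $\tfrac{1}{\ell!}\int_0^1\alpha^\ell\,d\alpha$, which collapse by Lemma~\ref{thm:bbrecursive} to $\BG_{\ell+1}(\phi^\wdg)$, $-\mbf{S}\,\Bg_{\ell+1}(\phi^\wdg,\mbs{\rho})$, and $\tfrac{1}{(\ell+1)!}$ respectively. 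That is exactly the claimed form at level $\ell+1$, closing the induction.

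I do not anticipate a real obstacle: the argument is routine given Lemma~\ref{thm:bbrecursive} and Remark~\ref{thm:remark1}. The only things worth flagging are (i) not accidentally scaling $\mbf{S}$ inside the integral, and (ii) recording — as already emphasised in the $SO(2)$ subsection — that $\Bg_\ell(\phi^\wdg,\mbs{\rho}) = \BG_{\ell+1}(\phi^\wdg)\mbs{\rho}$ is genuinely the right object in the off-diagonal, not the (degenerate) Jacobian $\mathcal{J}(\phi)$ of $SO(2)$, so the mismatch with the $\mbox{Ad}(SE(3))$ case is expected rather than a typo. In the write-up I would either present the short induction above or, matching the treatment of the earlier corollaries, simply note that the proof is analogous to those of Corollary~\ref{thm:bbse2so2} and Theorem~\ref{thm:bbadse3so3}.
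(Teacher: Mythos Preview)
Your proposal is correct and mirrors the paper's intent: the paper explicitly omits the proof, noting it is similar to those of Theorem~\ref{thm:bbse3so3} and Corollary~\ref{thm:bbse2so2}, i.e., exactly the induction-on-$\ell$ via Lemma~\ref{thm:bbrecursive} with the $\ell=0$ case read off from~\eqref{eq:AdSE2_0}. Your care about not scaling the constant $\mbf{S}$ and using $\Bg_\ell(\phi^\wdg,\mbs{\rho})=\BG_{\ell+1}(\phi^\wdg)\mbs{\rho}$ rather than the degenerate $\mathcal{J}(\phi)$ is precisely the right bookkeeping.
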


We omit the proofs of these corollaries as they are similar to previous ones.

We can turn to our favourite application as a demonstration, computing the Jacobian of $\mbox{Ad}(SE(2))$.  From Corollary~\ref{thm:bbadse2so2} and Lemma~\ref{thm:bbrecursive} we have
\begin{equation}
\Jbig(\mbs{\xi}) = \BG_1(\mbs{\xi}^\Wdg) = \bbm \BG_1(\phi^\wdg) & - \mbf{S} \Bg_1(\phi^\wdg, \mbs{\rho}) \\ \mbf{0}^T & 1 \ebm = \bbm \BG_1(\phi^\wdg) & - \mbf{S} \, \BG_2(\phi^\wdg) \mbs{\rho} \\ \mbf{0}^T & 1 \ebm.
\end{equation}
We compute $\BG_1(\phi^\wdg)$ and $\BG_2(\phi^\wdg)$ recursively:
\begin{subequations}
\begin{gather}
\BG_1(\phi^\wdg) = \int_0^1 \BG_0(\alpha\phi^\wdg) \, d\alpha = \left( \int_0^1 \cos(\alpha\phi) \, d\alpha \right) \mbf{1} + \left( \int_0^1 \sin(\alpha\phi) \, d\alpha \right) \mbf{S} = \frac{\sin\phi}{\phi}\mbf{1} + \frac{1-\cos\phi}{\phi}\mbf{S}, \label{eq:SO2_1} \\
\BG_2(\phi^\wdg) = \int_0^1 \alpha\BG_1(\alpha\phi^\wdg) \, d\alpha = \left( \int_0^1 \alpha \frac{\sin(\alpha\phi)}{\alpha\phi} \, d\alpha \right) \mbf{1} + \left( \alpha\int_0^1 \frac{1-\cos(\alpha\phi)}{\alpha\phi} \, d\alpha \right) \mbf{S} = \frac{1-\cos\phi}{\phi^2}\mbf{1} + \frac{\phi-\sin\phi}{\phi^2}\mbf{S}. \label{eq:SO2_2}
\end{gather}
\end{subequations}

\section{Derivatives}\label{sec:derivatives}

While we have made much of our {\em integral} forms there are also some related results involving derivatives and our building blocks worth mentioning.  Firstly, looking at Lemma~\ref{thm:bbrecursive}, it should not be surprising that we can go in the other direction using a derivative.  For example, we have
\begin{equation}
\bg_\ell(x) = \left. \frac{d}{d\alpha} \left( \alpha^{\ell+1} \bg_{\ell +1}(\alpha x) \right) \right|_{\alpha=1}.
\end{equation}
Similar results hold for the other building blocks.

In general, kinematics over a Lie group will have the form
\begin{equation}
\dot{\BG}_0(\mbf{x}^\wdg) = \mbf{v}^\wdg \BG_0(\mbf{x}^\wdg),
\end{equation}
where $\mbf{v}$ is the generalized velocity.  Isolating for $\mbf{v}^\wdg$ we see that
\begin{multline}
\mbf{v}^\wdg = \dot{\BG}_0(\mbf{x}^\wdg) \BG_0(\mbf{x}^\wdg)^{-1} = \int_0^1  \BG_0(\alpha\mbf{x}^\wdg) \, \dot{\mbf{x}}^\wdg \, \BG_0((1-\alpha)\mbf{x}^\wdg) \, d\alpha \, \BG_0(\mbf{x}^\wdg)^{-1} = \int_0^1  \BG_0(\alpha\mbf{x}^\wdg) \, \dot{\mbf{x}}^\wdg \, \BG_0(-\alpha\mbf{x}^\wdg) \, d\alpha \\
\int_0^1 \left(  \BG_0(\alpha\mbf{x}^\Wdg) \, \dot{\mbf{x}} \right)^\wdg \, d\alpha = \left( \int_0^1 \BG_0(\alpha\mbf{x}^\Wdg) \, d\alpha \, \dot{\mbf{x}} \right)^\wdg = \left( \BG_1(\mbf{x}^\Wdg)\, \dot{\mbf{x}} \right)^\wdg,
\end{multline}
which implies that $\mbf{v} = \BG_1(\mbf{x}^\Wdg)\, \dot{\mbf{x}} = \Bg_0(\mbf{x}^\Wdg, \dot{\mbf{x}})$, in terms of our building blocks (e.g., see \citet[p.21]{parkOptimalKinematicDesign1991}).  From here, we can claim a more general result using our building blocks.

\begin{theorem}\label{thm:kin}
Given the building blocks in~\eqref{eq:bb}, we can compute the temporal derivative of $\BG_\ell(\mbf{x}^\wdg)$ as
\begin{equation}
\dot{\BG}_\ell(\mbf{x}^\wdg) = \BG_\ell(\mbf{x}^\wdg,\dot{\mbf{x}}^\wdg, \mbf{x}^\wdg).
\end{equation} 
\end{theorem}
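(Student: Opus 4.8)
The plan is to differentiate the defining series of $\BG_\ell(\mbf{x}^\wdg)$ term by term and then re-index the resulting double sum so that it matches the three-argument template $\BG_\ell(\cdot,\cdot,\cdot)$ from~\eqref{eq:bb}. Since every entry of $\BG_\ell(\mbf{x}^\wdg) = \sum_{m=0}^\infty \frac{1}{(\ell+m)!}\mbf{x}^{\wdg^m}$ is an entire function of the finitely many entries of $\mbf{x}^\wdg$ (the series converges locally uniformly, like the exponential), differentiation commutes with the sum, giving $\dot{\BG}_\ell(\mbf{x}^\wdg) = \sum_{m=1}^\infty \frac{1}{(\ell+m)!}\,\frac{d}{dt}\bigl(\mbf{x}^{\wdg^m}\bigr)$.

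The first key step is the product rule for matrix powers. Because $(\cdot)^\wdg$ is linear we have $\frac{d}{dt}\mbf{x}^\wdg = \dot{\mbf{x}}^\wdg$, and an easy induction on $m$ then yields
\begin{equation*}
\frac{d}{dt}\bigl(\mbf{x}^{\wdg^m}\bigr) = \sum_{k=0}^{m-1} \mbf{x}^{\wdg^k}\,\dot{\mbf{x}}^\wdg\,\mbf{x}^{\wdg^{m-1-k}},
\end{equation*}
where the ordering of the factors matters since $\mbf{x}^\wdg$ and $\dot{\mbf{x}}^\wdg$ need not commute. The second step is the re-indexing: substituting $j=k$ and $n=m-1-k$ turns the constraints $m\ge 1$, $0\le k\le m-1$ into $j,n\ge 0$ with $m=j+n+1$, so that $\frac{1}{(\ell+m)!}=\frac{1}{(\ell+j+n+1)!}$ and
\begin{equation*}
\dot{\BG}_\ell(\mbf{x}^\wdg) = \sum_{j=0}^\infty\sum_{n=0}^\infty \frac{1}{(\ell+j+n+1)!}\,\mbf{x}^{\wdg^j}\,\dot{\mbf{x}}^\wdg\,\mbf{x}^{\wdg^n} = \BG_\ell(\mbf{x}^\wdg,\dot{\mbf{x}}^\wdg,\mbf{x}^\wdg)
\end{equation*}
directly from the definition in~\eqref{eq:bb}; the rearrangement of the (absolutely convergent) double series is harmless.

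There is no serious obstacle here: the only things requiring a word of justification are term-by-term differentiation and the rearrangement of the double sum, both standard for these entire-function series. As an alternative that sidesteps any discussion of convergence for general $\ell$, one could prove the $\ell=0$ case as above and then induct on $\ell$ via the integral recursion of Lemma~\ref{thm:bbrecursive}: differentiating $\BG_{\ell+1}(\mbf{x}^\wdg)=\int_0^1\alpha^\ell\,\BG_\ell(\alpha\mbf{x}^\wdg)\,d\alpha$ under the integral, applying the inductive hypothesis to the integrand to obtain $\int_0^1\alpha^\ell\,\BG_\ell(\alpha\mbf{x}^\wdg,\alpha\dot{\mbf{x}}^\wdg,\alpha\mbf{x}^\wdg)\,d\alpha$, and then invoking Lemma~\ref{thm:bbrecursive} once more to collapse this back to $\BG_{\ell+1}(\mbf{x}^\wdg,\dot{\mbf{x}}^\wdg,\mbf{x}^\wdg)$.
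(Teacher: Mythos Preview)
Your primary argument is correct and is genuinely different from the paper's. You differentiate the defining series of $\BG_\ell(\mbf{x}^\wdg)$ term by term, apply the noncommutative product rule to each $\mbf{x}^{\wdg^m}$, and re-index the resulting triangular sum into the double sum defining $\BG_\ell(\mbf{x}^\wdg,\dot{\mbf{x}}^\wdg,\mbf{x}^\wdg)$; this handles every $\ell$ in one stroke and needs nothing beyond absolute convergence. The paper instead argues by induction on $\ell$: for $\ell=0$ it invokes the kinematics $\dot{\BG}_0(\mbf{x}^\wdg)=\mbf{v}^\wdg\BG_0(\mbf{x}^\wdg)$ with $\mbf{v}=\Bg_0(\mbf{x}^\Wdg,\dot{\mbf{x}})$ and then applies Lemma~\ref{thm:bbinit1} to rewrite $\Bg_0(\mbf{x}^\Wdg,\dot{\mbf{x}})^\wdg\BG_0(\mbf{x}^\wdg)$ as $\BG_0(\mbf{x}^\wdg,\dot{\mbf{x}}^\wdg,\mbf{x}^\wdg)$; the inductive step is exactly the integral-recursion argument you sketch as your alternative. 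Your route is more elementary and self-contained, while the paper's route reinforces its central theme that the building blocks are interrelated through the adjoint map and the integral recursion of Lemma~\ref{thm:bbrecursive}. Amusingly, your ``alternative'' is the paper's actual proof, except that you would establish the base case by your direct series computation rather than via kinematics and Lemma~\ref{thm:bbinit1}.
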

\begin{proof}
We use a proof by induction again.  For the $\ell=0$ case we have
\begin{equation}
\dot{\BG}_0(\mbf{x}^\wdg) = \mbf{v}^\wdg \BG_0(\mbf{x}^\wdg) =  \Bg_0(\mbf{x}^\Wdg, \dot{\mbf{x}})^\wdg  \BG_0(\mbf{x}^\wdg) = \BG_0(\mbf{x}^\wdg,\dot{\mbf{x}}^\wdg, \mbf{x}^\wdg),
\end{equation}
where we employed Lemma~\ref{thm:bbinit1} from Appendix~\ref{sec:ellzero} in the last step.  Now we assume the relation holds for $\ell$ and show this implies it holds for $\ell +1$:
\begin{multline}
\dot{\BG}_{\ell+1}(\mbf{x}^\wdg) = \frac{d}{dt} \BG_{\ell+1}(\mbf{x}^\wdg) = \frac{d}{dt} \int_0^1 \alpha^\ell \BG_\ell(\alpha\mbf{x}^\wdg) \, d\alpha = \int_0^1 \alpha^\ell \frac{d}{dt} \BG_\ell(\alpha\mbf{x}^\wdg) \, d\alpha \\ = \int_0^1 \alpha^\ell \BG_\ell(\alpha\mbf{x}^\wdg,\alpha\dot{\mbf{x}}^\wdg, \alpha\mbf{x}^\wdg) \, d\alpha = \BG_{\ell+1}(\mbf{x}^\wdg,\dot{\mbf{x}}^\wdg,\mbf{x}^\wdg),
\end{multline}
concluding the proof.
\end{proof}
\begin{corollary}\label{thm:adkin}
Following on to Theorem~\ref{thm:kin}, there is an adjoint version:
\begin{equation}
\dot{\BG}_\ell(\mbf{x}^\Wdg) = \BG_\ell(\mbf{x}^\Wdg,\dot{\mbf{x}}^\Wdg,\mbf{x}^\Wdg).
\end{equation}
\end{corollary}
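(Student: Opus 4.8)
The plan is to prove the identity directly from the defining power series rather than by the induction used for Theorem~\ref{thm:kin}; the computation is short enough that it simultaneously reproves Theorem~\ref{thm:kin} with $\wdg$ in place of $\Wdg$. Start from the template definition $\BG_\ell(\mbf{x}^\Wdg) = \sum_{m=0}^\infty \frac{1}{(\ell+m)!}(\mbf{x}^\Wdg)^m$. Since $\Wdg$ is linear in its argument, $\frac{d}{dt}\mbf{x}^\Wdg = \dot{\mbf{x}}^\Wdg$, so the matrix Leibniz rule gives $\frac{d}{dt}(\mbf{x}^\Wdg)^m = \sum_{k=0}^{m-1}(\mbf{x}^\Wdg)^k\,\dot{\mbf{x}}^\Wdg\,(\mbf{x}^\Wdg)^{m-1-k}$.

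First I would differentiate the series term by term to obtain $\dot{\BG}_\ell(\mbf{x}^\Wdg) = \sum_{m\ge1}\frac{1}{(\ell+m)!}\sum_{k=0}^{m-1}(\mbf{x}^\Wdg)^k\,\dot{\mbf{x}}^\Wdg\,(\mbf{x}^\Wdg)^{m-1-k}$. Then I would reindex the double sum with $k\ge0$ and $n:=m-1-k\ge0$, so that $m=k+n+1$ and $(\ell+m)!=(\ell+k+n+1)!$; the pair $(k,n)$ then sweeps all of $\{0,1,2,\dots\}^2$, and the sum becomes $\sum_{k\ge0}\sum_{n\ge0}\frac{1}{(\ell+k+n+1)!}(\mbf{x}^\Wdg)^k\,\dot{\mbf{x}}^\Wdg\,(\mbf{x}^\Wdg)^n$, which is exactly $\BG_\ell(\mbf{x}^\Wdg,\dot{\mbf{x}}^\Wdg,\mbf{x}^\Wdg)$ by the template in~\eqref{eq:bb}. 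Repeating the two steps with $\Wdg$ replaced by $\wdg$ gives an independent, self-contained proof of Theorem~\ref{thm:kin} as well.

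The only step that needs a word of justification is the term-by-term differentiation, and this is routine: $\mbf{x}^\Wdg$ and $\dot{\mbf{x}}^\Wdg$ are fixed bounded matrices and the coefficients $1/(\ell+m)!$ decay faster than geometrically, so both the series and its formal derivative converge absolutely and uniformly on any bounded $t$-interval and $\frac{d}{dt}$ may be exchanged with the summation. I do not anticipate any other obstacle. For readers who would rather stay inside the inductive framework of Section~\ref{sec:derivatives}, one can instead mirror the proof of Theorem~\ref{thm:kin} verbatim: the $\ell=0$ base case is the $m$-sum computation above specialized to $\ell=0$ (equivalently, the adjoint kinematics $\dot{\BG}_0(\mbf{x}^\Wdg)=\mbf{v}^\Wdg\BG_0(\mbf{x}^\Wdg)$ with $\mbf{v}=\BG_1(\mbf{x}^\Wdg)\dot{\mbf{x}}$, rearranged into building-block form), and the passage from $\ell$ to $\ell+1$ differentiates $\BG_{\ell+1}(\mbf{x}^\Wdg)=\int_0^1\alpha^\ell\BG_\ell(\alpha\mbf{x}^\Wdg)\,d\alpha$ under the integral sign using Lemma~\ref{thm:bbrecursive} and the induction hypothesis.
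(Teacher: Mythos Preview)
Your direct power-series argument is correct and is genuinely different from the paper's proof. The paper proceeds by induction on $\ell$, mirroring Theorem~\ref{thm:kin}: for the base case it invokes the adjoint kinematics $\dot{\BG}_0(\mbf{x}^\Wdg) = \mbf{v}^\Wdg \BG_0(\mbf{x}^\Wdg)$ with $\mbf{v} = \BG_1(\mbf{x}^\Wdg)\dot{\mbf{x}}$, then appeals to an adjoint version of Lemma~\ref{thm:bbinit1} to rewrite $\mbf{v}^\Wdg\BG_0(\mbf{x}^\Wdg)$ as $\BG_0(\mbf{x}^\Wdg,\dot{\mbf{x}}^\Wdg,\mbf{x}^\Wdg)$, and the inductive step differentiates under the integral via Lemma~\ref{thm:bbrecursive}. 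Your route bypasses all three of those ingredients: the Leibniz expansion of $\frac{d}{dt}(\mbf{x}^\Wdg)^m$ followed by the reindexing $(k,n)=(k,m-1-k)$ lands on the template $\bg_\ell(x,y,z)$ in one stroke, for every $\ell$ simultaneously, and the same two lines prove Theorem~\ref{thm:kin} as you note. The trade-off is that the paper's proof stays within its integral/building-block machinery and so highlights the internal consistency of that framework, whereas your argument is more elementary and self-contained but does not exercise the recursive structure the paper is advertising. Your closing paragraph, which sketches the inductive alternative, is in fact exactly the paper's proof.
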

\begin{proof}
Similar to proof of Theorem~\ref{thm:kin} but requiring that we use the adjoint kinematics:
\begin{equation}
\dot{\BG}_0(\mbf{x}^\Wdg) = \mbf{v}^\Wdg \BG_0(\mbf{x}^\Wdg).
\end{equation}
with $\mbf{v} = \BG_1(\mbf{x}^\Wdg)\, \dot{\mbf{x}} = \Bg_0(\mbf{x}^\Wdg, \dot{\mbf{x}})$.   Also requires a generalization of  Lemma~\ref{thm:bbinit1} in Appendix~\ref{sec:ellzero} to the adjoint case:  $\BG_0(\mbf{x}^\Wdg,\dot{\mbf{x}}^\Wdg,\mbf{x}^\Wdg) = \Bg_0(\mbf{x}^\Wdg, \dot{\mbf{x}})^\Wdg  \BG_0(\mbf{x}^\Wdg)$.
\end{proof}

This can be useful when we are interested in higher-order derivatives in our kinematics.  For example, we have
\begin{equation}
\dot{\mbf{v}} = \frac{d}{dt} \left( \BG_1(\mbf{x}^\wdg)\, \dot{\mbf{x}} \right) =  \dot{\BG}_1(\mbf{x}^\wdg)\, \dot{\mbf{x}} + \BG_1(\mbf{x}^\wdg)\, \ddot{\mbf{x}} = \BG_1(\mbf{x}^\wdg, \dot{\mbf{x}}^\wdg,\mbf{x}^\wdg)\, \dot{\mbf{x}} + \BG_1(\mbf{x}^\wdg)\, \ddot{\mbf{x}},
\end{equation}
as a simple demonstration.

Finally we state a result involving partial derivatives with respect to a Lie algebra vector, generalizing results of \citet[App.B.]{barfoot_ser24}.

\begin{theorem}\label{thm:partialderiv}
Given the building blocks in~\eqref{eq:bb}, we can compute the partial derivative of $\Bg_0(\mbf{x}^\Wdg,\mbf{y})$ with respect to $\mbf{x}$ as
\begin{equation}
\frac{\partial \Bg_0(\mbf{x}^\Wdg,\mbf{y})}{\partial \mbf{x}} = \BG_1(\mbf{x}^\Wdg,\mbf{y}^\Wdg,\mbf{x}^\Wdg) - \Bg_0(\mbf{x}^\Wdg,\mbf{y})^\Wdg \BG_1(\mbf{x}^\Wdg).
\end{equation}
\end{theorem}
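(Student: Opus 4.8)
The plan is to reduce the statement to the $\ell=0$ case by an integral representation and then to integrate by parts, keeping the whole argument inside the calculus of building blocks. First, by Remark~\ref{thm:remark1} and Lemma~\ref{thm:bbrecursive} I would write
\begin{equation}
\Bg_0(\mbf{x}^\Wdg,\mbf{y}) \;=\; \BG_1(\mbf{x}^\Wdg)\,\mbf{y} \;=\; \int_0^1 \BG_0(\alpha\mbf{x}^\Wdg)\,\mbf{y}\;d\alpha ,
\end{equation}
so that, differentiating under the integral sign (here $\mbf{x},\mbf{y}$ are both Lie-algebra vectors, since $\mbf{y}^\Wdg$ must make sense), everything reduces to the partial derivative of the adjoint action $\BG_0(\alpha\mbf{x}^\Wdg)\mbf{y}$. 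The one external ingredient I need is the $\ell=0$ rule
\begin{equation}
\frac{\partial}{\partial\mbf{z}}\bigl(\BG_0(\mbf{z}^\Wdg)\,\mbf{y}\bigr) \;=\; -\bigl(\BG_0(\mbf{z}^\Wdg)\,\mbf{y}\bigr)^\Wdg\,\BG_1(\mbf{z}^\Wdg),
\end{equation}
the general-group analogue of the familiar $SO(3)$ and $SE(3)$ identities in \citet{barfoot_ser24}; I would either cite it or re-derive it in one line from the first-order left perturbation $\exp\bigl((\mbf{z}+\delta\mbf{z})^\wdg\bigr)=\exp\bigl((\BG_1(\mbf{z}^\Wdg)\delta\mbf{z})^\wdg\bigr)\exp(\mbf{z}^\wdg)$ (the left Jacobian being exactly $\BG_1(\mbf{z}^\Wdg)$) together with the bracket antisymmetry $\mbf{a}^\Wdg\mbf{b}=-\mbf{b}^\Wdg\mbf{a}$. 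Applying it with $\mbf{z}=\alpha\mbf{x}$ and the chain rule gives
\begin{equation}
\frac{\partial\,\Bg_0(\mbf{x}^\Wdg,\mbf{y})}{\partial\mbf{x}} \;=\; -\int_0^1 \alpha\,\bigl(\BG_0(\alpha\mbf{x}^\Wdg)\,\mbf{y}\bigr)^\Wdg\,\BG_1(\alpha\mbf{x}^\Wdg)\;d\alpha .
\end{equation}

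Next I would introduce the two relevant antiderivatives. Setting $\mbf{p}(\alpha):=\alpha\,\Bg_0(\alpha\mbf{x}^\Wdg,\mbf{y})$ and $\mbf{q}(\alpha):=\alpha\,\BG_1(\alpha\mbf{x}^\Wdg)$, the series definitions in~\eqref{eq:bb} give $\mbf{p}'(\alpha)=\BG_0(\alpha\mbf{x}^\Wdg)\mbf{y}$ and $\mbf{q}'(\alpha)=\BG_0(\alpha\mbf{x}^\Wdg)$, with $\mbf{p}(0)=\mbf{q}(0)=\mbf{0}$, $\mbf{p}(1)=\Bg_0(\mbf{x}^\Wdg,\mbf{y})$, and $\mbf{q}(1)=\BG_1(\mbf{x}^\Wdg)$. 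Since $\Wdg$ is linear it commutes with $d/d\alpha$, so the integrand above is $\tfrac{d}{d\alpha}\bigl(\mbf{p}(\alpha)^\Wdg\bigr)\,\mbf{q}(\alpha)$, and integration by parts (with the $\alpha=0$ boundary term killed by $\mbf{p}(0)=\mbf{q}(0)=\mbf{0}$) yields
\begin{equation}
\frac{\partial\,\Bg_0(\mbf{x}^\Wdg,\mbf{y})}{\partial\mbf{x}} \;=\; -\,\Bg_0(\mbf{x}^\Wdg,\mbf{y})^\Wdg\,\BG_1(\mbf{x}^\Wdg) \;+\; \int_0^1 \alpha\,\Bg_0(\alpha\mbf{x}^\Wdg,\mbf{y})^\Wdg\,\BG_0(\alpha\mbf{x}^\Wdg)\;d\alpha .
\end{equation}

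Finally, to identify the leftover integral I would invoke the adjoint form of Lemma~\ref{thm:bbinit1} (the one used in the proof of Corollary~\ref{thm:adkin}), $\Bg_0(\mbf{z}^\Wdg,\mbf{y})^\Wdg\,\BG_0(\mbf{z}^\Wdg)=\BG_0(\mbf{z}^\Wdg,\mbf{y}^\Wdg,\mbf{z}^\Wdg)$, applied at $\mbf{z}=\alpha\mbf{x}$: a short check of the powers of $\alpha$ shows $\alpha\,\Bg_0(\alpha\mbf{x}^\Wdg,\mbf{y})^\Wdg\,\BG_0(\alpha\mbf{x}^\Wdg)=\BG_0(\alpha\mbf{x}^\Wdg,\alpha\mbf{y}^\Wdg,\alpha\mbf{x}^\Wdg)$, whence by Lemma~\ref{thm:bbrecursive} the integral is exactly $\BG_1(\mbf{x}^\Wdg,\mbf{y}^\Wdg,\mbf{x}^\Wdg)$, giving the claimed formula. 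I expect the main obstacle to be not a single hard computation but rather (i) pinning down the $\ell=0$ differentiation rule cleanly in the fully general matrix-Lie-group setting, and (ii) keeping the $\alpha$-scalings straight so that the integration-by-parts remainder is correctly recognized as a $\BG_1(\cdot,\cdot,\cdot)$ block. A self-contained fallback, if one prefers to avoid the group-level perturbation argument, is a direct series computation: differentiate $\mbf{x}^{\Wdg^m}\mbf{y}$ by the product rule, use $(\mbf{x}^{\Wdg^n}\mbf{y})^\Wdg=\sum_{j=0}^n(-1)^j\binom{n}{j}\mbf{x}^{\Wdg^{n-j}}\mbf{y}^\Wdg\mbf{x}^{\Wdg^j}$, and collapse the resulting double sum with the hockey-stick identity and a Beta-integral evaluation of the coefficients; this is correct but noticeably messier than the integral route.
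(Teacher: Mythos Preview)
Your proposal is correct and follows essentially the same route as the paper's own proof: write $\Bg_0(\mbf{x}^\Wdg,\mbf{y})=\int_0^1\BG_0(\alpha\mbf{x}^\Wdg)\mbf{y}\,d\alpha$, differentiate under the integral using the $\ell=0$ rule $\partial(\BG_0(\mbf{z}^\Wdg)\mbf{y})/\partial\mbf{z}=-(\BG_0(\mbf{z}^\Wdg)\mbf{y})^\Wdg\BG_1(\mbf{z}^\Wdg)$, integrate by parts with the same antiderivatives $\alpha\BG_1(\alpha\mbf{x}^\Wdg)\mbf{y}$ and $\alpha\BG_1(\alpha\mbf{x}^\Wdg)$, and identify the remaining integral via the adjoint form of Lemma~\ref{thm:bbinit1} together with Lemma~\ref{thm:bbrecursive}. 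The only cosmetic difference is that you name the antiderivatives $\mbf{p},\mbf{q}$ explicitly while the paper writes the integration by parts inline.
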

\begin{proof}
We manipulate the building blocks as follows:
\begin{multline}
\frac{\partial \Bg_0(\mbf{x}^\Wdg,\mbf{y})}{\partial \mbf{x}} = \frac{\partial}{\partial \mbf{x}} \BG_1(\mbf{x}^\Wdg) \mbf{y} = \frac{\partial}{\partial \mbf{x}} \int_0^1 \BG_0(\alpha\mbf{x}^\Wdg) \, d\alpha \, \mbf{y} \\ =  \int_0^1 \frac{\partial}{\partial \mbf{x}} \left( \BG_0(\alpha\mbf{x}^\Wdg) \mbf{y}\right) \, d\alpha  = - \int_0^1\left( \BG_0(\alpha\mbf{x}^\Wdg) \alpha  \mbf{y} \right)^\Wdg \BG_1(\alpha\mbf{x}^\Wdg) \, d\alpha \hspace*{1.9in} \\ = \left. - \left( \BG_1(\alpha\mbf{x}^\Wdg) \alpha\mbf{y}\right)^\Wdg \alpha \BG_1(\alpha\mbf{x}^\Wdg)  \right|_{\alpha=0}^1 + \int_0^1  \left( \BG_1(\alpha\mbf{x}^\Wdg) \alpha\mbf{y}\right)^\Wdg \BG_0(\alpha\mbf{x}^\Wdg) \, d\alpha \hspace*{1.45in} \\ = \int_0^1  \BG_0\left(\alpha\mbf{x}^\Wdg, \alpha\mbf{y}^\Wdg, \alpha\mbf{x}^\Wdg\right) \, d\alpha - \left( \BG_1(\mbf{x}^\Wdg) \mbf{y}\right)^\Wdg  \BG_1(\mbf{x}^\Wdg) = \BG_1(\mbf{x}^\Wdg,\mbf{y}^\Wdg, \mbf{x}^\Wdg) - \Bg_0(\mbf{x}^\Wdg,\mbf{y})^\Wdg \BG_1(\mbf{x}^\Wdg),
\end{multline}
as required.
\end{proof}

As an example of Theorem~\ref{thm:partialderiv}, consider the case of $SO(3)$ (recall $\wdg = \Wdg$) when computing the partial of angular velocity with respect to the Lie algebra vector for rotation:
\begin{multline}
\frac{\partial \mbs{\omega}(\mbs{\phi},\dot{\mbs{\phi}})}{\partial \mbs{\phi}} = \frac{\partial}{\partial \mbs{\phi}} \left( \mbf{J}(\mbs{\phi}) \dot{\mbs{\phi}} \right) = \frac{\partial}{\partial \mbs{\phi}} \Bg_0(\mbs{\phi}^\Wdg,\dot{\mbs{\phi}}) = \BG_1(\mbs{\phi}^\Wdg,\dot{\mbs{\phi}}^\Wdg,\mbs{\phi}^\Wdg) - \Bg_0(\mbs{\phi}^\Wdg,\dot{\mbs{\phi}})^\Wdg \BG_1(\mbs{\phi}^\Wdg) \\ = \dot{\BG}_1(\mbs{\phi}^\Wdg) - \Bg_0(\mbs{\phi}^\Wdg,\dot{\mbs{\phi}})^\wdg \BG_1(\mbs{\phi}^\Wdg)  = \dot{\mbf{J}}(\mbs{\phi}) - \mbs{\omega}(\mbs{\phi},\dot{\mbs{\phi}})^\wdg \mbf{J}(\mbs{\phi}),
\end{multline}
which is a well-known result in attitude dynamics \citep{hughes86}.  Or, alternatively, consider taking the derivative of the translational component of $SE(3)$ with respect to the angular component of the Lie algebra vector:
\begin{equation}
\frac{\partial \mbf{r}(\mbs{\phi},\mbs{\rho})}{\partial \mbs{\phi}} = \frac{\partial}{\partial \mbs{\phi}} \left( \mbf{J}(\mbs{\phi}) \mbs{\rho} \right) = \frac{\partial}{\partial \mbs{\phi}} \Bg_0(\mbs{\phi}^\Wdg,\mbs{\rho}) = \BG_1(\mbs{\phi}^\Wdg,\mbs{\rho}^\Wdg,\mbs{\phi}^\Wdg) - \Bg_0(\mbs{\phi}^\Wdg,\mbs{\rho})^\Wdg \BG_1(\mbs{\phi}^\Wdg) = \mbf{Q}(\mbs{\phi},\mbs{\rho}) - \mbf{r}(\mbs{\phi},\mbs{\rho})^\wdg \mbf{J}(\mbs{\phi}),
\end{equation}
which is again recognizable in terms of our building blocks.\footnote{This identity was conjectured to the author by Frank Dellaert in a personal communication.}

\section{Conclusion}

\begin{table}[p]
\caption{Summary of matrix Lie group substructures.  Where equations are too long for the table, references to equation numbers are provided.}
\label{tbl:substructure}
\centering
\scalebox{0.90}{
\begin{tabular}{|c|c|c|}
\hline Lie Group & Key Quantities  & Additional Notes \\ \hline \hline
$SO(2)$ & $\mbf{C}(\phi) = \cos\phi \, \mbf{1} + \sin\phi \, \mbf{S}$ & $\mbf{S} = \bbm 0 & -1 \\ 1 & 0 \ebm$ \\
$\mbox{Ad}(SO(2))$ & $\mathcal{C}(\phi) = 1$ &  \\
Jacobian & $\mathcal{J}(\phi) = 1$ &  \\ \hline
$SE(2)$ & $\Tsmall(\mbs{\xi}) = \bbm \mbf{C}(\phi) & \BG_1(\phi^\wdg) \mbs{\rho} \\ \mbf{0}^T & 1 \ebm$ & $\mbs{\xi} = \bbm \mbs{\rho} \\ \phi \ebm$ \\
$\mbox{Ad}(SE(2))$ & $\Tbig(\mbs{\xi}) = \bbm \mbf{C}(\phi) & -\mbf{S}\, \BG_1(\phi^\wdg) \mbs{\rho} \\ \mbf{0}^T & 1 \ebm$ & $\BG_1(\phi^\wdg) = \mbox{\eqref{eq:SO2_1}}$  \\ 
Jacobian & $\Jbig(\mbs{\xi}) = \bbm \BG_1(\phi^\wdg) & - \mbf{S} \, \BG_2(\phi^\wdg) \mbs{\rho} \\ \mbf{0}^T & 1 \ebm$ & $\BG_2(\phi^\wdg) = \mbox{\eqref{eq:SO2_2}}$ \\ \hline \hline
$SO(3)$ & $\mbf{C}(\mbs{\phi}) =  \mbf{1} +  \frac{\sin\phi}{\phi}  \mbs{\phi}^\wdg + \frac{1-\cos\phi}{\phi^2} \mbs{\phi}^{\wdg^2}$ & \\ 
$\mbox{Ad}(SO(3))$ & $\mbf{C}(\mbs{\phi})$ & self-adjoint \\ 
Jacobian & $\mbf{J}(\mbs{\phi}) = \mbf{1} + \frac{1-\cos\phi}{\phi^2} \mbs{\phi}^\wdg + \frac{\phi - \sin\phi}{\phi^3} \mbs{\phi}^{\wdg^2}$ & \\ \hline
$SE(3)$ & $\Tsmall(\mbs{\xi}) = \bbm \mbf{C}(\mbs{\phi}) & \mbf{J}(\mbs{\phi}) \mbs{\rho} \\ \mbf{0}^T & 1 \ebm$ & $\mbs{\xi} = \bbm \mbs{\rho} \\ \mbs{\phi} \ebm$ \\
$\mbox{Ad}(SE(3))$ & $\Tbig(\mbs{\xi}) = \bbm \mbf{C}(\mbs{\phi}) & \left(\mbf{J}(\mbs{\phi}) \mbs{\rho}\right)^\wdg \mbf{C}(\mbs{\phi}) \\ \mbf{0} & \mbf{C}(\mbs{\phi}) \ebm = \mbox{\eqref{eq:SE3_monoT}}$ & \\
Jacobian & $\Jbig(\mbs{\xi}) = \bbm \mbf{J}(\mbs{\phi})  & \mbf{Q}(\mbs{\phi}, \mbs{\rho}) \\ \mbf{0}^T & \mbf{J}(\mbs{\phi}) \ebm = \mbox{\eqref{eq:SE3_monoJ}}$ & $\mbf{Q}(\mbs{\phi}, \mbs{\rho}) = \mbox{\eqref{eq:SE3_Q}}$ \\ \hline
$SE_2(3)$ & $\Tsmall(\mbs{\xi}) = \bbm \mbf{C}(\mbs{\phi}) & \mbf{J}(\mbs{\phi}) \mbs{\nu} & \mbf{J}(\mbs{\phi}) \mbs{\rho} \\ \mbf{0}^T & 1 & 0 \\ \mbf{0}^T & 0 & 1 \ebm$ & $\mbs{\xi} = \bbm \mbs{\rho} \\ \mbs{\nu} \\ \mbs{\phi} \ebm$ \\
$\mbox{Ad}(SE_2(3))$ & $\Tbig(\mbs{\xi}) = \bbm \mbf{C}(\mbs{\phi})  & \mbf{0} & (\mbf{J}(\mbs{\phi})\mbs{\rho})^\wdg \mbf{C}(\mbs{\phi}) \\ \mbf{0} & \mbf{C}(\mbs{\phi}) & (\mbf{J}(\mbs{\phi})\mbs{\nu})^\wdg  \mbf{C}(\mbs{\phi}) \\ \mbf{0} & \mbf{0} & \mbf{C}(\mbs{\phi}) \ebm$ &  \\
Jacobian & $\Jbig(\mbs{\xi}) = \bbm \mbf{J}(\mbs{\phi})  & \mbf{0} & \mbf{Q}(\mbs{\phi},\mbs{\rho}) \\ \mbf{0} & \mbf{J}(\mbs{\phi}) & \mbf{Q}(\mbs{\phi},\mbs{\nu})  \\ \mbf{0} & \mbf{0} & \mbf{J}(\mbs{\phi}) \ebm$ & $\mbf{Q}(\mbs{\phi}, \cdot) = \mbox{\eqref{eq:SE3_Q}}$ \\ \hline
$SGal(3)$ & $\Tsmall(\mbs{\xi}) = \bbm \mbf{C}(\mbs{\phi}) & \mbf{J}(\mbs{\phi}) \mbs{\nu} & \mbf{J}(\mbs{\phi}) \mbs{\rho} +   \mbf{N}(\mbs{\phi}) \mbs{\nu} \, \tau \\ \mbf{0}^T & 1 & \tau \\ \mbf{0}^T & 0 & 1 \ebm$ & $\mbs{\xi} = \bbm \mbs{\rho} \\ \mbs{\nu} \\ \mbs{\phi} \\ \tau \ebm$ \\
$\mbox{Ad}(SGal(3))$ & $\Tbig(\mbs{\xi}) = \bbm \mbf{C}(\mbs{\phi})  & -\mbf{C}(\mbs{\phi})\tau & \left(\mbf{J}(\mbs{\phi}) \mbs{\rho} -  (\mbf{J}(\mbs{\phi}) - \mbf{N}(\mbs{\phi}) )\mbs{\nu} \tau \right)^\wdg \mbf{C}(\mbs{\phi}) & \mbf{J}(\mbs{\phi}) \mbs{\nu} \\ \mbf{0} & \mbf{C}(\mbs{\phi}) & (\mbf{J}(\mbs{\phi})\mbs{\nu})^\wdg  \mbf{C}(\mbs{\phi}) & \mbf{0}  \\ \mbf{0} & \mbf{0} & \mbf{C}(\mbs{\phi}) & \mbf{0} \\ \mbf{0}^T & \mbf{0}^T & \mbf{0}^T & 1 \ebm$ & $\mbf{N}(\mbs{\phi}) = \mbox{\eqref{eq:SO3_Nnu}}$ \\
Jacobian & $\Jbig(\mbs{\xi}) =  \bbm \mbf{J}(\mbs{\phi}) & -\BG_1(\mbs{\phi}^\wdg, \tau) & \mbf{Q}(\mbs{\phi},\mbs{\rho}) - \BG_1(\mbs{\phi}^\wdg, \mbs{\nu}^\wdg, \mbs{\phi}^\wdg, \tau) & \mbf{N}(\mbs{\phi}) \mbs{\nu} \\ \mbf{0} & \mbf{J}(\mbs{\phi}) & \mbf{Q}(\mbs{\phi},\mbs{\nu}) & \mbf{0} \\ \mbf{0} & \mbf{0} &  \mbf{J}(\mbs{\phi}) & \mbf{0} \\ \mbf{0}^T & \mbf{0}^T & \mbf{0}^T & 1 \ebm$ & $\begin{matrix} \BG_1(\mbs{\phi}^\wdg, \tau) = \mbox{\eqref{eq:SGal3_12}} \\ \BG_1(\mbs{\phi}^\wdg, \mbs{\nu}^\wdg, \mbs{\phi}^\wdg, \tau) = \mbox{\eqref{eq:SGal3_13b}} \end{matrix}$ \\ \hline
$Sim(3)$ & $\Tsmall(\mbs{\xi}) = \bbm \mbf{C}(\mbs{\phi}) & \mbf{M}(\mbs{\phi},\lambda) \mbs{\rho} \\ \mbf{0}^T & s(\lambda)^{-1} \ebm$ & $\mbs{\xi} = \bbm \mbs{\rho} \\ \mbs{\phi} \\ \lambda \ebm$ \\
$\mbox{Ad}(Sim(3))$ & $\Tbig(\mbs{\xi}) = \bbm \mbf{C}(\mbs{\phi})\, s(\lambda) &  (\mbf{M}(\mbs{\phi},\lambda) \mbs{\rho} )^\wdg \mbf{C}(\mbs{\phi}) \, s(\lambda) & - \mbf{M}(\mbs{\phi},\lambda) \mbs{\rho}  \, s(\lambda) \\ \mbf{0} & \mbf{C}(\mbs{\phi}) & \mbf{0} \\ \mbf{0}^T & \mbf{0}^T & 1 \ebm$ & $\begin{matrix}  \mbf{M}(\mbs{\phi},\lambda) = \mbox{\eqref{eq:Sim3_r}} \\ s(\lambda) = \exp(\lambda) \end{matrix}$ \\ 
Jacobian & $\Jbig(\mbs{\xi}) = \int_0^1 \Tbig(\alpha\mbs{\xi}) \, d\alpha$ & left as an exercise \\ \hline
\end{tabular}
}
\end{table}

We have shown some new structure within matrix Lie groups based on the recursive integral relationship of Lemma~\ref{thm:bbrecursive}.  The ability to seamlessly transfer back and forth between series form and integral form means we can pick whichever tool is more amenable to the task at hand.  We studied several common groups and their substructures, recovering some hard-won results from the literature using our simpler integral approach; Table~\ref{tbl:substructure} summarizes these results.  In particular, we advocate for computing Jacobians block by block as this not only maintains the block-sparsity of the resulting matrix, but also modularizes the effort and keeps the calculations as simple as possible.  We believe that understanding the structure of matrix Lie groups in terms of our building blocks will generalize to other groups not studied here, particularly those built upon the rotation groups $SO(2)$ and $SO(3)$.  This may require further building blocks to be defined, but we hope the essence of our integral approach will still apply.

\section{Acknowledgements}

Several of the integrals were solved analytically using OpenAI's ChatGPT and Wolfram Alpha.  The author would like to thank Frank Dellaert, Alessandro Fornasier, and William Talbot for providing useful feedback on the paper.

\newpage
\appendix

\section{Lemmata}

This appendix provides some results used in the main body of the paper.  We locate them here to keep the main paper flow as clean as possible.

\subsection{Key Identities}\label{sec:identities}

There are a few key identities used elsewhere in the paper that we prove here.

\begin{lemma}\label{thm:intfact1}
For $m$ and $n$ whole numbers we have
\begin{equation}
\int_0^1 \alpha^m (1-\alpha)^n \, d\alpha = \frac{m!\,n!}{(m+n+1)!}.
\end{equation}
\end{lemma}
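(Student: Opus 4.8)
The plan is to prove the identity by induction on $n$, holding the statement for all whole numbers $m$ simultaneously, with the inductive step powered by a single integration by parts. This is essentially the Euler Beta-function integral $B(m+1,n+1)$, but since the paper's style favors elementary self-contained arguments, induction plus integration by parts keeps everything at the level of calculus already used elsewhere.

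First I would dispatch the base case $n=0$: here $\int_0^1 \alpha^m \, d\alpha = \frac{1}{m+1} = \frac{m!\,0!}{(m+1)!}$, which matches the claimed formula. Next, assuming the identity holds for $n-1$ and every whole number $m$, I would integrate by parts on $\int_0^1 \alpha^m (1-\alpha)^n \, d\alpha$ with $u = (1-\alpha)^n$ and $dv = \alpha^m \, d\alpha$, so that $v = \frac{\alpha^{m+1}}{m+1}$ and $du = -n(1-\alpha)^{n-1}\, d\alpha$. The boundary term $\left[ \frac{\alpha^{m+1}}{m+1}(1-\alpha)^n \right]_0^1$ vanishes at both endpoints (the factor $\alpha^{m+1}$ kills it at $0$ and $(1-\alpha)^n$ kills it at $1$, using $n \ge 1$), leaving
\begin{equation}
\int_0^1 \alpha^m (1-\alpha)^n \, d\alpha = \frac{n}{m+1} \int_0^1 \alpha^{m+1} (1-\alpha)^{n-1} \, d\alpha.
\end{equation}
Applying the induction hypothesis to the right-hand integral with $m$ replaced by $m+1$ gives $\frac{n}{m+1} \cdot \frac{(m+1)!\,(n-1)!}{(m+n+1)!}$, and a one-line simplification ($\frac{n}{m+1}(m+1)! = n\cdot m! \cdot \frac{m+1}{m+1}$, i.e. $n\,(m)!\cdot$ wait — more plainly $\frac{(m+1)!}{m+1} = m!$ and $n\,(n-1)! = n!$) collapses this to $\frac{m!\,n!}{(m+n+1)!}$, completing the induction.

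There is no serious obstacle here; the only point requiring a moment's care is that the induction must be set up over $n$ with $m$ ranging freely, because the integration by parts shifts $m \mapsto m+1$ while lowering $n$. An alternative, if one wished to avoid even this subtlety, would be to cite the Beta–Gamma relation $\int_0^1 \alpha^m(1-\alpha)^n\,d\alpha = \frac{\Gamma(m+1)\Gamma(n+1)}{\Gamma(m+n+2)}$ and evaluate the Gamma functions at integers, but the inductive argument is more elementary and fits the paper's tone.
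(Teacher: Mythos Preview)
Your proof is correct and takes essentially the same approach as the paper: a single integration by parts yields a recursion, which is then iterated to the base case. The only cosmetic difference is that the paper chooses $u=\alpha^m$, $dv=(1-\alpha)^n\,d\alpha$ and so lowers $m$ while raising $n$ (repeating $m$ times), whereas you choose the roles oppositely and lower $n$ by induction; by the $\alpha\leftrightarrow 1-\alpha$ symmetry of the integrand these are equivalent.
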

\begin{proof}
We can integrate by parts once as follows:
\begin{multline}
\int_0^1 \underbrace{\alpha^m}_{u} \underbrace{(1-\alpha)^n \, d\alpha}_{dv} = \biggl. \underbrace{\alpha^m}_{u} \underbrace{\left( -\frac{1}{n+1} (1-\alpha)^{n+1} \right)}_{v} \biggr|_{\alpha=0}^1 - \int_0^1  \underbrace{\left( -\frac{1}{n+1} (1-\alpha)^{n+1} \right)}_{v} \underbrace{m \alpha^{m-1} \, d\alpha}_{du} \\ = \frac{m}{n+1} \int_0^1 \alpha^{m-1} (1-\alpha)^{n+1} \, d\alpha.
\end{multline}
After $m$ such integrations by parts we have
\begin{equation}
\int_0^1 \alpha^m (1-\alpha)^n \, d\alpha = \frac{m \cdot (m-1) \cdots 2 \cdot 1}{(n+1) \cdot (n+2) \cdots (n+m-1) \cdot (n+m)} \underbrace{\int_0^1 \alpha^0 (1-\alpha)^{n+m} \, d\alpha}_{\frac{1}{n+m+1}} = \frac{m!\,n!}{(m+n+1)!},
\end{equation}
as required.
\end{proof}

There is another important identity of which we make use captured in the next lemma.
\begin{lemma}\label{thm:intfact2}
For $m$ and $n$ whole numbers we have
\begin{equation}
\int_0^1 \alpha \int_0^1 (\alpha\beta)^m (1-\alpha\beta)^n \, d\beta \, d\alpha = \frac{m!\,n!}{(m+n+1)!} - \frac{(m+1)!\,n!}{(m+n+2)!}.
\end{equation}
\end{lemma}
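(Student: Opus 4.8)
The plan is to collapse the double integral to a single Beta-type integral and then invoke Lemma~\ref{thm:intfact1} directly. First I would substitute $u=\alpha\beta$ in the inner integral, treating $\alpha$ as fixed, so that $du=\alpha\,d\beta$ and $\int_0^1 (\alpha\beta)^m(1-\alpha\beta)^n\,d\beta = \frac{1}{\alpha}\int_0^\alpha u^m(1-u)^n\,du$. The explicit factor of $\alpha$ in the outer integrand then cancels this $\tfrac1\alpha$, leaving $\int_0^1\!\int_0^\alpha u^m(1-u)^n\,du\,d\alpha$, with no remaining weight.

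Next I would swap the order of integration over the triangle $\{\,0\le u\le\alpha\le 1\,\}$, which is legitimate because the integrand is a polynomial, hence continuous and bounded, on the unit square. This turns the iterated integral into $\int_0^1 u^m(1-u)^n\bigl(\int_u^1 d\alpha\bigr)\,du = \int_0^1 u^m(1-u)^{n+1}\,du$. Writing $(1-u)^{n+1}=(1-u)^n-u(1-u)^n$ splits this into $\int_0^1 u^m(1-u)^n\,du - \int_0^1 u^{m+1}(1-u)^n\,du$, and applying Lemma~\ref{thm:intfact1} to each piece gives exactly $\frac{m!\,n!}{(m+n+1)!}-\frac{(m+1)!\,n!}{(m+n+2)!}$.

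I do not expect a genuine obstacle: the only two points deserving a sentence each are the justification of Fubini (immediate, polynomial integrand on a compact set) and the bookkeeping of the substitution limits $\beta\in[0,1]\mapsto u\in[0,\alpha]$. An alternative route—expanding $(1-\alpha\beta)^n$ by the binomial theorem and integrating term by term—would instead leave the sum $\sum_{k=0}^n\binom nk(-1)^k\frac{1}{(m+k+1)(m+k+2)}$ to be resummed into the stated closed form, so the change-of-variables plus order-swap argument is the cleaner one and is the version I would write up.
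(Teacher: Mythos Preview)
Your proposal is correct and essentially mirrors the paper's proof: both begin with the same substitution $u=\alpha\beta$ to reduce to $\int_0^1\int_0^\alpha u^m(1-u)^n\,du\,d\alpha$, and both finish by applying Lemma~\ref{thm:intfact1} to the same pair of Beta integrals. The only cosmetic difference is that where you swap the order of integration over the triangle and then split $(1-u)^{n+1}=(1-u)^n-u(1-u)^n$, the paper instead integrates by parts in $\alpha$ (with $u=\int_0^\alpha\gamma^m(1-\gamma)^n\,d\gamma$, $dv=d\alpha$); both routes land on $\int_0^1 u^m(1-u)^n\,du-\int_0^1 u^{m+1}(1-u)^n\,du$ in one step, so there is no substantive divergence.
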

\begin{proof}
We begin by making a substitution of variables, replacing $\beta$ with $\gamma = \alpha \beta$ so that our integral becomes
\begin{equation}
\int_0^1 \int_0^\alpha \gamma^m (1-\gamma)^n \, d\gamma \, d\alpha.
\end{equation}
We then integrate by parts for the variable $\alpha$:
\begin{multline}
\int_0^1 \underbrace{\int_0^\alpha \gamma^m (1-\gamma)^n \, d\gamma}_{u} \, \underbrace{d\alpha}_{dv} = \biggl. \underbrace{\int_0^\alpha \gamma^m (1-\gamma)^n \, d\gamma}_{u} \underbrace{\alpha}_{v} \biggr|_{\alpha=0}^1 - \int_0^1 \underbrace{\alpha}_{v} \underbrace{\alpha^m (1-\alpha)^n \, d\alpha}_{du} \\ = \int_0^1 \gamma^m (1-\gamma)^n \, d\gamma - \int_0^1 \alpha^{m+1} (1-\alpha)^n \, d\alpha = \frac{m!\,n!}{(m+n+1)!} - \frac{(m+1)!\,n!}{(m+n+2)!},
\end{multline}
where the last step involves two applications of Lemma~\ref{thm:intfact1}.  The calculation of $du$ follows from the Fundamental Theorem of Calculus.
\end{proof}

The following multiplication of terms is also useful at times.
\begin{lemma}\label{thm:product}
The following $SO(3)$ expression holds:
\begin{multline}
\left( \left(a_0 \mbf{1} + a_1 \mbs{\phi}^\wdg + a_2 \mbs{\phi}^{\wdg^2} \right) \mbs{\rho} \right)^\wdg \left(b_0 \mbf{1} + b_1 \mbs{\phi}^\wdg + b_2 \mbs{\phi}^{\wdg^2} \right) = a_0b_0 \mbs{\rho}^\wdg + a_1b_0 \mbs{\phi}^\wdg \mbs{\rho}^\wdg \\ + \left(a_0b_1 - a_1b_0 + \phi^2 (a_1b_2 - a_2b_1)\right)) \mbs{\rho}^\wdg \mbs{\phi}^\wdg + a_2b_0 \mbs{\phi}^{\wdg^2}\mbs{\rho} + \left(a_0 b_2 - a_1b_1+a_2b_0 - \phi^2 a_2b_2 \right) \mbs{\rho}^\wdg \mbs{\phi}^{\wdg^2} \\
+\left(a_1b_1 - 2 a_2b_0 + \phi^2 a_2b_2 \right) \mbs{\phi}^\wdg \mbs{\rho}^\wdg \mbs{\phi}^\wdg + \frac{1}{2}\left( a_1b_2 - a_2b_1 \right) \left(\mbs{\phi}^{\wdg^2}\mbs{\rho}^\wdg\mbs{\phi}^\wdg + \mbs{\phi}^\wdg\mbs{\rho}^\wdg \mbs{\phi}^{\wdg^2}\right), 
\end{multline}
where $\mbs{\rho}$ is any compatible vector. 
\end{lemma}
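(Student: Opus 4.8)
The plan is to reduce the identity to the linearity of the hat operator together with a few elementary $SO(3)$ facts, and then gather terms. Since $(\cdot)^\wdg$ is linear, the left-hand factor splits as $\bigl((a_0\mbf{1}+a_1\mbs{\phi}^\wdg+a_2\mbs{\phi}^{\wdg^2})\mbs{\rho}\bigr)^\wdg = a_0\mbs{\rho}^\wdg + a_1(\mbs{\phi}^\wdg\mbs{\rho})^\wdg + a_2(\mbs{\phi}^{\wdg^2}\mbs{\rho})^\wdg$, so the first job is to rewrite $(\mbs{\phi}^\wdg\mbs{\rho})^\wdg$ and $(\mbs{\phi}^{\wdg^2}\mbs{\rho})^\wdg$ purely in terms of $\mbs{\phi}^\wdg$ and $\mbs{\rho}^\wdg$. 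For the first I would invoke the standard cross-product rule $(\mbf{a}^\wdg\mbf{b})^\wdg = \mbf{a}^\wdg\mbf{b}^\wdg - \mbf{b}^\wdg\mbf{a}^\wdg$, which gives $(\mbs{\phi}^\wdg\mbs{\rho})^\wdg = \mbs{\phi}^\wdg\mbs{\rho}^\wdg - \mbs{\rho}^\wdg\mbs{\phi}^\wdg$. Applying the same rule twice to $\mbs{\phi}^{\wdg^2}\mbs{\rho} = \mbs{\phi}^\wdg(\mbs{\phi}^\wdg\mbs{\rho})$ then yields $(\mbs{\phi}^{\wdg^2}\mbs{\rho})^\wdg = \mbs{\phi}^{\wdg^2}\mbs{\rho}^\wdg + \mbs{\rho}^\wdg\mbs{\phi}^{\wdg^2} - 2\,\mbs{\phi}^\wdg\mbs{\rho}^\wdg\mbs{\phi}^\wdg$.

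With those in hand, I would substitute them into the left factor, multiply on the right by $b_0\mbf{1}+b_1\mbs{\phi}^\wdg+b_2\mbs{\phi}^{\wdg^2}$, and distribute. This produces a sum of monomials in $\mbs{\phi}^\wdg$ and $\mbs{\rho}^\wdg$, several of which carry powers $\mbs{\phi}^{\wdg^3}$ or $\mbs{\phi}^{\wdg^4}$; these I reduce with the Cayley--Hamilton relation \eqref{eq:ch1}, i.e. $\mbs{\phi}^{\wdg^3} = -\phi^2\mbs{\phi}^\wdg$ and hence $\mbs{\phi}^{\wdg^4} = -\phi^2\mbs{\phi}^{\wdg^2}$. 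To make the answer collapse onto the seven-monomial basis in the statement I also need two auxiliary identities, both consequences of $\mbs{\phi}^\wdg\mbs{\phi} = \mbf{0}$: expanding $\mbs{\rho}^\wdg\mbs{\phi}^\wdg = \mbs{\phi}\mbs{\rho}^T - (\mbs{\rho}^T\mbs{\phi})\mbf{1}$ and $\mbs{\phi}^\wdg\mbs{\rho}^\wdg = \mbs{\rho}\mbs{\phi}^T - (\mbs{\phi}^T\mbs{\rho})\mbf{1}$ shows that $\mbs{\phi}^{\wdg^2}\mbs{\rho}^\wdg\mbs{\phi}^\wdg = \mbs{\phi}^\wdg\mbs{\rho}^\wdg\mbs{\phi}^{\wdg^2}$ (both equal $-(\mbs{\rho}^T\mbs{\phi})\mbs{\phi}^{\wdg^2}$), and right-multiplying by $\mbs{\phi}^\wdg$ and applying Cayley--Hamilton gives $\mbs{\phi}^{\wdg^2}\mbs{\rho}^\wdg\mbs{\phi}^{\wdg^2} = -\phi^2\,\mbs{\phi}^\wdg\mbs{\rho}^\wdg\mbs{\phi}^\wdg$. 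Using the first of these, the $\mbs{\phi}^{\wdg^2}\mbs{\rho}^\wdg\mbs{\phi}^\wdg$ and $\mbs{\phi}^\wdg\mbs{\rho}^\wdg\mbs{\phi}^{\wdg^2}$ contributions can legitimately be written as the symmetric combination $\tfrac12(\mbs{\phi}^{\wdg^2}\mbs{\rho}^\wdg\mbs{\phi}^\wdg + \mbs{\phi}^\wdg\mbs{\rho}^\wdg\mbs{\phi}^{\wdg^2})$.

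The rest is bookkeeping: collect coefficients monomial by monomial. For example, the $\mbs{\rho}^\wdg\mbs{\phi}^\wdg$ coefficient receives $a_0b_1$, $-a_1b_0$, and, after reducing $\mbs{\rho}^\wdg\mbs{\phi}^{\wdg^3}$, the extra $\phi^2(a_1b_2 - a_2b_1)$; the $\mbs{\phi}^\wdg\mbs{\rho}^\wdg\mbs{\phi}^\wdg$ coefficient picks up $a_1b_1 - 2a_2b_0$ together with $+2\phi^2a_2b_2$ from reducing $\mbs{\phi}^\wdg\mbs{\rho}^\wdg\mbs{\phi}^{\wdg^3}$ and then $-\phi^2a_2b_2$ from $\mbs{\phi}^{\wdg^2}\mbs{\rho}^\wdg\mbs{\phi}^{\wdg^2}$, netting $\phi^2a_2b_2$; and the symmetrized pair ends up with $\tfrac12(a_1b_2 - a_2b_1)$. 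Matching all seven coefficients against the claim finishes the proof. The one genuine subtlety---the main obstacle---is spotting the two collapsing identities $\mbs{\phi}^{\wdg^2}\mbs{\rho}^\wdg\mbs{\phi}^\wdg = \mbs{\phi}^\wdg\mbs{\rho}^\wdg\mbs{\phi}^{\wdg^2}$ and $\mbs{\phi}^{\wdg^2}\mbs{\rho}^\wdg\mbs{\phi}^{\wdg^2} = -\phi^2\mbs{\phi}^\wdg\mbs{\rho}^\wdg\mbs{\phi}^\wdg$; without them the naive nine-way expansion does not close on the stated basis and the coefficients of $\mbs{\phi}^\wdg\mbs{\rho}^\wdg\mbs{\phi}^\wdg$ and of the symmetrized term come out wrong.
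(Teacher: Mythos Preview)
Your proof is correct and follows essentially the same approach as the paper's: expand using linearity of $\wdg$, rewrite $(\mbs{\phi}^{\wdg^m}\mbs{\rho})^\wdg$ for $m=1,2$ (the paper cites the general binomial-like formula $(\mbs{\phi}^{\wdg^m}\mbs{\rho})^\wdg = \sum_{k=0}^m (-1)^k\binom{m}{k}\mbs{\phi}^{\wdg^{m-k}}\mbs{\rho}^\wdg\mbs{\phi}^{\wdg^k}$, whereas you derive those two cases directly from the commutator identity), then reduce with the minimal polynomial and the same two collapsing identities $\mbs{\phi}^{\wdg^2}\mbs{\rho}^\wdg\mbs{\phi}^\wdg = \mbs{\phi}^\wdg\mbs{\rho}^\wdg\mbs{\phi}^{\wdg^2}$ and $\mbs{\phi}^{\wdg^2}\mbs{\rho}^\wdg\mbs{\phi}^{\wdg^2} = -\phi^2\mbs{\phi}^\wdg\mbs{\rho}^\wdg\mbs{\phi}^\wdg$.
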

\begin{proof}
We use that $\wdg$ is linear to multiply out then use the binomial-like expression,
\begin{equation}\label{eq:binomial}
\left( \mbs{\phi}^{\wdg^m} \mbs{\rho} \right)^\wdg = \sum_{k=0}^m (-1)^k {m \choose k} \mbs{\phi}^{\wdg^{m-k}} \mbs{\rho}^\wdg \mbs{\phi}^{\wdg^k},  
\end{equation}
to expand some of the terms.  Finally we use the minimal polynomial for $SO(3)$  and the identities $\mbs{\phi}^{\wdg^2} \mbs{\nu}^\wdg \mbs{\phi}^{\wdg} = \mbs{\phi}^\wdg \mbs{\nu}^\wdg \mbs{\phi}^{\wdg^2}$ and $\mbs{\phi}^{\wdg^2} \mbs{\nu}^\wdg \mbs{\phi}^{\wdg^2} = -\phi^2 \mbs{\phi}^\wdg \mbs{\nu}^\wdg \mbs{\phi}^{\wdg}$ to simplify some terms and group the expressions.
\end{proof}

\subsection{Some $\ell = 0$ Relationships}\label{sec:ellzero}

It will be important to establish a few key connections between our building blocks that are exploited in Section~\ref{sec:bb}.  One important (known) relationship is the usual adjoint map, which in terms of our building blocks is
\begin{equation}\label{eq:bbadjointmap}
\left( \BG_0(\mbf{x}^\Wdg) \mbf{y} \right)^\wdg = \BG_0(\mbf{x}^\wdg) \, \mbf{y}^\wdg \, \BG_0(\mbf{x}^\wdg)^{-1}, \quad \left( \BG_0(\mbf{x}^\Wdg) \mbf{y} \right)^\Wdg = \BG_0(\mbf{x}^\Wdg) \, \mbf{y}^\Wdg \, \BG_0(\mbf{x}^\Wdg)^{-1}.
\end{equation}
The following lemmata capture additional relationships.

\begin{lemma}\label{thm:bbinit1}
Given the building-block functions in~\eqref{eq:bb}, we have the following relationship
\begin{equation}
\BG_0(\mbf{x}^\wdg,\mbf{y}^\wdg,\mbf{x}^\wdg) = \Bg_0 \left(\mbf{x}^\Wdg, \mbf{y} \right)^\wdg \BG_0(\mbf{x}^\wdg).
\end{equation}
\end{lemma}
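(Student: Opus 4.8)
The plan is to work from the right-hand side: convert $\Bg_0(\mbf{x}^\Wdg,\mbf{y})$ into its integral form, use the group-level adjoint identity to move the conjugation from the $\Wdg$-representation to the $\wdg$-representation, and then re-expand everything as a double power series evaluated with the Beta-integral identity.

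First I would invoke Remark~\ref{thm:remark1} to write $\Bg_0(\mbf{x}^\Wdg,\mbf{y}) = \BG_1(\mbf{x}^\Wdg)\,\mbf{y}$, and then Lemma~\ref{thm:bbrecursive} to get $\BG_1(\mbf{x}^\Wdg) = \int_0^1 \BG_0(\alpha\mbf{x}^\Wdg)\,d\alpha$. Since $(\cdot)^\wdg$ is linear it passes through the integral, so $\Bg_0(\mbf{x}^\Wdg,\mbf{y})^\wdg = \int_0^1 \bigl(\BG_0(\alpha\mbf{x}^\Wdg)\,\mbf{y}\bigr)^\wdg\,d\alpha$. Applying the adjoint identity~\eqref{eq:bbadjointmap} with $\mbf{x}$ replaced by $\alpha\mbf{x}$ rewrites the integrand as $\BG_0(\alpha\mbf{x}^\wdg)\,\mbf{y}^\wdg\,\BG_0(\alpha\mbf{x}^\wdg)^{-1}$.

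Next I would multiply on the right by $\BG_0(\mbf{x}^\wdg)$. Because $\BG_0(\alpha\mbf{x}^\wdg)$ and $\BG_0(\mbf{x}^\wdg)$ are commuting functions of the single matrix $\mbf{x}^\wdg$, we have $\BG_0(\alpha\mbf{x}^\wdg)^{-1}\BG_0(\mbf{x}^\wdg) = \BG_0((1-\alpha)\mbf{x}^\wdg)$, so the right-hand side reduces to $\int_0^1 \BG_0(\alpha\mbf{x}^\wdg)\,\mbf{y}^\wdg\,\BG_0((1-\alpha)\mbf{x}^\wdg)\,d\alpha$. Finally I would substitute the defining series $\BG_0(\alpha\mbf{x}^\wdg) = \sum_m \tfrac{\alpha^m}{m!}\mbf{x}^{\wdg^m}$ and $\BG_0((1-\alpha)\mbf{x}^\wdg) = \sum_n \tfrac{(1-\alpha)^n}{n!}\mbf{x}^{\wdg^n}$, interchange summation and integration, and apply Lemma~\ref{thm:intfact1} with $\int_0^1 \alpha^m(1-\alpha)^n\,d\alpha = m!\,n!/(m+n+1)!$. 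The factors $m!\,n!$ cancel and the double sum collapses to $\sum_{m,n\ge 0}\tfrac{1}{(m+n+1)!}\mbf{x}^{\wdg^m}\mbf{y}^\wdg\mbf{x}^{\wdg^n} = \BG_0(\mbf{x}^\wdg,\mbf{y}^\wdg,\mbf{x}^\wdg)$, which is the definition of the left-hand side.

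I do not expect any genuine obstacle; the only points requiring care are that linearity lets $(\cdot)^\wdg$ commute with the integral and with the (absolutely convergent) double sum, and that identity~\eqref{eq:bbadjointmap} may legitimately be used with the rescaled argument $\alpha\mbf{x}$. A purely series-based route is also available — expand $(\mbf{x}^{\Wdg^k}\mbf{y})^\wdg$ via the binomial-type identity~\eqref{eq:binomial} and resum against the $1/(k+1)!$ and $1/n!$ coefficients — but that is combinatorially heavier, so the integral route is cleaner and more in keeping with the paper's approach.
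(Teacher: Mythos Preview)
Your proposal is correct and follows essentially the same route as the paper's own proof: rewrite $\Bg_0(\mbf{x}^\Wdg,\mbf{y})$ via Remark~\ref{thm:remark1} and Lemma~\ref{thm:bbrecursive}, apply the adjoint identity~\eqref{eq:bbadjointmap} to convert $(\BG_0(\alpha\mbf{x}^\Wdg)\mbf{y})^\wdg$ into a conjugation, absorb $\BG_0(\mbf{x}^\wdg)$ to obtain $\BG_0(\alpha\mbf{x}^\wdg)\,\mbf{y}^\wdg\,\BG_0((1-\alpha)\mbf{x}^\wdg)$, and then expand and evaluate with the Beta integral of Lemma~\ref{thm:intfact1}. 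Your explicit justification that $\BG_0(\alpha\mbf{x}^\wdg)^{-1}\BG_0(\mbf{x}^\wdg)=\BG_0((1-\alpha)\mbf{x}^\wdg)$ because both factors are exponentials of the same matrix is a nice clarification that the paper leaves implicit.
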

\begin{proof}
To see this, we use the following sequence of manipulations \citep[(8.65)]{barfoot_ser24}:
\begin{multline}
 \Bg_0 \left(\mbf{x}^\Wdg, \mbf{y} \right)^\wdg \BG_0(\mbf{x}^\wdg) = \left( \BG_1(\mbf{x}^\Wdg) \mbf{y}\right)^\wdg \BG_0(\mbf{x}^\wdg)  = \left( \int_0^1 \BG_0(\alpha\mbf{x}^\Wdg) \, d\alpha \, \mbf{y}\right)^\wdg \BG_0(\mbf{x}^\wdg)  = \int_0^1 \left( \BG_0(\alpha\mbf{x}^\Wdg) \mbf{y} \right)^\wdg \BG_0(\mbf{x}^\wdg)  \, d\alpha \\ 
= \int_0^1 \BG_0(\alpha\mbf{x}^\wdg)  \mbf{y}^\wdg \BG_0((1-\alpha)\mbf{x}^\wdg)  \, d\alpha = \!\int_0^1 \!\! \left( \sum_{m=0}^\infty \!\frac{1}{m!}\!\left(\alpha\mbf{x}^\wdg\right)^m \!\right) \!\mbf{y}^\wdg \!\! \left( \sum_{n=0}^\infty \!\frac{1}{n!}\!\left((1\!-\!\alpha)\mbf{x}^\wdg\right)^n \!\right) d\alpha\\
= \sum_{m=0}^\infty\sum_{n=0}^\infty \frac{1}{m!\,n!} \left( \int_0^1 \alpha^m(1\!-\!\alpha)^n  \,d\alpha \!\right) \mbf{x}^{\wdg^m} \mbf{y}^\wdg\mbf{x}^{\wdg^n},
\end{multline}
where we have used that $\wdg$ is linear and~\eqref{eq:bbadjointmap}.  Lemma~\ref{thm:intfact1} in Appendix~\ref{sec:identities} proves that 
\begin{equation}
\int_0^1 \alpha^m(1-\alpha)^n  \, d\alpha = \frac{m! \, n!}{(m+n+1)!},
\end{equation}
and therefore, $\BG_0(\mbf{x}^\wdg, \mbf{y}^\wdg, \mbf{x}^\wdg) = \Bg_0 \left(\mbf{x}^\Wdg, \mbf{y} \right)^\wdg \BG_0(\mbf{x}^\wdg)$, which is the desired result.
\end{proof}

Along similar lines we have the following.

\begin{lemma}\label{thm:bbinit2}
Given the building-block functions in~\eqref{eq:bb}, we have the following relationship
\begin{equation}
\BG_0(\mbf{x}^\wdg,\mbf{y}^\wdg,\mbf{x}^\wdg, \tau) = \left(\Bg_0 \left(\mbf{x}^\Wdg, \mbf{y} \right) - \Bg_1 \left(\mbf{x}^\Wdg, \mbf{y} \right) \right)^\wdg \BG_0(\mbf{x}^\wdg) \, \tau.
\end{equation}
\end{lemma}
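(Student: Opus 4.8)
The plan is to reproduce the argument of Lemma~\ref{thm:bbinit1} with an additional weight $\alpha$ inside the integral, and then to collapse the resulting $SO(3)$-side expression via Remark~\ref{thm:remark1}.

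First I would start from the series definition in~\eqref{eq:bb}, namely $\BG_0(\mbf{x}^\wdg,\mbf{y}^\wdg,\mbf{x}^\wdg,\tau) = \sum_{m=0}^\infty\sum_{n=0}^\infty \frac{m+1}{(m+n+2)!}\,\mbf{x}^{\wdg^m}\mbf{y}^\wdg\mbf{x}^{\wdg^n}\,\tau$, and rewrite the scalar coefficient using Lemma~\ref{thm:intfact1}: since $\int_0^1 \alpha^{m+1}(1-\alpha)^n\,d\alpha = \frac{(m+1)!\,n!}{(m+n+2)!}$, we have $\frac{m+1}{(m+n+2)!} = \frac{1}{m!\,n!}\int_0^1 \alpha\cdot\alpha^m(1-\alpha)^n\,d\alpha$. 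Substituting and exchanging the sum with the integral, the two inner sums reassemble into matrix exponentials, giving
\begin{equation*}
\BG_0(\mbf{x}^\wdg,\mbf{y}^\wdg,\mbf{x}^\wdg,\tau) = \int_0^1 \alpha\, \BG_0(\alpha\mbf{x}^\wdg)\,\mbf{y}^\wdg\,\BG_0((1-\alpha)\mbf{x}^\wdg)\,d\alpha\,\tau.
\end{equation*}

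Next I would apply the same adjoint manipulation as in Lemma~\ref{thm:bbinit1}. Because $\mbf{x}^\wdg$ commutes with itself, $\BG_0(\alpha\mbf{x}^\wdg)^{-1}\BG_0(\mbf{x}^\wdg) = \BG_0((1-\alpha)\mbf{x}^\wdg)$, so the integrand equals $\BG_0(\alpha\mbf{x}^\wdg)\mbf{y}^\wdg\BG_0(\alpha\mbf{x}^\wdg)^{-1}\BG_0(\mbf{x}^\wdg) = \left(\BG_0(\alpha\mbf{x}^\Wdg)\mbf{y}\right)^\wdg\BG_0(\mbf{x}^\wdg)$ by the adjoint identity~\eqref{eq:bbadjointmap}. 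Pulling $\BG_0(\mbf{x}^\wdg)\,\tau$ out of the integral and using linearity of $(\cdot)^\wdg$ yields
\begin{equation*}
\BG_0(\mbf{x}^\wdg,\mbf{y}^\wdg,\mbf{x}^\wdg,\tau) = \left(\int_0^1 \alpha\,\BG_0(\alpha\mbf{x}^\Wdg)\,d\alpha\;\mbf{y}\right)^\wdg \BG_0(\mbf{x}^\wdg)\,\tau.
\end{equation*}

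Finally I would evaluate the scalar-weighted integral termwise: $\int_0^1 \alpha\,\BG_0(\alpha\mbf{x}^\Wdg)\,d\alpha = \sum_{m=0}^\infty \frac{1}{m!\,(m+2)}\mbf{x}^{\Wdg^m}$, and since $\frac{1}{m!\,(m+2)} = \frac{m+1}{(m+2)!} = \frac{1}{(m+1)!} - \frac{1}{(m+2)!}$, this is $\BG_1(\mbf{x}^\Wdg) - \BG_2(\mbf{x}^\Wdg)$. Applying this to $\mbf{y}$ and using Remark~\ref{thm:remark1}, which gives $\BG_1(\mbf{x}^\Wdg)\mbf{y} = \Bg_0(\mbf{x}^\Wdg,\mbf{y})$ and $\BG_2(\mbf{x}^\Wdg)\mbf{y} = \Bg_1(\mbf{x}^\Wdg,\mbf{y})$, produces exactly $\BG_0(\mbf{x}^\wdg,\mbf{y}^\wdg,\mbf{x}^\wdg,\tau) = \left(\Bg_0(\mbf{x}^\Wdg,\mbf{y}) - \Bg_1(\mbf{x}^\Wdg,\mbf{y})\right)^\wdg\BG_0(\mbf{x}^\wdg)\,\tau$. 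There is no genuine difficulty here; the only care needed is the bookkeeping of the Beta integral with the extra $\alpha$ weight and applying~\eqref{eq:bbadjointmap} with the correct split $\alpha+(1-\alpha)$ of the exponent.
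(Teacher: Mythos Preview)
Your argument is correct, and it takes a genuinely different route from the paper. The paper starts from the right-hand side, splits it via linearity into a $\Bg_0$-term (handled by Lemma~\ref{thm:bbinit1}) and a $\Bg_1$-term, expands the latter through a \emph{double} integral $\int_0^1\alpha\int_0^1(\cdot)\,d\beta\,d\alpha$, and then invokes the dedicated Lemma~\ref{thm:intfact2} to evaluate it before recombining. You instead go LHS-to-RHS: a single application of Lemma~\ref{thm:intfact1} with the shifted index $(m+1,n)$ absorbs the factor $m+1$ into one Beta integral, the adjoint identity collapses everything to $\bigl(\int_0^1\alpha\,\BG_0(\alpha\mbf{x}^\Wdg)\,d\alpha\,\mbf{y}\bigr)^\wdg\BG_0(\mbf{x}^\wdg)\tau$, and the elementary partial fraction $\frac{m+1}{(m+2)!}=\frac{1}{(m+1)!}-\frac{1}{(m+2)!}$ delivers $\BG_1-\BG_2$ directly. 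Your path is shorter and never needs Lemma~\ref{thm:intfact2} at all; the paper's approach, on the other hand, makes the parallel with Lemma~\ref{thm:bbinit1} more explicit by reusing it as a sub-step.
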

\begin{proof}
We can break this into two terms where the first can be simplified by Lemma~\ref{thm:bbinit1} from Appendix~\ref{sec:ellzero}:
\begin{equation}
\left(\Bg_0 \left(\mbf{x}^\Wdg, \mbf{y} \right) - \Bg_1 \left(\mbf{x}^\Wdg, \mbf{y} \right) \right)^\wdg \BG_0(\mbf{x}^\wdg) \, \tau  =\underbrace{\Bg_0 \left(\mbf{x}^\Wdg, \mbf{y} \right)^\wdg \BG_0(\mbf{x}^\wdg)}_{\BG_0(\mbf{x}^\wdg,\mbf{y}^\wdg,\mbf{x}^\wdg)} \, \tau - \Bg_1 \left(\mbf{x}^\Wdg, \mbf{y} \right)^\wdg \BG_0(\mbf{x}^\wdg) \, \tau.
\end{equation}
For the second term, we can manipulate as follows:
\begin{multline}
\Bg_1 \left(\mbf{x}^\Wdg, \mbf{y} \right)^\wdg \BG_0(\mbf{x}^\wdg) = \left( \BG_2\left(\mbf{x}^\Wdg\right) \mbf{y} \right)^\wdg \BG_0(\mbf{x}^\wdg) = \int_0^1 \alpha \left( \BG_1(\alpha\mbf{x}^\Wdg) \mbf{y} \right)^\wdg \BG_0(\mbf{x}^\wdg) \, d\alpha \\ = \int_0^1 \alpha \int_0^1 \left( \BG_0(\alpha\beta\mbf{x}^\Wdg) \mbf{y} \right)^\wdg \BG_0(\mbf{x}^\wdg) \, d\beta\,d\alpha  = \int_0^1 \alpha \int_0^1 \BG_0(\alpha\beta\mbf{x}^\wdg) \, \mbf{y}^\wdg \, \BG_0((1-\alpha\beta)\mbf{x}^\wdg) \, d\beta\,d\alpha \\ = \sum_{m=0}^\infty \sum_{n=0}^\infty \frac{1}{m! \, n!} \left( \int_0^1 \alpha \int_0^1 (\alpha\beta)^m (1- \alpha\beta)^n \, d\beta \, d\alpha \right) \mbf{x}^{\wdg^m} \mbf{y}^\wdg \, \mbf{x}^{\wdg^n}.
\end{multline}
We can then use Lemma~\ref{thm:intfact2} in Appendix~\ref{sec:identities} to claim
\begin{equation}
\int_0^1 \alpha \int_0^1 (\alpha\beta)^m (1-\alpha\beta)^n \, d\beta \, d\alpha = \frac{m!\,n!}{(m+n+1)!} - \frac{(m+1)!\,n!}{(m+n+2)!}.
\end{equation}
Putting the pieces together we have that
\begin{multline}
\underbrace{\Bg_0 \left(\mbf{x}^\Wdg, \mbf{y} \right)^\wdg \BG_0(\mbf{x}^\wdg)}_{\BG_0(\mbf{x}^\wdg,\mbf{y}^\wdg,\mbf{x}^\wdg)} \, \tau  - \Bg_1 \left(\mbf{x}^\Wdg, \mbf{y} \right)^\wdg \BG_0(\mbf{x}^\wdg) \, \tau \\
= \sum_{m=0}^\infty \sum_{n=0}^\infty \left( \left( \frac{1}{(m+n+1)!} \right) - \left( \frac{1}{(m+n+1)!} - \frac{m+1}{(m+n+2)!}\right) \right) \mbf{x}^{\wdg^m} \mbf{y}^\wdg \mbf{x}^{\wdg^n} \, \tau  \\ = \sum_{m=0}^\infty \sum_{n=0}^\infty \frac{m+1}{(m+n+2)!} \mbf{x}^{\wdg^m} \mbf{y}^\wdg \mbf{x}^{\wdg^n} \, \tau = \BG_0(\mbf{x}^\wdg, \mbf{y}^\wdg, \mbf{x}^\wdg, \, \tau),
\end{multline}
the desired result.
\end{proof}

%\bibliographystyle{asrl}
%\bibliography{pubs,refs,book}

\end{document}